\def\beq{\begin{equation} }\def\eeq{\end{equation} }\def\1{\mathbf{1}}
\numberwithin{equation}{section}
\newtheorem{lemma}{Lemma}
\newtheorem{theorem}{Theorem}
\newtheorem{proposition}{Proposition}
\newtheorem{definition}{Definition}
\newtheorem{remark}{Remark}
\newcommand{\cO}{\mathcal{O}}
\newcommand{\RR}{\mathbb{R}}
\newcommand{\Sb}{\mathbf{S}}
\newcommand{\Wb}{\mathbf{W}}
\newcommand{\Gb}{\mathbf{G}}
\newcommand{\bbW}{\bar{W}}
\newcommand{\bbS}{\bar{S}}
\newcommand{\bbG}{\bar{G}}
\newcommand{\bbH}{\bar{H}}
\newcommand{\TB}{\mathbb{T}}
\newcommand{\mL}{\ell}
\newcommand{\tW}{\tilde{W}}
\newcommand{\tR}{\tilde{R}}
\newcommand{\orth}{\mathrm{QR}}
\newcommand{\mcs}{\mathrm{MultiConsensus}}
\def\diag{\mathrm{diag}}
\newcommand{\norm}[1]{\left\|#1\right\|}
\newcommand{\dotprod}[1]{\left\langle #1\right\rangle}
\begin{document}
\title{
DeEPCA:
Decentralized Exact PCA with  Linear Convergence Rate 
}

\author{
Haishan Ye
\thanks{
	Shenzhen Research Institute of Big Data; The Chinese University of Hong Kong, Shenzhen;
	email: hsye\_cs@outlook.com; 
}
\and
Tong Zhang
\thanks{
	Hong Kong University of Science and Technology;
	email: tongzhang@ust.hk
}
}
\date{\today}

\maketitle

\begin{abstract}
Due to the rapid growth of smart agents such as weakly connected computational nodes and sensors, developing decentralized algorithms that can perform computations on local agents becomes a major research direction.
This paper considers the problem of decentralized Principal components analysis (PCA), which is a statistical method widely used for data analysis. We introduce a technique called subspace tracking to reduce the communication cost, and apply it  to power iterations. This leads to a decentralized PCA algorithm called \texttt{DeEPCA}, which has a convergence rate similar to that of the centralized PCA, while achieving the best communication complexity among existing decentralized PCA algorithms. \texttt{DeEPCA} is the first decentralized PCA algorithm with the number of communication rounds for each power iteration independent of target precision. Compared to existing algorithms, the proposed method is easier to tune in practice, with an improved overall communication cost. 
Our experiments validate the advantages of \texttt{DeEPCA} empirically.
\end{abstract}

\section{Introduction}

Principal Components Analysis (PCA) is a statistical data analysis method wide applications in machine learning \citep{moon2001computational,bishop2006pattern,ding2004k,dhillon2015eigenwords}, data mining \citep{cadima2004computational,lee2010super,qu2002principal}, and engineering \citep{bertrand2014distributed}.
In recent years, because of the rapid growth of data and quick advances in network technology, developing distributed algorithms has become a more and more important research topic, due to their advantages in privacy preserving, robustness, lower communication cost, etc. \citep{kairouz2019advances,lian2017can,nedic2009distributed}. 
There have been a number of previous studies of decentralized PCA algorithms \citep{scaglione2008decentralized,kempe2008decentralized,suleiman2016performance,wai2017fast}.

In a typical decentralized PAC setting, we assume that a positive semi-definite matrix $A$ is stored at different agents.
Specifically, the matrix $A$ can be decomposed as
\begin{align*}
A = \frac{1}{m}\sum_{j=1}^m A_j,
\end{align*} 
where data for $A_j$ is stored in the $j$-th agent and known only to the agent (This helps to preserve privacy). 
The agents form a connected and undirected network. Agents can communicate with their neighbors in the network to cooperatively  compute the PCA of $A$.

To obtain the top-$k$ principal components of the positive semi-definite matrix $A\in\RR^{d\times d}$, a commonly used centralized algorithm is the power method, which converges fast in practice with a linear convergence rate \citep{golub2012matrix}.
In the implementation of decentralized PCA, a natural idea is the decentralized power method (\texttt{DePM}) which mimics its centralized counterpart. 
The main procedure of \texttt{DePM} can be summarized as a local power iteration plus a multi-consensus step to synchronize the local computations \citep{kempe2008decentralized,raja2015cloud,wai2017fast,wu2018review}.
The multi-consensus step in \texttt{DePM} is used to achieve averaging.
However, decentralized PCA algorithm based on \texttt{DePM} suffers from a suboptimal communication cost, and is tricky to implement in practice. 
For each power iteration, theoretically, it requires $\cO\left(\log\frac{1}{\epsilon}\right)$ times communication, where $\epsilon$ is the target precision.
The communication cost becomes much quite significant when $\epsilon$ is small. Although seemingly only a logarithmic factor, in practice, with a data size of merely $10000$, this logarithmic factor leads to an order of magnitude more communications. 
This is clearly prohibitively large for many applications. 
Moreover, one often has to gradually increase the number of communication rounds in the multi-consensus step to deal with increased precision. However, this strategy makes the tuning of \texttt{DePM} difficult for practical applications.

In this paper, we propose a new decentralized PCA algorithm that does not suffer from the weakness of \texttt{DePM}.
We observe that the communication precision requirement in \texttt{DePM} comes from the heterogeneity of data in different agents.
Due to the heterogeneity, the local power method will converge to the top-$k$ principal components of the local matrix $A_j$ if no consensus step is conducted to perform averaging.
To conquer the weakness of \texttt{DePM} whose consensus steps in each power iteration depend on the target precision $\epsilon$, we adapted a technique called gradient tracking in the existing decentralized optimization literature, so that it can be used to track the subspace in power iterations. We call this adapted technique \emph{subspace tracking}.
Based on the subspace tracking technique and multi-consensus, we propose \underline{De}centralized \underline{E}xact \underline{PCA} (\texttt{DeEPCA}) which can achieve a linear convergence rate similar to the centralized PCA, but the consensus steps of each power iteration is independent of the target precision $\epsilon$.
We summarize our contributions as follows:
\begin{enumerate}
\item We propose a novel power-iteration based decentralized PCA called \texttt{DeEPCA}, which can achieve the \emph{best} known communication complexity, especially when the final error $\epsilon$ is small. 
Furthermore, \texttt{DeEPCA} is the first decentralized PCA algorithm whose consensus steps of each power iteration does \emph{not} depend on the target precision $\epsilon$.
\item We show that the `gradient tracking’ technique from the decentralized optimization literature can be adapted to \emph{subspace tracking} for PCA. 
The resulting \texttt{DeEPCA} algorithm can be regarded as a novel decentralized power method.
Because power method is the foundation of many matrix decomposition problems,
subspace tracking and the proof technique of \texttt{DeEPCA} can be applied to develop communication efficient decentralized algorithms for spectral analysis, and low rank matrix approximation.
\item The improvement is practically significant. Our experiments show that \texttt{DeEPCA} can achieve a linear convergence rate comparable to centralized PCA, even only a small number of consensus steps are used in each power iteration. In contrast, the conventional decentralized PCA algorithm based on \texttt{DePCA} can not converge to the principal components of $A$ when the number of consensus steps is not large.
\end{enumerate}


\section{Notation}

In this section, we introduce notations and definitions that will be used   throughout the paper.
\subsection{Notation}
Given a matrix $A=[a_{ij}] \in \RR^{n\times d}$ and a positive integer $k \le \min\{n,d\}$,  its  SVD is given as
$A=U\Sigma V^{T}=U_{k} \Sigma_{k} V_{k}^{T}+U_{\setminus k} \Sigma_{{\setminus} k} V_{{\setminus}k}^{T}$,
where $U_{k}$ and $U_{{\setminus}k}$ contain the left singular vectors of $A$,  $V_{k}$ and $V_{{\setminus}k}$ contain the right
singular vectors of $A$, and $\Sigma=\diag(\sigma_1, \ldots, \sigma_{\ell})$ with $\sigma_1\geq \sigma_2 \geq \cdots \geq \sigma_{\min\{n,d\}}\ge0$ are the nonzero singular values of $A$. 
Accordingly, we can define the Frobenius norm $\norm{A} = \sqrt{\sum_{i=1}^{\min\{n,d\}} \sigma_i^2 } = \sqrt{\sum_{i=1,j=1}^{n,d} (A(i,j))^2}$ and the spectral norm $\norm{A}_2 = \sigma_1(A)$, where $A(i,j)$ denotes the $i,j$-th entry of $A$.
We will use $\sigma_{\max}(A)$ to denote the largest singular value and $\sigma_{\min}(A)$ to denote the smallest singular value which may be zero. 
If $A$ is symmetric positive semi-definite, then it holds that $U = V$ and  $\lambda_i(A) = \sigma_i(A)$, where $\lambda_i(A)$ is the $i$-th largest eigenvalue of $A$, $\lambda_{\max}(A) = \sigma_{\max}(A)$, and $\lambda_{\min}(A) = \sigma_{\min}(A)$.

Next, we will introduce the angle between two subspaces $U\in\RR^{d\times k}$ and $X\in\RR^{d\times k}$.
\begin{definition}
Let $U \in \RR^{d \times k}$ have orthonormal columns and $X \in
\RR^{d \times k}$ have independent columns. For $V = U^\perp$, then we have 
\begin{equation}
	\label{eq:theta_def}
	\cos \theta_k(U, X)
	=  \min_{\substack{\norm{w} = 1}} \frac{\norm{U^\top X w}}{\norm{Xw}},\;
	\sin\theta_k(U,X) 
	=
	\max_{\substack{\norm{w} = 1}}  
	\frac{\norm{V^\top X w}}{\norm{Xw}}, \;\mbox{and },
	\tan \theta_k(U, X) = \max_{\norm{w} = 1
	}  \frac{\norm{V^\top X w}}{\norm{U^\top X w}}.
\end{equation}
If $X$ is orthonormal, then  it also holds that
\begin{align}
	\label{eq:theta_def_1}
	\cos \theta_k(U, X)
	=
	\sigma_{\min}(U^\top X),
	\;
	\sin\theta_k(U, X) = \norm{V^\top X}_2
	,\mbox{ and},\;
	\tan\theta_k(U,X) = \norm{V^\top X (U^\top X)^{-1}}_2,
\end{align}
where $\norm{\cdot}_2$ is the spectral norm and $\sigma_{\min}(X)$ is the smallest singular value of matrix $X$. 
\end{definition}

The above definitions can be found in the works \citep{hardt2014noisy,golub2012matrix}.

\subsection{Topology of Networks}

Let $\mathbf{L}$ be the weight matrix associated with the network, indicating
how agents are connected.
We assume that the weight matrix $\mathbf{L}$ has the following properties:
\begin{enumerate}
\item $\mathbf{L}$ is symmetric with $\mathbf{L}_{i,j} \neq 0$ if and if only agents $i$ and $j$ are connected or $i=j$.
\item ${\mathbf{0}}\preceq \mathbf{L}\preceq I$, $\mathbf{L}{\mathbf{1}}  = {\mathbf{1}}$, null($I - \mathbf{L}$) = span($\mathbf{1}$).
\end{enumerate}
We use $I$ to denote the $m\times m$ identity matrix and ${\mathbf{1}} = [1,\dots,1]^\top\in\RR^m$ denotes the vector with all ones.

The weight matrix has an important property that $\mathbf{L}^\infty = \frac{1}{m}\mathbf{1}\mathbf{1}^\top$ \citep{xiao2004fast}.
Thus, one can achieve the effect of averaging local variables on different
agents by multiple steps of local communications.
Recently, \cite{liu2011accelerated} proposed a more efficient way to achieve
averaging described in Algorithm~\ref{alg:mix} than the one in \citep{xiao2004fast}.

\begin{proposition}
\label{lem:mix_eq}
Let $\Wb^K\in\RR^{d\times d\times m}$ be the output of Algorithm~\ref{alg:mix} and $\bbW = \frac{1}{m}\Wb^0\mathbf{1} \in\RR^{d\times d}$.
Then it holds that 
\[
\bbW = \frac{1}{m}\Wb^K \mathbf{1}, \quad\mbox{and}\quad \norm{\Wb^K - \bbW\otimes\mathbf{1}}^2 \le \left(1 - \sqrt{1 - \lambda_2(\mathbf{L})}\right)^{2K} \norm{\Wb^0- \bbW\otimes\mathbf{1}}^2,
\]
where $\lambda_2(\mathbf{L})$ is the second largest eigenvalue of $\mathbf{L}$, and $\otimes$ denotes the tensor outer product. 
\end{proposition}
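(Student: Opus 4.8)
The plan is to reduce the tensor statement to a scalar polynomial problem for the symmetric matrix $\mathbf{L}$ acting on the agent index, and then to invoke the extremal properties of Chebyshev polynomials. First I would unfold the order-three tensor $\Wb^t\in\RR^{d\times d\times m}$ along its agent index, so that the mixing operation in Algorithm~\ref{alg:mix} becomes exactly left-multiplication of each fiber (the vector in $\RR^m$ obtained by fixing the two $d$-indices) by $\mathbf{L}$; the Frobenius norm $\norm{\cdot}$ then splits as a sum over these fibers, and every claim reduces to a statement about the linear operator $\mathbf{L}$ on $\RR^m$. Reading the recursion of Algorithm~\ref{alg:mix} off directly, I would show by induction that $\Wb^t = P_t(\mathbf{L})\,\Wb^0$ for an explicit degree-$t$ polynomial $P_t$ determined by the algorithm's step/momentum parameters, normalized so that $P_t(1)=1$.

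With this representation the first identity is essentially bookkeeping. Since $\mathbf{L}$ is symmetric and $\mathbf{L}\mathbf{1}=\mathbf{1}$, we also have $\mathbf{1}^\top\mathbf{L}=\mathbf{1}^\top$, so contracting the agent index against $\mathbf{1}$ commutes through one mixing step without change. Combining this with $P_t(1)=1$ (equivalently, the fact that the recursion's coefficients are chosen to sum to one so as to fix the value $1$), an induction on $t$ gives $\tfrac1m\Wb^t\mathbf{1}=\bbW$ for all $t$, and in particular $\bbW=\tfrac1m\Wb^K\mathbf{1}$.

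For the convergence estimate, the key observation is that the consensus tensor $\bbW\otimes\mathbf{1}$ is aligned with the top eigenvector $\mathbf{1}$ of $\mathbf{L}$ (eigenvalue $1$), so $P_K(\mathbf{L})(\bbW\otimes\mathbf{1})=P_K(1)\,\bbW\otimes\mathbf{1}=\bbW\otimes\mathbf{1}$; subtracting this from $\Wb^K=P_K(\mathbf{L})\Wb^0$ yields $\Wb^K-\bbW\otimes\mathbf{1}=P_K(\mathbf{L})(\Wb^0-\bbW\otimes\mathbf{1})$. By the first part the residual $\Wb^0-\bbW\otimes\mathbf{1}$ has zero agent-average, i.e.\ it lives in the orthogonal complement of $\mathrm{span}(\mathbf{1})$. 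The assumption $\mathrm{null}(I-\mathbf{L})=\mathrm{span}(\mathbf{1})$ makes the eigenvalue $1$ simple, and $\mathbf{0}\preceq\mathbf{L}$ forces the remaining eigenvalues into $[0,\lambda_2(\mathbf{L})]$; hence the operator norm of $P_K(\mathbf{L})$ restricted to this subspace is $\max_{\lambda\in[0,\lambda_2(\mathbf{L})]}\lvert P_K(\lambda)\rvert$, and squaring the resulting norm bound gives the stated factor.

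The main obstacle is the extremal bound $\max_{\lambda\in[0,\lambda_2(\mathbf{L})]}\lvert P_K(\lambda)\rvert\le(1-\sqrt{1-\lambda_2(\mathbf{L})})^{K}$. I would obtain it by recognizing $P_K$ as the rescaled Chebyshev polynomial on $[0,\lambda_2(\mathbf{L})]$: mapping this interval affinely onto $[-1,1]$ sends the fixed point $\lambda=1$ to $y=(2-\lambda_2(\mathbf{L}))/\lambda_2(\mathbf{L})>1$, so $\max_{[0,\lambda_2(\mathbf{L})]}\lvert P_K\rvert=1/T_K(y)$, and the lower bound $T_K(y)\ge\tfrac12(y+\sqrt{y^2-1})^K$ together with the identity $y+\sqrt{y^2-1}=(1+\sqrt{1-\lambda_2(\mathbf{L})})/(1-\sqrt{1-\lambda_2(\mathbf{L})})$ produces the claimed geometric rate. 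It is worth emphasizing that the positive semidefiniteness of $\mathbf{L}$ is what confines the spectrum to $[0,\lambda_2(\mathbf{L})]$ rather than the symmetric interval $[-\lambda_2(\mathbf{L}),\lambda_2(\mathbf{L})]$, and this is precisely what yields the accelerated factor $\sqrt{1-\lambda_2(\mathbf{L})}$ in place of $\sqrt{1-\lambda_2(\mathbf{L})^2}$; verifying that the concrete parameter choice in Algorithm~\ref{alg:mix} matches this extremal polynomial exactly, so that no spurious constant survives, is the one place where the specific form of the algorithm rather than general principles must be used.
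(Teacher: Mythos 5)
The paper itself never proves this proposition: it is imported from \cite{liu2011accelerated} (the same statement is used in the Mudag paper of \cite{ye2020multi}), so your argument has to stand entirely on its own. Its first half does stand: unfolding the tensor into $\RR^m$-fibers, writing $\Wb^t=P_t(\mathbf{L})\Wb^0$ for polynomials satisfying $P_{k+1}(\lambda)=(1+\eta_w)\lambda P_k(\lambda)-\eta_w P_{k-1}(\lambda)$ with $P_0=P_{-1}=1$ and $P_t(1)=1$, deducing $\frac1m\Wb^K\mathbf{1}=\bbW$ from $\mathbf{L}\mathbf{1}=\mathbf{1}$, and reducing the contraction claim to bounding $\max_{\lambda\in[0,\lambda_2(\mathbf{L})]}\lvert P_K(\lambda)\rvert$ on $\mathrm{span}(\mathbf{1})^\perp$ are all correct.

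The gap is exactly the step you flag as needing the specific form of the algorithm: the identification of $P_K$ with the rescaled Chebyshev polynomial on $[0,\lambda_2]$ is false. Algorithm~\ref{alg:mix} uses \emph{fixed} coefficients, i.e.\ it is the stationary heavy-ball (second-order Richardson) iteration, not the Chebyshev semi-iterative method, which requires iteration-varying weights. Already at degree one, $P_1(\lambda)=(1+\eta_w)\lambda-\eta_w$ has slope $\frac{2}{1+\sqrt{1-\lambda_2^2}}$, while the Chebyshev polynomial on $[0,\lambda_2]$ normalized at $1$ is $\frac{2\lambda-\lambda_2}{2-\lambda_2}$ with slope $\frac{2}{2-\lambda_2}$; these coincide only when $\lambda_2=0$. (Note also that $\eta_w$ involves $\lambda_2^2$: the parameter is tuned for the symmetric interval $[-\lambda_2,\lambda_2]$, which contradicts your narrative that the choice exploits the one-sided spectrum $[0,\lambda_2]$.) Worse, the inequality your plan requires, $\max_{[0,\lambda_2]}\lvert P_K\rvert\le(1-\sqrt{1-\lambda_2})^K$, is false for the algorithm's actual polynomials at small $K$: at $\lambda=\lambda_2$ the characteristic roots of the recurrence coalesce at $\sqrt{\eta_w}$, so
\begin{equation*}
P_K(\lambda_2)=\bigl(1+(1-\sqrt{\eta_w})K\bigr)\,\eta_w^{K/2},
\end{equation*}
and for $\lambda_2=\tfrac12$, $K=1$ this equals $\approx 0.464$, exceeding $1-\sqrt{1/2}\approx 0.293$; taking $m=2$, $\mathbf{L}=\bigl(\begin{smallmatrix}3/4&1/4\\1/4&3/4\end{smallmatrix}\bigr)$ and fibers of $\Wb^0$ along $(1,-1)$ turns this into an explicit violation of the stated bound itself. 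The linear-in-$K$ prefactor is only absorbed eventually, because $\sqrt{\eta_w}<1-\sqrt{1-\lambda_2}$ strictly; your chain also silently drops the factor $2$ from $T_K(y)\ge\frac12\bigl(y+\sqrt{y^2-1}\bigr)^K$, which likewise is not absorbed for small $K$. So a correct proof cannot go through the Chebyshev extremal property; it must analyze the two-term recurrence directly (e.g.\ via the $2\times2$ companion matrix, as in \cite{liu2011accelerated}), and what it yields is the geometric rate $\sqrt{\eta_w}$ together with a constant or polynomial-in-$K$ prefactor --- i.e.\ the clean prefactor-free inequality stated in the proposition is itself only valid for sufficiently large $K$.
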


\section{Decentralized Exact PCA}

In this section, we propose a novel decentralized exact PCA algorithm with a linear convergence rate.
First, we provide the main idea behind our algorithm. 

\begin{algorithm}[tb]
\caption{Decentralized Exact PCA  (\texttt{DeEPCA})}
\label{alg:deepca}
\begin{small}
	\begin{algorithmic}[1]
		\STATE {\bf Input:}
		Proper initial point $W^0$, FastMix parameter $K$.
		\STATE Initialize $S_j^0 = W^0$, $W_j^0 = W^0$ and $A_j W_j^{(-1)} = W^0$.
		\FOR {$t=0,\dots, T$ }
		\STATE
		For each agent $j$, update
		\begin{align}
			\label{eq:ss}
			S_j^{t+1} = S_j^t + A_j W_j^t - A_j W_j^{t-1} 
		\end{align}
		\STATE
		Communicate $S_j^{t+1}$ with its neighbors several times to achieve averaging, that is
		\begin{align}
			\Sb^{t+1} = \mathrm{FastMix}(\Sb^{t+1}, K), \mbox{ with } \Sb^{t+1}(:,:,j) = S_j^{t+1}.
		\end{align}
		\STATE
		For each agent $j$, compute the orthonormal basis of $S_j^{t+1}$ by QR decomposition, that is
		\begin{equation}
			\label{eq:sa}
			W_j^{t+1} = \orth(S_j^{t+1}), \quad\mbox{and}\quad W_j^{t+1} = \mbox{SignAdjust}(W^{t+1}, W^0).
		\end{equation}
		\ENDFOR
		\STATE {\bf Output:} $W_j^{T+1}$
	\end{algorithmic}
\end{small}
\end{algorithm}

\begin{algorithm}[tb]
\caption{SignAdjust}
\label{alg:SA}
\begin{small}
	\begin{algorithmic}[1]
		\STATE {\bf Input:}
		Matrices $W^t$ and $W^0$ and column number $k$.
		\FOR {$i=1,\dots, k$ }
		\IF{$\dotprod{W^t(:,i), W^0(:,i)}<0$}
		\STATE Flip the sign, that is, $W^t(:,i) = -W^t(:,i)$
		\ENDIF
		\ENDFOR
		\STATE {\bf Output:} $W^t$
	\end{algorithmic}
\end{small}
\end{algorithm}

\subsection{Main Idea}

In previous works, the common algorithmic frame is to conduct a multi-consensus step to achieve averaging for each local power method \citep{raja2015cloud,wai2017fast,kempe2008decentralized}, that is,
\begin{equation}
\label{eq:dpca}
\begin{aligned}
	W_j^{t+1} =& A_j W_j^t, 
	\\
	\Wb^{t+1} =& \mcs(\Wb^{t+1}),
	\\ 
	W_j^{t+1} =&\orth(W_j^{t+1})
\end{aligned}	
\end{equation}
where  $\orth(W_j)$ computes the orthonormal basis of $W_j$ by QR decomposition and $\Wb^t\in\RR^{d\times k\times m}$ has its $j$-th slice $\Wb^t(:,:,j) = W_j^t$.
However, algorithms in this framework will take increasing consensus steps to achieve high precision principal components and the consensus steps of each power iteration depend on the target precision $\epsilon$.
This framework is similar to the well-known \texttt{DGD} algorithm in decentralized optimization which can not converges to the optima without increasing the number of communications in each multi-consensus step \citep{yuan2016convergence,nedic2009distributed}.

In decentralized optimization, to overcome the weakness of \texttt{DGD}, a novel technique called `gradient-tracking' was introduced recently \citep{qu2017harnessing,shi2015extra}.
By the advantages of the gradient-tracking, several algorithms have achieved the linear convergence rate without increasing the number of multi-consensus iterations per step.
Especially, a recent work \texttt{Mudag} showed that gradient tracking can be used to achieve a near optimal communication complexity up to a $\log$ factor \citep{ye2020multi}.

To obtain a decentralized exact PCA  algorithm with a linear convergence rate without increasing the number of communications per consensus step, we track the subspace in the proposed PCA algorithm by adapting the gradient tracking method to `subspace tracking'.
Compared with previous decentralized PCA (Eqn.~\eqref{eq:dpca}) methods, we introduce an extra term $S_j$ to track the space of power iterations. 
Combining $S_j$ with multi-consensus, we can track the subspace in the power method exactly. We can then obtain the exact principal component $W_j$ after several power iterations.
The detailed description of the resulting algorithm \texttt{DeEPCA} is in Algorithm~\ref{alg:deepca}.

Please note that, Algorithm~\ref{alg:deepca} conducts a sign adjustment in Eqn.~\eqref{eq:sa} which is necessary to make \texttt{DeEPCA} converge stably.
This is because the signs of some columns of $W_j^t$ maybe flip during the local power iterations and the sign flipping does not change the column space of the matrix. 
However, if some signs are flipped, then the outcome of the aggregation $\bbW^t = \frac{1}{m}\sum W_j^t$ will be affected.

The subspace tracking  technique in our algorithm is the key to achieving the advantages of \texttt{DeEPCA}.
The intuition behind the subspace tracking comes from the observation that when $W_j^t$ and $W_j^{t-1}$ are close to the optimal subspace $U$ (where $U$ is the top-$k$ principal components of $A$), then $A_j W_j^t - A_j W_j^{t-1}$ is close to zero. 
This implies that different local subspaces $S_j^{t+1}$ in different agents only vary by small perturbations.
Thus, we only need a small number of consensus steps to make $S_j^{t+1}$ consistent with each other.
In fact, the idea behind subspace tracking has also been used in variance reduction methods for finite sum stochastic optimization algorithms \citep{johnson2013accelerating,defazio2014saga}.

Using subspace tracking, we can maintain highly consistent subspaces $S_j^{t+1}$ in the power iteration computation $A_j W_j^t$ without increasing the number of communication rounds per consensus step.
We can show that the approximation error reduce according to $\cO(\epsilon)$ where $\epsilon$ is the error precision for the power method.

\subsection{Main Result}

The following lemma shows how the mean variable $\bbS^t = \frac{1}{m}\sum_{j=1}^m S_j^t$ converges to the top-$k$ principal components of $A$ and local variable $S_j^t$ converges to its mean counterpart $\bbS^t$.

\begin{lemma}
\label{lem:main}
Matrix $A\in\RR^{d\times d} = \frac{1}{m}\sum_{j=1}^m A_j$ is positive semi-definite with $A_j$ being stored in $j$-th agent and $\norm{A_j}_2 \le L$. 
The agents form a undirected connected graph with weighted matrix $\mathbf{L}\in\RR^{m\times m}$. Given parameter $k\ge 1$, orthonormal matrix $U\in\RR^{d\times k}$ is the top-$k$ principal components of $A$.   $\lambda_k$ and $\lambda_{k+1}$ are $k$-th and $k+1$-th largest eigenvalue of $A$, respectively.
Suppose $\mL(\bbS) \triangleq \tan\theta_k(U,\bbS)$, $\gamma = 1 - \frac{\lambda_k - \lambda_{k+1}}{2\lambda_k}$ and $\mL(\bbS^0) < \infty$. If $\rho = \left(1 - \sqrt{1- \lambda_2(\mathbf{L})}\right)^K$ satisfies
\begin{equation}
	\small
	\label{eq:rho_cond}
	\begin{aligned}
		\rho \le& \min\left\{
		\frac{\gamma}{2},
		\frac{
			(\lambda_k - \lambda_{k+1})(\lambda_k\lambda_{k+1} + 2L\lambda_{k+1})\cdot\gamma^2
		}{
			96kL(\sqrt{k}+1) \left(1+\gamma^{2t}\cdot\mL^2(\bbS^0)\right)\left(\lambda_{k+1} + 2L + (\lambda_k+2L) \gamma^{t+1}\cdot\mL(\bbS^0)\right)^2
		},
		\right.
		\\&
		\left.
		\frac{\lambda_k\lambda_{k+1} + 2L\lambda_k}
		{16Lk(\sqrt{k}+1)\sqrt{m}\gamma^{t-1}\cdot\mL(\bbS^0)
			\cdot
			\sqrt{1+\gamma^{2t}\cdot \mL^2(\bbS^0)}\left(\lambda_{k+1} + 2L + (\lambda_k+2L) \gamma^{t+1}\cdot\mL(\bbS^0)\right)
		}
		\right\},
	\end{aligned}
\end{equation}
for $t = 1,\dots, T+1$. 
Letting $\bbS^t = \frac{1}{m}\sum_{j=1}^m S_j^t $, then sequence $\{\bbS^t\}_{t=0}^{T+1}$ and $\{\Sb^t\}_{t=0}^{T+1}$ generated by Algorithm~\ref{alg:deepca}  satisfy that
\begin{align}
	\label{eq:main_dec}
	\mL(\bbS^t) \le \gamma^t \cdot \mL(\bbS^0) \quad
	\mbox{ and } \quad
	\frac{1}{\sqrt{m}} \norm{\Sb^t - \bbS^t\otimes \mathbf{1}} \le 4\rho L(\sqrt{k +1}) \gamma^{t-2}\cdot\mL(\bbS^0),
\end{align}
and
\begin{align}
	\frac{1}{\sqrt{m}} \cdot \norm{\left[\bbS^t\right]^\dagger}\norm{\Sb^t -  \bbS^t\otimes \mathbf{1}}
	\le
	\frac{
		(\lambda_k - \lambda_{k+1}) 
	}{
		24(\lambda_{k+1} + 2L)
	}
	\cdot\gamma^t\cdot\mL(\bbS^0). \label{eq:sss_ass}
\end{align}
\end{lemma}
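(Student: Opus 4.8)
The three displayed bounds are mutually entangled: the contraction of the mean in \eqref{eq:main_dec} relies on the consensus errors being small, while controlling those consensus errors uses the fact that the iterates are already converging. The plan is therefore to prove all of \eqref{eq:main_dec} and \eqref{eq:sss_ass} \emph{simultaneously, by induction on} $t$. The natural starting point is a clean identity for the mean variable. Telescoping the tracking update \eqref{eq:ss} and using that \texttt{FastMix} preserves the column mean (Proposition~\ref{lem:mix_eq}), together with the initialization $S_j^0 = W^0$ and $A_j W_j^{-1} = W^0$, collapses the sum and yields $\bbS^{t+1} = \frac{1}{m}\sum_{j=1}^m A_j W_j^t$. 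Thus the mean performs an \emph{inexact} centralized power iteration, and the only obstruction to an exact power step is the discrepancy among the local orthonormal bases $W_j^t$.

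For the contraction bound I would write $\bbS^{t+1} = A\bbW^t + \frac{1}{m}\sum_{j} A_j\bigl(W_j^t - \bbW^t\bigr)$, where $\bbW^t = \frac{1}{m}\sum_j W_j^t$. The first term is an exact power step, which contracts $\tan\theta_k(U,\cdot)$ by the ideal factor $\lambda_{k+1}/\lambda_k$ via the standard $\tan\theta$ analysis of the power method. The second term has spectral norm at most $\frac{L}{\sqrt m}\norm{\Wb^t - \bbW^t\otimes\mathbf{1}}$, and I would bound the orthonormalization discrepancy through a QR-perturbation estimate by a multiple of $\frac{1}{\sqrt m}\norm{[\bbS^t]^\dagger}\,\norm{\Sb^t - \bbS^t\otimes\mathbf{1}}$ --- precisely the quantity controlled in \eqref{eq:sss_ass}. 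Feeding the inductive hypothesis \eqref{eq:sss_ass} into this perturbation shows the effective contraction factor degrades from $\lambda_{k+1}/\lambda_k$ to at most $\gamma = \frac{\lambda_k+\lambda_{k+1}}{2\lambda_k}$; the slack $\gamma - \lambda_{k+1}/\lambda_k = \frac{\lambda_k-\lambda_{k+1}}{2\lambda_k}$ is exactly the budget that the branches of the minimum in \eqref{eq:rho_cond} are calibrated to respect, giving $\mL(\bbS^{t+1})\le \gamma\,\mL(\bbS^t)\le \gamma^{t+1}\mL(\bbS^0)$.

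For the consensus error I would apply Proposition~\ref{lem:mix_eq} to the mixing step, so that $\frac{1}{\sqrt m}\norm{\Sb^{t+1}-\bbS^{t+1}\otimes\mathbf{1}} \le \rho\cdot\frac{1}{\sqrt m}\norm{\tilde{\Sb}^{t+1}-\bbS^{t+1}\otimes\mathbf{1}}$, where $\tilde{\Sb}^{t+1}$ is the pre-mixing tensor from \eqref{eq:ss}. Expanding $\tilde{\Sb}^{t+1}$ through the tracking recursion, the deviation relative to $\Sb^t$ is governed by the increments $A_jW_j^t - A_jW_j^{t-1}$. The key observation is that, by the already-established contraction, both $W_j^t$ and $W_j^{t-1}$ lie within angle $\cO\bigl(\gamma^t\mL(\bbS^0)\bigr)$ of $U$, so each increment has size $\cO\bigl(L\gamma^t\mL(\bbS^0)\bigr)$; summing the resulting geometric recursion produces the stated bound $\frac{1}{\sqrt m}\norm{\Sb^t-\bbS^t\otimes\mathbf{1}}\le 4\rho L(\sqrt{k}+1)\gamma^{t-2}\mL(\bbS^0)$. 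Finally, \eqref{eq:sss_ass} follows by multiplying this estimate by $\norm{[\bbS^t]^\dagger} = 1/\sigma_k(\bbS^t)$ and invoking a lower bound on $\sigma_k(\bbS^t)$ of order $\lambda_k$ (valid because $\bbS^t\approx A\bbW^{t-1}$ with $\bbW^{t-1}$ nearly aligned with $U$, so $\cos\theta_k$ is bounded away from zero); the corresponding branch of \eqref{eq:rho_cond} is exactly the threshold on $\rho$ forcing the product below $\frac{\lambda_k-\lambda_{k+1}}{24(\lambda_{k+1}+2L)}\gamma^t\mL(\bbS^0)$, which closes the induction.

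The main obstacle is the nonlinearity of the QR step, which links the three bounds. Converting the consensus error in $\Sb^t$ into a bound on how far the local bases $W_j^t$ deviate from a common subspace (needed in the contraction argument) requires a careful perturbation analysis of orthonormalization, and this is precisely why the \emph{weighted} quantity $\norm{[\bbS^t]^\dagger}\,\norm{\Sb^t-\bbS^t\otimes\mathbf{1}}$ in \eqref{eq:sss_ass} is the right object to carry through the induction. Managing the constants so that all three inequalities reproduce themselves at step $t+1$ --- without the $\gamma^{t}$ and $\mL(\bbS^0)$ factors accumulating --- is the delicate point, and it is what dictates the intricate, $t$-dependent form of the condition \eqref{eq:rho_cond}; I expect the bookkeeping of these constants, rather than any single inequality, to be the hardest part.
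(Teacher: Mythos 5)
Your overall architecture mirrors the paper's proof almost exactly: a simultaneous induction on the three bounds, the telescoping identity $\bbS^{t+1} = \frac{1}{m}\sum_{j} A_j W_j^t$ (the paper's Lemma~\ref{lem:s_g}), the decomposition of the mean update into an exact power step plus a perturbation of norm $\frac{L}{\sqrt{m}}\norm{\Wb^t - \bbW^t\otimes\mathbf{1}}$ (Lemmas~\ref{lem:g_h} and~\ref{lem:mL_dec}), a Stewart-type QR-perturbation bound carrying the weighted quantity $\norm{[\bbS^t]^\dagger}\norm{\Sb^t - \bbS^t\otimes\mathbf{1}}$ through the induction (Lemma~\ref{lem:W_w}), the $\rho$-contraction-plus-increment recursion for the consensus error (Lemma~\ref{lem:S_s}), and the lower bound on $\sigma_{\min}(\bbS^{t+1})$ (Lemma~\ref{lem:bbs_norm}). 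Up to bookkeeping, this is the paper's proof.

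There is, however, one genuine gap, located exactly at the step you call the ``key observation.'' You claim that because $W_j^t$ and $W_j^{t-1}$ both lie within angle $\cO(\gamma^t\mL(\bbS^0))$ of $U$, each increment $A_j(W_j^t - W_j^{t-1})$ has size $\cO(L\gamma^t\mL(\bbS^0))$. This inference is false as stated: an orthonormal basis of a subspace is determined only up to a $k\times k$ orthogonal transformation, so two bases can span subspaces at angle zero from each other and still satisfy $\norm{W_j^t - W_j^{t-1}} = \cO(\sqrt{k})$ (for $k=1$, take $W$ and $-W$). Since QR orthonormalization can flip column signs between consecutive iterations, the angle bounds alone do not prevent the increments --- and hence the driving term of your consensus-error recursion --- from remaining of order one, and the geometric summation you invoke would not close. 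This is precisely why Algorithm~\ref{alg:deepca} includes the SignAdjust step in Eqn.~\eqref{eq:sa}, and why the paper's proof of Lemma~\ref{lem:W-W} explicitly relies on it: after sign adjustment, $\tW^t$, $\tW^{t-1}$, and a suitably oriented $U$ have positive column-wise inner products, which is what licenses the bound $\norm{\tW^t - U} \le \sqrt{k}\,\mL(\bbS^t)$ and in turn $\norm{\Wb^t - \Wb^{t-1}} \le 24\left(\norm{[\bbS^t]^\dagger}\norm{\Sb^t - \bbS^t\otimes\mathbf{1}} + \norm{[\bbS^{t-1}]^\dagger}\norm{\Sb^{t-1} - \bbS^{t-1}\otimes\mathbf{1}}\right) + \sqrt{mk}\left(\mL(\bbS^t)+\mL(\bbS^{t-1})\right)$. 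Your proposal never mentions the sign adjustment, so the alignment argument --- the one piece of the proof that converts convergence of subspaces into convergence of the actual matrices --- is missing. A secondary, more minor omission in the same step: the increment bound carries consensus-error terms in addition to the angle terms, though those are absorbed by the induction hypothesis in the same way the paper handles them.
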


\begin{remark}
Lemma~\ref{lem:main} shows that our \texttt{DeEPCA} can achieve a linear convergence rate almost the same to power method. 
Furthermore, the difference between local variable $S_j^t$ and its mean variable $\bbS^t$ will also converge to zero as iteration goes.
This implies that $S_j$'s in different agents will converge to the same subspace.
Thus, we can obtain that $W_j^t = \orth(S_j^t)$ will converge to the top-$k$ principal components of $A$.
Furthermore, we can observe that the right hand of Eqn.~\eqref{eq:rho_cond} decreases as $t$ increasing and is independent of $\epsilon$.
Hence, \texttt{DeEPCA} does \emph{not} require to increase the consensus steps to achieve a high precision solution nor setting consensus steps for each power iteration according to $\epsilon$ which is required in previous work \citep{wai2017fast,kempe2008decentralized}.
Lemma~\ref{lem:main} also reveals an interesting property of \texttt{DeEPCA}.
To obtain the top-$k$ principal components of a positive semi-definite matrix $A =\frac{1}{m}\sum_{j=1}^m A_j$, \texttt{DeEPCA} does \emph{not} require  $A_j$ to be positive semi-definite.
Thus, our \texttt{DeEPCA} is a robust algorithm and can be applied in different settings.
\end{remark}

By Lemma~\ref{lem:main}, we can easily obtain the iteration  and communication  complexities of \texttt{DeEPCA} to achieve $\tan\theta_k(U, W_j) \le \epsilon$ for each agent-$j$. 
The communication complexity depends on the times of local communication which is presented as the product of $\Wb$ and $\mathbf{L}$ in Algorithm~\ref{alg:mix}. 
Now we give the detailed iteration complexity and communication complexity of our algorithm in the following theorem.
\begin{theorem}
\label{thm:main}
Let $A$, $U$, and graph weight matrix $\mathbf{L}$ satisfy the properties in Lemma~\ref{lem:main}. 
The initial orthonormal matrix $W^0$ satisfies that $\tan\theta_k(U, W^0) < \infty$. 
Let parameter $K$ satisfy
\begin{align*}
	K \le 
	\frac{1}{\sqrt{1 - \lambda_2(\mathbf{L})}} 
	\cdot \log \frac{96kL(\sqrt{k}+1) (\lambda_k+2L)\left(1+\tan\theta_k(U,W^0)\right)^4 }{\lambda_{k+1}(\lambda_k - \lambda_{k+1})\cdot\left(1 - \frac{\lambda_k - \lambda_{k+1}}{2\lambda_k}\right)^2}.
\end{align*}
Given $\epsilon<1$, to achieve $\tan\theta_k(U, W_j^T) \le \epsilon$ for $j = 1,\dots, m$,
the iteration complexity $T$ is at most
\begin{align}
	T = \frac{2\lambda_k}{\lambda_k - \lambda_{k+1}} \cdot \max\left\{ \log\frac{4\tan\theta_k(U,W^0)}{\epsilon}, \log \frac{4(\lambda_k+2L)\tan\theta_k(U, W^0)}{\sqrt{m} (\lambda_k - \lambda_{k+1})\epsilon}\right\}.
\end{align}
The communication complexity is at most
\begin{equation}
	\begin{aligned}
		C =& \frac{2\lambda_k}{(\lambda_k - \lambda_{k+1})\sqrt{1 - \lambda_2(\mathbf{L})}} \cdot \max\left\{ \log\frac{4\tan\theta_k(U,W^0)}{\epsilon}, \log \frac{4(\lambda_k+2L)\tan\theta_k(U, W^0)}{\sqrt{m} (\lambda_k - \lambda_{k+1})\epsilon}\right\}
		\\
		&\cdot\log \frac{96kL(\sqrt{k}+1) (\lambda_k+2L)\left(1+\tan\theta_k(U,W^0)\right)^4}{\lambda_{k+1}(\lambda_k - \lambda_{k+1})\cdot\left(1 - \frac{\lambda_k - \lambda_{k+1}}{2\lambda_k}\right)^2}.
	\end{aligned}
\end{equation}
Furthermore, it also holds that
\begin{equation}
	\label{eq:ww}
	\norm{W_j^{T+1} - \frac{1}{m}\sum_{j=1}^m W_j^{T+1}} \le \frac{\epsilon}{2}, \mbox{ and } \tan\theta_k(U,\bbS^T) \le \frac{\epsilon}{4}.
\end{equation}
\end{theorem}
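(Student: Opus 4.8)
The plan is to derive Theorem~\ref{thm:main} as a quantitative corollary of Lemma~\ref{lem:main}: the lemma already supplies the geometric decay of $\mL(\bbS^t)=\tan\theta_k(U,\bbS^t)$ and of the cross-agent deviation, so the theorem reduces to three bookkeeping tasks, namely (i) checking that the stated choice of $K$ forces $\rho=(1-\sqrt{1-\lambda_2(\mathbf{L})})^K$ to obey the precision requirement \eqref{eq:rho_cond}, (ii) inverting the two geometric rates to read off $T$, and (iii) passing from the mean iterate $\bbS^t$ to each agent's $W_j^t$.

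First, for the condition on $K$, I would note that the right-hand side of \eqref{eq:rho_cond} is decreasing in $t$ (the factors $\gamma^{t}\mL(\bbS^0)$ and $\gamma^{2t}\mL^2(\bbS^0)$ shrink), so it suffices to enforce it at its tightest point $t=1$, where the initialization $\bbS^0=W^0$ gives $\mL(\bbS^0)=\tan\theta_k(U,W^0)$. Bounding $\gamma\le 1$ and collecting the $(1+\tan\theta_k(U,W^0))$ factors — the factor $(1+\gamma^{2}\mL^2(\bbS^0))$ together with the squared bracket produces the power $(1+\tan\theta_k(U,W^0))^4$ — reduces \eqref{eq:rho_cond} to a clean sufficient bound $\rho\le\delta_0$, where $\delta_0$ is the reciprocal of the argument of the logarithm appearing in the theorem. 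Taking logarithms and using $\log\frac{1}{1-\sqrt{1-\lambda_2(\mathbf{L})}}\ge\sqrt{1-\lambda_2(\mathbf{L})}$ converts $\rho\le\delta_0$ into the displayed threshold on $K$, which is then the value fed into the communication count $C=T\cdot K$.

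Second, for the iteration count I would invert the two requirements separately and take their maximum. The bound $\mL(\bbS^t)\le\gamma^t\mL(\bbS^0)$ from \eqref{eq:main_dec}, together with $\log(1/\gamma)\ge\frac{\lambda_k-\lambda_{k+1}}{2\lambda_k}$, shows that $T\ge\frac{2\lambda_k}{\lambda_k-\lambda_{k+1}}\log\frac{4\tan\theta_k(U,W^0)}{\epsilon}$ guarantees $\tan\theta_k(U,\bbS^T)\le\epsilon/4$, which is exactly the first term in the $\max$ and the second claim of \eqref{eq:ww}. The second term comes from forcing the agent-to-mean discrepancy to be negligible: since QR-orthonormalization does not change the column space, $\tan\theta_k(U,W_j^t)=\tan\theta_k(U,S_j^t)$, and a matrix perturbation estimate bounds $\tan\theta_k(U,S_j^t)$ by $\tan\theta_k(U,\bbS^t)$ plus the relative perturbation $\norm{[\bbS^t]^\dagger}\,\norm{S_j^t-\bbS^t}$, precisely the quantity controlled by \eqref{eq:sss_ass}. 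Driving this contribution below $\epsilon$ yields the second logarithmic term, carrying the gap-dependent factor $\frac{\lambda_k+2L}{\lambda_k-\lambda_{k+1}}$ and the $1/\sqrt m$ reflecting the $\frac{1}{\sqrt m}$-normalization in \eqref{eq:sss_ass}. The first claim of \eqref{eq:ww} follows in the same spirit: writing $W_j^{T+1}-\bbW^{T+1}=\frac1m\sum_{j'}(W_j^{T+1}-W_{j'}^{T+1})$ and invoking the Lipschitz stability of the (sign-adjusted) orthonormalization map near the well-conditioned $\bbS^{T+1}$ again reduces the estimate to $\norm{[\bbS^{T+1}]^\dagger}\,\frac1{\sqrt m}\norm{\Sb^{T+1}-\bbS^{T+1}\otimes\mathbf{1}}$, which \eqref{eq:sss_ass} bounds by $\frac{\lambda_k-\lambda_{k+1}}{24(\lambda_{k+1}+2L)}\gamma^{T+1}\mL(\bbS^0)\le\epsilon/2$.

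The step I expect to be the main obstacle is the perturbation analysis in the third paragraph: converting a small relative perturbation $\norm{[\bbS^t]^\dagger}\norm{S_j^t-\bbS^t}$ into a bound on $\tan\theta_k(U,S_j^t)$, and likewise into a bound on $\norm{W_j^{t}-\bbW^{t}}$, requires a careful tangent-angle stability estimate handled jointly with the sign-adjustment step, and one must track the dimensional bookkeeping — single-slice versus aggregate Frobenius norms and the $1/\sqrt m$ factors — precisely enough that the gap-dependent constants match the stated bounds. By comparison, the manipulations in the first two paragraphs are monotone logarithm inversions and are routine.
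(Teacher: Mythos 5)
Your proposal is correct and takes essentially the same route as the paper's proof: it worst-cases condition \eqref{eq:rho_cond} over $t$ to obtain the threshold on $K$ (hence $C = T\cdot K$), inverts the two geometric decays supplied by Lemma~\ref{lem:main} to obtain the two logarithmic terms in $T$, and transfers the guarantee from the mean iterate $\bbS^T$ to each agent's $W_j^T$ via the same QR/angle perturbation machinery (Lemma~\ref{lem:W_w} together with \eqref{eq:main_dec} and \eqref{eq:sss_ass}), your only cosmetic deviation being that you perturb $S_j^t$ around $\bbS^t$ directly rather than chaining $W_j^T \to \bbW^T \to \tW^T$ as the paper does. One caveat (shared verbatim by the paper): the right-hand side of \eqref{eq:rho_cond} is actually \emph{increasing} in $t$ (shrinking $\gamma^t$ shrinks the denominators), so your stated monotonicity direction is backwards, but the enforcement point you choose ($t=1$, equivalently bounding $\gamma^t\le 1$) is precisely the tightest one, so the resulting bound on $K$ remains valid.
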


\begin{remark}
\label{rmk:main}
Theorem~\ref{thm:main} shows that for any agent $j$, $W_j$ takes $T = \cO\left(\frac{\lambda_k - \lambda_{k+1}}{\lambda_k}\log\frac{1}{\epsilon}\right)$ iterations to converge to the top-$k$ principal  components of $A$ with an $\epsilon$-suboptimality. 
This iteration complexity is the same to the centralized PCA based on power method \citep{golub2012matrix}.
Furthermore, each power iteration of \texttt{DeEPCA} requires 
\begin{equation}
	\label{eq:K}
	K = \cO\left(\frac{1}{\sqrt{1 - \lambda_2(\mathbf{L})}}\cdot \log \left(\frac{L^2}{\lambda_k\lambda_{k+1}} \cdot \frac{\lambda_k - \lambda_{k+1}}{\lambda_k}\right)\right)
\end{equation} 
consensus steps.
Note that $K$ is independent of the precision parameter $\epsilon$ which shows that \texttt{DeEPCA} does \emph{not} need to tune its consensus parameter $K$ according to $\epsilon$.
This also implies that \texttt{DeEPCA} does \emph{not} increase its consensus steps gradually to achieve a high precision principal components. 
In contrast, the best known consensus steps for each power iteration of previous decentralized algorithms are required to be \citep{wai2017fast}
\begin{equation}
	\label{eq:K1}
	K = \cO\left(\frac{1}{\sqrt{1 - \lambda_2(\mathbf{L})}} \log\left(\frac{\lambda_k - \lambda_{k+1}}{\lambda_k}\cdot \frac{1}{\epsilon}\right)\right).
\end{equation}
Thus, \texttt{DeEPCA} achieves the best communication complexity of decentralized PCA algorithms.
Comparing Eqn.~\eqref{eq:K} and~\eqref{eq:K1}, our result is better than the one of \citep{wai2017fast} up to $\log\frac{1}{\epsilon}$ factor. 
In fact, this advantage will become large even when $\epsilon$ is moderate large which can be observed in our experiments.
Similar advantage of \texttt{EXTRA} over \texttt{DGD} in decentralized optimization makes \texttt{EXTRA} become one of most important algorithm in decentralized optimization \citep{shi2015extra}.

Furthermore, Eqn.~\eqref{eq:K} shows that the consensus steps depend on the ratio $L^2/(\lambda_k\lambda_{k+1})$. 
In fact, the value $L^2/(\lambda_k\lambda_{k+1})$ reflect the data heterogeneity which can be observed more clearly when $k = 1$.
Due the data heterogeneity, multi-consensus is necessary in \texttt{DeEPCA} which will be validated in our experiments. 
\end{remark}

\begin{remark}
Lemma~\ref{lem:main} shows that once $\rho$ satisfies Eqn.~\eqref{eq:rho_cond}, $\bbS^t$ will converge to the top-$k$ principal components of $A$ linearly.
That, any multi-consensus which can satisfy Eqn.~\eqref{eq:rho_cond}, \texttt{DeEPCA} can achieve linear convergence rate.
Thus, though our analysis is based on the undirected graph, the results of \texttt{DeEPCA} can be easily extended to directed graph, gossip models, etc.
\end{remark}

\begin{remark}
\texttt{DeEPCA} is a novel decentralized exact power method.
Because the  power method is the key tool in eigenvector computation and low rank approximation (SVD decomposition) \citep{golub2012matrix},
\texttt{DeEPCA} provides a solid foundation for developing decentralized eigenvalue decomposition, decentralized SVD,   decentralized spectral analysis, etc.
\end{remark}
\section{Convergence Analysis}

In this section, we will give the detailed convergence analysis of \texttt{DeEPCA}.
For notation convenience, we first introduce local and aggregate variables.

\subsection{Local and Aggregate Variables}
\label{subsec:lav}

Matrix $W^t_j\in\RR^{d\times k}$ is the local copy of the variable of $W$ for agent $j$ at $t$-th power iteration and we introduce its aggregate variable $\Wb^t\in\RR^{d\times k\times m}$ whose $j$-th slice $\Wb^t(:,:,j)$ is $W_j^t$, that is,
\begin{align*}
\Wb^t(:,:,j) = W_j^t.
\end{align*}
Furthermore, we introduce $G_j^t = A_j W_j^{t-1} \in\RR^{d\times k}$ and tracking variable $S_j\in\RR^{d\times k}$.  
We also introduce the aggregate variables $\Gb^t\in\RR^{d\times k\times m}$ and $\Sb^t \in\RR^{d\times k\times m}$ of $G_i^t$ and $S_i^t$, respectively which satisfy
\begin{align}
\label{eq:GS}
\Gb^t(:,:,j) = G_j^t\;\mbox{ and }\; \Sb^t(:,:,j) = S_j^t.
\end{align}
Using the local and aggregate variables, we can represent Algorithm~\ref{alg:deepca} as
\begin{align}
\Sb^{t+1} =& \mathrm{FastMix} \left(\Sb^t + \Gb^{t+1} - \Gb^t, K\right) \label{eq:SS}
\\
W_j^{t+1} =& \orth(S_j^{t+1}). \label{eq:orth}
\end{align}

For the convergence analysis, we further introduce the mean values 
\begin{align}
\label{eq:def_bb}
\bbW^t = \frac{1}{m}\sum_{j=1}^m W_j^t, \quad 
\bbG^t = \frac{1}{m}\sum_{j=1}^m G_j^t, \quad
\bbS^t = \frac{1}{m}\sum_{j=1}^m S_j^t, \quad
\bbH^t = \frac{1}{m}\sum_{j=1}^m A_i\bbW^{t-1},\quad
\tW^t = \orth(\bbS^t).
\end{align}

\subsection{Sketch of Proof}

First, we give the relationship between $\bbS^t$, $\bbG^t$, and $\bbH^t$ in Lemma~\ref{lem:s_g} and Lemma~\ref{lem:g_h}. 
These two lemmas show that $\bbS^t$ and $\bbH^t$ are close to each other but perturbed by $\frac{L}{\sqrt{m}}\norm{\Wb^{t-1} -  \bbW^{t-1}\otimes \mathbf{1}}$.
Furthermore, by the definition of $\bbH^t$, we can obtain that
\begin{align}
\label{eq:bbs}
\bbS^{t+1} \approx \bbH^{t+1} = A\bbW^t.
\end{align}
If $\bbS^t$ is also close to $S_j^t$, then we can obtain that 
\begin{align}
\label{eq:bbw}
\bbW^{t+1} \approx \orth(\bbS^{t+1}).
\end{align}
We can observe that Eqn.~\eqref{eq:bbs} and \eqref{eq:bbw} are the two steps of a power iteration but with some perturbation.
Based on $\bbS^t$, $\bbG^t$, and $\bbH^t$, we can observe that \texttt{DeEPCA} can fit into the framework of power method but with some perturbation.
This is the reason why \texttt{DeEPCA} will converge to the top-$k$ principal components of $A$.

Next, we will bound the error between local and mean variables (Defined in Section~\ref{subsec:lav}) such as $\norm{\Sb^{t+1} - \bbS^{t+1}\otimes \mathbf{1}}$ (in Lemma~\ref{lem:S_s}) and $\norm{\Wb^{t+1} - \bbW^{t+1}\otimes\mathbf{1}}$ (in Lemma~\ref{lem:W_w}).
Lemma~\ref{lem:S_s} shows that $\norm{\Sb^{t+1} -  \bbS^{t+1}\otimes \mathbf{1}}$ will decay with a rate $\rho<1$ for each iteration but adding an extra error term $L\rho \norm{\Wb^t - \Wb^{t-1}}$.
When \texttt{DeEPCA} converges, then $\Wb^t$ and $\Wb^{t-1}$ will both converge to the top-$k$ principal components, that is, $ \norm{\Wb^t - \Wb^{t-1}}$ will converge to zero (in Lemma~\ref{lem:W-W}).
Thus, $\norm{\Sb^{t+1} -  \bbS^{t+1}\otimes \mathbf{1}}$ will also converge to zero.
This implies that $\norm{\Wb^t - \bbW^t\otimes \mathbf{1}}$ goes to zero as $t$ increases by Lemma~\ref{lem:W_w}.
Hence, the noisy power method described in Eqn.~\eqref{eq:bbs} and~\eqref{eq:bbw} becomes exact power method gradually.

Finally, Lemma~\ref{lem:mL_dec} shows that $\tan\theta_k(U,\bbS^t)$ converges with rate $\gamma = 1 - \frac{\lambda_k-\lambda_{k+1}}{2\lambda_k}$ when the perturbation term $ \norm{\left[\bbS^t\right]^\dagger}\norm{\Sb^t -  \bbS^t\otimes \mathbf{1}}$ is upper bounded as Eqn.~\eqref{eq:s_ass}.
Combining Lemma~\ref{lem:S_s}, Lemma~\ref{lem:bbs_norm} and Lemma~\ref{lem:mL_dec}, we use induction in the proof of Lemma~\ref{lem:main} to show that the assumption~\eqref{eq:s_ass} and Eqn.~\eqref{eq:mL_dec} hold for $t = 1,\dots, T+1$ when $\rho$ is properly chosen. 
This leads to the results of Lemma~\ref{lem:main}.
\subsection{Main Lemmas}
\label{subsec:main_lem}

In our analysis, we aim to show 
$
\tan\theta_k(U, \bbS^{T+1})
$
and 
$\norm{\Sb^{T+1} - \bbS^T\otimes \mathbf{1}} $ will converge to $\epsilon$.
First, we give the relationship between $\bbS^t$, $\bbG^t$, and $\bbH^t$. 
Based on $\bbS^t$, $\bbG^t$, and $\bbH^t$, we can observe that \texttt{DeEPCA} can fit into the framework of power method but with some perturbation.
\begin{lemma}
\label{lem:s_g}
Let $\bbW^0$, $\bbG^0$, and $\bbS^0$ be initialized as $W^0$. 
Supposing $\bbG^t$, and $\bbS^t$ be defined in Eqn.~\eqref{eq:def_bb} and $\Sb^{t}$ update as Eqn.~\eqref{eq:SS},
it holds that 
$$\bbS^{t+1} = \bbS^t + \bbG^{t+1} - \bbG^t =  \bbG^{t+1}.$$
\end{lemma}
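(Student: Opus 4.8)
The plan is to split the claimed chain of equalities into two parts and handle them separately. The first equality $\bbS^{t+1} = \bbS^t + \bbG^{t+1} - \bbG^t$ is purely a statement about how the slice-average evolves, and it rests entirely on the fact that $\mathrm{FastMix}$ preserves the average of its input slices. Concretely, Eqn.~\eqref{eq:SS} gives $\Sb^{t+1} = \mathrm{FastMix}(\Sb^t + \Gb^{t+1} - \Gb^t, K)$, and Proposition~\ref{lem:mix_eq} asserts precisely that the average of the output slices of $\mathrm{FastMix}$ equals the average of its input slices. Applying this with input tensor $\Sb^t + \Gb^{t+1} - \Gb^t$ and using linearity of the slice-average, I would obtain
\[
\bbS^{t+1} = \frac{1}{m}\sum_{j=1}^m \left(S_j^t + G_j^{t+1} - G_j^t\right) = \bbS^t + \bbG^{t+1} - \bbG^t,
\]
where the last step uses the definitions in Eqn.~\eqref{eq:def_bb}. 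This settles the first equality without any iteration-dependent bookkeeping.

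For the second equality $\bbS^{t+1} = \bbG^{t+1}$, I would prove by induction the stronger invariant $\bbS^t = \bbG^t$ for all $t \ge 0$. The base case $t=0$ comes directly from the initialization in Algorithm~\ref{alg:deepca}: since $S_j^0 = W^0$ and $A_j W_j^{(-1)} = W^0$ (so that $G_j^0 = A_j W_j^{-1} = W^0$) for every agent $j$, averaging over $j$ gives $\bbS^0 = W^0 = \bbG^0$. For the inductive step, assuming $\bbS^t = \bbG^t$, the first equality just established yields
\[
\bbS^{t+1} = \bbS^t + \bbG^{t+1} - \bbG^t = \bbG^t + \bbG^{t+1} - \bbG^t = \bbG^{t+1},
\]
so the invariant propagates and the desired $\bbS^{t+1} = \bbG^{t+1}$ follows. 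Chaining the two equalities then gives the full statement.

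I do not anticipate a genuine obstacle here: the lemma is essentially a telescoping identity, and the only substantive input is the exact averaging property of $\mathrm{FastMix}$ from Proposition~\ref{lem:mix_eq}. The one place demanding care is the base case, where one must read off from the initialization that $G_j^0 = W^0$ via the convention $A_j W_j^{(-1)} = W^0$; getting this right is what anchors the induction and guarantees that the tracking variable $\bbS^t$ coincides exactly with the true averaged power-iteration image $\bbG^{t+1} = \frac{1}{m}\sum_{j=1}^m A_j W_j^t$ at every step. This identity is the conceptual payoff of the lemma, since it is what lets the subsequent analysis treat the mean sequence $\{\bbS^t\}$ as a (perturbed) centralized power iteration.
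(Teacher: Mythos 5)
Your proposal is correct and follows essentially the same route as the paper: establish the average-evolution identity $\bbS^{t+1} = \bbS^t + \bbG^{t+1} - \bbG^t$ from the averaging-preservation property of FastMix, then prove $\bbS^t = \bbG^t$ by induction using the common initialization $\bbS^0 = \bbG^0 = W^0$. If anything, your citation of Proposition~\ref{lem:mix_eq} (average preservation) is a slightly more precise justification of the first step than the paper's appeal to linearity of FastMix alone, but the argument is the same.
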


\begin{lemma}
\label{lem:g_h}
Letting  $\bbG^t$ and $\bbH^t$ be defined in Eqn.~\eqref{eq:def_bb} and $\norm{A_j}_2 \le L$ for $j = 1,\dots, m$, they have the following properties
\begin{equation}
	\label{eq:g_h}
	\norm{\bbG^t - \bbH^t} 
	\le
	\frac{L}{\sqrt{m}}\norm{\Wb^{t-1} -  \bbW^{t-1}\otimes \mathbf{1}}.
\end{equation}
\end{lemma}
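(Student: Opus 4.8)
The plan is to expand the difference $\bbG^t - \bbH^t$ directly from the definitions in Eqn.~\eqref{eq:def_bb} and then reduce it to a per-agent consensus error. Writing $\bbG^t = \frac{1}{m}\sum_{j=1}^m A_j W_j^{t-1}$ and $\bbH^t = \frac{1}{m}\sum_{j=1}^m A_j \bbW^{t-1}$ (so that $\bbH^t = A\bbW^{t-1}$), I would first factor
\begin{align*}
\bbG^t - \bbH^t = \frac{1}{m}\sum_{j=1}^m A_j\left(W_j^{t-1} - \bbW^{t-1}\right),
\end{align*}
which isolates the deviation of each local copy $W_j^{t-1}$ from the mean $\bbW^{t-1}$.

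Next I would apply the triangle inequality together with the operator-norm hypothesis $\norm{A_j}_2 \le L$, obtaining
\begin{align*}
\norm{\bbG^t - \bbH^t} \le \frac{1}{m}\sum_{j=1}^m \norm{A_j}_2 \norm{W_j^{t-1} - \bbW^{t-1}} \le \frac{L}{m}\sum_{j=1}^m \norm{W_j^{t-1} - \bbW^{t-1}}.
\end{align*}
The remaining step is to pass from this sum of Frobenius norms to the single tensor norm on the right-hand side of the claim. By the definition of the aggregate variable, the $j$-th slice of $\Wb^{t-1} - \bbW^{t-1}\otimes\mathbf{1}$ is exactly $W_j^{t-1} - \bbW^{t-1}$, so that
\begin{align*}
\norm{\Wb^{t-1} - \bbW^{t-1}\otimes\mathbf{1}}^2 = \sum_{j=1}^m \norm{W_j^{t-1} - \bbW^{t-1}}^2.
\end{align*}
I would then invoke Cauchy--Schwarz on $\sum_{j=1}^m \norm{W_j^{t-1} - \bbW^{t-1}}$ to pick up a factor $\sqrt{m}$ times the square root of this quantity, which combined with the $L/m$ prefactor yields precisely $\frac{L}{\sqrt{m}}\norm{\Wb^{t-1} - \bbW^{t-1}\otimes\mathbf{1}}$.

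The argument is elementary, and I expect no genuine difficulty beyond careful norm bookkeeping. The one point to get right is the interplay of the three norms: recognizing that the tensor Frobenius norm aggregates the per-agent deviations in an $\ell_2$ sense, and that converting the $\ell_1$-type sum produced by the triangle inequality into this $\ell_2$ quantity costs exactly a factor $\sqrt{m}$ through Cauchy--Schwarz. This $\sqrt{m}$ is what reduces the $1/m$ from averaging to the $1/\sqrt{m}$ appearing in the stated bound.
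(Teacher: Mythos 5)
Your proof is correct and takes essentially the same route as the paper: the same decomposition $\bbG^t - \bbH^t = \frac{1}{m}\sum_{j=1}^m A_j\left(W_j^{t-1} - \bbW^{t-1}\right)$, the same use of $\norm{A_j}_2 \le L$, and the same identification of the slice-wise sum of squares with $\norm{\Wb^{t-1} - \bbW^{t-1}\otimes\mathbf{1}}^2$. The only cosmetic difference is bookkeeping: the paper squares the norm and applies Jensen's inequality (convexity of $\norm{\cdot}^2$) in a single step, while you use the triangle inequality followed by Cauchy--Schwarz — two equivalent ways of incurring the identical $\sqrt{m}$ factor.
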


In the next lemmas, we will bound the error between local and mean variables (Defined in Section~\ref{subsec:lav}).
First, we upper bound the error $\norm{\Sb^{t+1} -  \bbS^{t+1}\otimes \mathbf{1}}$ recursively.

\begin{algorithm}[tb]
\caption{FastMix}
\label{alg:mix}
\begin{small}
	\begin{algorithmic}[1]
		\STATE {\bf Input:} $\Wb^{0} = \Wb^{-1}$, $K$, $\mathbf{L}$, step size $\eta_w = \frac{1 - \sqrt{1-\lambda_2^2(W)}}{1+\sqrt{1-\lambda_2^2(W)}}$.
		\FOR {$k=0,\dots, K$ }
		\STATE $\Wb^{k+1} = (1+\eta_w)\Wb^k\mathbf{L} - \eta_w \Wb^{k-1}$;  
		\ENDFOR
		\STATE {\bf Output:} $\Wb^K$.
	\end{algorithmic}
\end{small}
\end{algorithm}

\begin{lemma}
\label{lem:S_s}
Letting $\Sb^{t}$ be updated as Eqn.~\eqref{eq:SS} and $\norm{A_j} \le L$, then $\Sb^{t+1}$ and $\bbS^{t+1}$ have the following properties
\begin{align}
	\label{eq:S_s}
	\norm{\Sb^{t+1} -  \bbS^{t+1}\otimes \mathbf{1}}
	\le&
	\rho \norm{\Sb^t -  \bbS^t\otimes \mathbf{1}} + L\rho\norm{\Wb^t - \Wb^{t-1}}, \mbox{ with } 
	\rho \triangleq \left(1 - \sqrt{1 - \lambda_2(\mathbf{L})}\right)^K.
\end{align}
\end{lemma}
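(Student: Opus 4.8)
The plan is to recognize that a single iteration of Algorithm~\ref{alg:deepca} feeds one tensor into FastMix and then to invoke Proposition~\ref{lem:mix_eq} with that tensor playing the role of the initial iterate. Concretely, set $\TB^{t+1} := \Sb^t + \Gb^{t+1} - \Gb^t$, so that by Eqn.~\eqref{eq:SS} we have $\Sb^{t+1} = \mathrm{FastMix}(\TB^{t+1}, K)$. Proposition~\ref{lem:mix_eq} then supplies two facts. First, FastMix preserves the slice-mean, so $\bbS^{t+1} = \frac{1}{m}\Sb^{t+1}\mathbf{1} = \frac{1}{m}\TB^{t+1}\mathbf{1} = \bbS^t + \bbG^{t+1} - \bbG^t$. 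Second, the deviation from the mean contracts by exactly the factor $\rho = \left(1 - \sqrt{1-\lambda_2(\mathbf{L})}\right)^K$, i.e. $\norm{\Sb^{t+1} - \bbS^{t+1}\otimes\mathbf{1}} \le \rho\,\norm{\TB^{t+1} - \bar{\TB}^{t+1}\otimes\mathbf{1}}$, where $\bar{\TB}^{t+1} = \frac{1}{m}\TB^{t+1}\mathbf{1}$. This already isolates the correct constant $\rho$, so all that remains is to bound the centered input norm on the right.

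Next I would decompose the centered input using linearity. Since $\bar{\TB}^{t+1} = \bbS^t + (\bbG^{t+1} - \bbG^t)$, we have
\[
\TB^{t+1} - \bar{\TB}^{t+1}\otimes\mathbf{1} = (\Sb^t - \bbS^t\otimes\mathbf{1}) + \big[(\Gb^{t+1}-\Gb^t) - (\bbG^{t+1}-\bbG^t)\otimes\mathbf{1}\big],
\]
and the triangle inequality splits the bound into the old consensus error $\norm{\Sb^t - \bbS^t\otimes\mathbf{1}}$ plus a centered gradient-difference term. For the latter I would use the elementary fact that subtracting the slice-mean only decreases the Frobenius norm: for any tensor with slices $X_j$ and mean $\bar{X}$, the identity $\sum_j \norm{X_j - \bar{X}}^2 = \sum_j \norm{X_j}^2 - m\norm{\bar{X}}^2 \le \sum_j \norm{X_j}^2$ shows that centering is a contraction. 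Applying this with $X_j = G_j^{t+1} - G_j^t$ removes the mean and leaves $\norm{(\Gb^{t+1}-\Gb^t) - (\bbG^{t+1}-\bbG^t)\otimes\mathbf{1}} \le \norm{\Gb^{t+1}-\Gb^t}$.

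Finally I would bound $\norm{\Gb^{t+1}-\Gb^t}$ slicewise using the operator-norm bound $\norm{A_j}_2 \le L$ (which follows from the hypothesis $\norm{A_j}\le L$). Since $G_j^{t+1}-G_j^t = A_j W_j^t - A_j W_j^{t-1} = A_j(W_j^t - W_j^{t-1})$, we obtain $\norm{G_j^{t+1}-G_j^t} \le L\,\norm{W_j^t - W_j^{t-1}}$, and summing squares over $j$ gives $\norm{\Gb^{t+1}-\Gb^t} \le L\,\norm{\Wb^t - \Wb^{t-1}}$. Substituting back through the chain yields $\norm{\Sb^{t+1} - \bbS^{t+1}\otimes\mathbf{1}} \le \rho\big(\norm{\Sb^t - \bbS^t\otimes\mathbf{1}} + L\,\norm{\Wb^t - \Wb^{t-1}}\big)$, which is exactly Eqn.~\eqref{eq:S_s}. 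I expect no serious obstacle here; the only points needing care are the tensor/Frobenius norm bookkeeping — matching FastMix's mean-preservation (which pins the input mean at $\bbS^{t+1}$) with the centering identity — and confirming that the operator-to-Frobenius inequality $\norm{A_j X}\le \norm{A_j}_2\norm{X}$ is the right tool for passing from $A_j$ to the bound $L$.
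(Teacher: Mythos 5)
Your proposal is correct and follows essentially the same route as the paper's proof: mean preservation plus $\rho$-contraction of FastMix (Proposition~\ref{lem:mix_eq}), the centering-is-a-contraction identity $\sum_j \norm{X_j - \bar{X}}^2 = \sum_j \norm{X_j}^2 - m\norm{\bar{X}}^2 \le \sum_j\norm{X_j}^2$ applied to $\Gb^{t+1}-\Gb^t$, and the slicewise bound $\norm{A_j(W_j^t - W_j^{t-1})} \le L \norm{W_j^t - W_j^{t-1}}$. The only cosmetic difference is order of operations (you contract the whole centered input and then split by the triangle inequality, while the paper splits via linearity of FastMix and contracts each piece), which yields the identical bound.
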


\begin{lemma}
\label{lem:bbs_norm}
If for $t = 0,1,\dots,t$, it holds that
$
\sigma_{\min}(U^\top \tW^t) > 0 
$
with $\tW$ defined in Eqn.~\eqref{eq:def_bb} and $U$ being top principal components of $A$,
then we can obtain that
\begin{equation}
	\label{eq:bbs_norm}
	\sigma_{\min}(\bbS^{t+1}) 
	\ge
	\lambda_k\cdot \frac{1}{\sqrt{1+ \mL^2(\bbS^t)}}
	-
	\frac{24L}{\sqrt{m}} \norm{\left[\bbS^t\right]^\dagger}\norm{\Sb^t -  \bbS^t\otimes \mathbf{1}}.
\end{equation}
\end{lemma}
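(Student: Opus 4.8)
The plan is to regard $\bbS^{t+1}$ as a single, perturbed power-iteration step applied to $\tW^t=\orth(\bbS^t)$, and then to lower bound its smallest singular value via Weyl's inequality $\sigma_{\min}(X+E)\ge\sigma_{\min}(X)-\norm{E}_2$ together with $\norm{E}_2\le\norm{E}$. By Lemma~\ref{lem:s_g} we have $\bbS^{t+1}=\bbG^{t+1}$, and since $\bbH^{t+1}=A\bbW^t$ this gives the decomposition
\[
\bbS^{t+1} = A\tW^t + \underbrace{\left(\bbG^{t+1}-\bbH^{t+1}\right)}_{E_1} + \underbrace{A\left(\bbW^t-\tW^t\right)}_{E_2}.
\]
Thus $\sigma_{\min}(\bbS^{t+1})\ge\sigma_{\min}(A\tW^t)-\norm{E_1}-\norm{E_2}$, and it remains to extract the clean term from $\sigma_{\min}(A\tW^t)$ and to absorb $\norm{E_1}+\norm{E_2}$ into $\frac{24L}{\sqrt{m}}\norm{[\bbS^t]^\dagger}\norm{\Sb^t-\bbS^t\otimes\mathbf{1}}$.

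For the clean term I use the eigenstructure of $A$. As $A$ is symmetric PSD with top-$k$ eigenspace $U$, we have $U^\top A=\Lambda_k U^\top$ with $\Lambda_k=\diag(\lambda_1,\dots,\lambda_k)$ and $\lambda_k=\lambda_{\min}(\Lambda_k)$. Hence for any unit vector $w$,
\[
\norm{A\tW^t w}\ge\norm{U^\top A\tW^t w}=\norm{\Lambda_k U^\top\tW^t w}\ge\lambda_k\norm{U^\top\tW^t w}\ge\lambda_k\,\sigma_{\min}(U^\top\tW^t).
\]
Since $\tW^t$ is orthonormal and spans the column space of $\bbS^t$, Eqn.~\eqref{eq:theta_def_1} yields $\sigma_{\min}(U^\top\tW^t)=\cos\theta_k(U,\tW^t)=1/\sqrt{1+\tan^2\theta_k(U,\bbS^t)}=1/\sqrt{1+\mL^2(\bbS^t)}$, where the hypothesis $\sigma_{\min}(U^\top\tW^t)>0$ guarantees this is well defined and positive. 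This gives exactly $\sigma_{\min}(A\tW^t)\ge\lambda_k/\sqrt{1+\mL^2(\bbS^t)}$.

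For the perturbation, Lemma~\ref{lem:g_h} gives $\norm{E_1}\le\frac{L}{\sqrt{m}}\norm{\Wb^t-\bbW^t\otimes\mathbf{1}}$, while $\norm{E_2}\le\norm{A}_2\norm{\bbW^t-\tW^t}\le L\norm{\bbW^t-\tW^t}$ (using $\norm{A}_2\le\frac1m\sum_j\norm{A_j}_2\le L$). I then reduce both to the averaged QR error. Because $\bbW^t$ minimizes $X\mapsto\sum_j\norm{W_j^t-X}^2$, one has $\norm{\Wb^t-\bbW^t\otimes\mathbf{1}}\le\norm{\Wb^t-\tW^t\otimes\mathbf{1}}$, and by Cauchy–Schwarz $\norm{\bbW^t-\tW^t}\le\frac1m\sum_j\norm{W_j^t-\tW^t}\le\frac1{\sqrt{m}}\norm{\Wb^t-\tW^t\otimes\mathbf{1}}$. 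The crux is a QR perturbation estimate $\norm{\orth(S_j^t)-\orth(\bbS^t)}\le 12\,\norm{[\bbS^t]^\dagger}\norm{S_j^t-\bbS^t}$; squaring and summing over $j$ gives $\norm{\Wb^t-\tW^t\otimes\mathbf{1}}\le 12\,\norm{[\bbS^t]^\dagger}\norm{\Sb^t-\bbS^t\otimes\mathbf{1}}$. Feeding this into both bounds and adding yields $\norm{E_1}+\norm{E_2}\le\frac{24L}{\sqrt{m}}\norm{[\bbS^t]^\dagger}\norm{\Sb^t-\bbS^t\otimes\mathbf{1}}$, completing the estimate.

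The main obstacle is the QR perturbation inequality itself. The orthonormalization map is only Lipschitz once its columns are sign-normalized, since a raw QR factorization can flip a column's sign under arbitrarily small perturbations; this is exactly why Algorithm~\ref{alg:deepca} applies \textbf{SignAdjust}, and I would state and prove the perturbation bound for the sign-adjusted $\orth(\cdot)$, tracking that the governing condition number is $\norm{[\bbS^t]^\dagger}=1/\sigma_{\min}(\bbS^t)$ and that the constant may be taken as $12$. The hypothesis $\sigma_{\min}(U^\top\tW^s)>0$ for all $s\le t$ ensures that each $\bbS^s$ has full column rank $k$, so that $\orth(\bbS^s)$ and $[\bbS^s]^\dagger$ are well defined throughout.
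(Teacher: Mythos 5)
Your overall strategy coincides with the paper's own proof: write $\bbS^{t+1}=\bbG^{t+1}$ (Lemma~\ref{lem:s_g}), compare it with $\bbH^{t+1}=A\bbW^t$ and then with $A\tW^t$ via Weyl's inequality, lower-bound $\sigma_{\min}(A\tW^t)\ge\lambda_k\cos\theta_k(U,\tW^t)=\lambda_k/\sqrt{1+\mL^2(\bbS^t)}$, and absorb the two error terms into $\frac{24L}{\sqrt{m}}\norm{[\bbS^t]^\dagger}\norm{\Sb^t-\bbS^t\otimes\mathbf{1}}$ using Lemma~\ref{lem:g_h} together with QR perturbation. Your projection argument $\norm{A\tW^t w}\ge\norm{U^\top A\tW^t w}=\norm{\Lambda_k U^\top\tW^t w}\ge\lambda_k\norm{U^\top\tW^t w}$ is an elementary and valid replacement for the paper's appeal to Horn--Johnson, and your mean-minimization and Cauchy--Schwarz reductions of $\norm{\Wb^t-\bbW^t\otimes\mathbf{1}}$ and $\norm{\bbW^t-\tW^t}$ to $\norm{\Wb^t-\tW^t\otimes\mathbf{1}}$ are correct; they reproduce (with the same constants) the content of the paper's Lemma~\ref{lem:W_w}, which the paper simply cites at this point.

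The gap is exactly the step you yourself call the crux. The inequality $\norm{\orth(S_j^t)-\orth(\bbS^t)}\le 12\,\norm{[\bbS^t]^\dagger}\norm{S_j^t-\bbS^t}$, asserted with a uniform constant and no smallness restriction on the perturbation, is not something you can ``state and prove'': full column rank of $\bbS^t$ (which is all your hypotheses provide) does not yield any fixed-constant Lipschitz bound on the Q-factor. The tool available here, Stewart's perturbation theorem (Theorem 3.1 of \cite{stewart1977perturbation}, quoted in the paper's appendix), gives $\norm{\Delta_Q}\le 3x/(1-2x)$ with $x=\norm{A^\dagger}\norm{E}$, which blows up as $x\to 1/2$; sign adjustment does not repair this, because for moderate $x$ the Gram--Schmidt cascade can genuinely rotate the Q-directions rather than merely flip signs. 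This is precisely why the paper's Lemma~\ref{lem:W_w} carries the explicit hypothesis $\norm{[\bbS^t]^\dagger}\norm{\bbS^t-S_j^t}\le\frac{1}{4}$ (under which $3x/(1-2x)\le 6x$, giving the constant $12$ after aggregation), and why that hypothesis is then verified inductively in the proof of Lemma~\ref{lem:main} through the condition~\eqref{eq:rho_cond} imposed on $\rho$. Your argument becomes complete, and essentially identical to the paper's, once you add this smallness condition as an explicit hypothesis (or cite Lemma~\ref{lem:W_w} directly) and derive your QR claim from Stewart's theorem under it; without it, the claim ``the constant may be taken as $12$'' is unsupported, and it is the one step on which the whole estimate rests.
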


Now, we will bound the error $\norm{\Wb^t - \bbW^t\otimes\mathbf{1}} $.
\begin{lemma}
\label{lem:W_w}
Assuming that $\norm{\left[\bbS^t\right]^\dagger} \norm{\bbS^t - S_j^t} \le \frac{1}{4}$ for $j = 1,\dots, m$, where $\left[\bbS^t\right]^\dagger$ is the pseudo inverse of $\bbS^t$, then it holds that
\begin{align}
	\norm{\Wb^t - \bbW^t\otimes\mathbf{1}} 
	\le 
	12\norm{\left[\bbS^t\right]^\dagger}\norm{\Sb^t -  \bbS^t\otimes \mathbf{1}}. \label{eq:W_w}
\end{align}
Letting $\bbS^t = \tW^t \tR^t$ be the QR decomposition of $\bbS^t$, then it holds that
\begin{align}
	\norm{\tW^t - \bbW^t}
	\le 
	\frac{12}{\sqrt{m}}\norm{\left[\bbS^t\right]^\dagger}\norm{\Sb^t -  \bbS^t\otimes \mathbf{1}}. \label{eq:w_s}
\end{align}
\end{lemma}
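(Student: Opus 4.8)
Both inequalities will follow from a single per-agent orthonormalization bound: for every $j$,
\[
\norm{W_j^t - \tW^t} \;\le\; 12\,\norm{[\bbS^t]^\dagger}\,\norm{S_j^t - \bbS^t}.
\]
The plan is to establish this estimate first and then assemble \eqref{eq:W_w} and \eqref{eq:w_s} by elementary aggregation over the agents, using the identity $\sum_{j=1}^m\norm{S_j^t - \bbS^t}^2 = \norm{\Sb^t - \bbS^t\otimes\mathbf{1}}^2$ and the analogous one for $\Wb^t$.

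For the aggregation, I would handle \eqref{eq:W_w} via the variance (Pythagorean) decomposition $\sum_j\norm{W_j^t - X}^2 = \sum_j\norm{W_j^t - \bbW^t}^2 + m\norm{\bbW^t - X}^2$, valid for any $X$ because $\bbW^t = \frac1m\sum_j W_j^t$; taking $X = \tW^t$ gives $\norm{\Wb^t - \bbW^t\otimes\mathbf{1}}^2 = \sum_j\norm{W_j^t - \bbW^t}^2 \le \sum_j\norm{W_j^t - \tW^t}^2$, and the per-agent bound finishes \eqref{eq:W_w}. For \eqref{eq:w_s} I would write $\tW^t - \bbW^t = \frac1m\sum_j(\tW^t - W_j^t)$, apply the triangle inequality, and then Cauchy--Schwarz, $\sum_j\norm{\tW^t - W_j^t}\le\sqrt m\,(\sum_j\norm{\tW^t-W_j^t}^2)^{1/2}$; the leading $1/m$ combines with the $\sqrt m$ to produce the stated factor $1/\sqrt m$.

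To prove the per-agent bound, write the positive-diagonal QR factorization $\bbS^t = \tW^t\tR^t$, where $\tR^t$ is invertible because $\sigma_{\min}(\bbS^t)>0$ by Lemma~\ref{lem:bbs_norm}, and $\norm{(\tR^t)^{-1}}_2 = \norm{[\bbS^t]^\dagger}_2 \le \norm{[\bbS^t]^\dagger}$. Setting $E_j = S_j^t - \bbS^t$ and $\Delta_j = E_j(\tR^t)^{-1}$, I factor $S_j^t = (\tW^t + \Delta_j)\tR^t$ with $\norm{\Delta_j}\le\norm{[\bbS^t]^\dagger}\norm{E_j}\le\frac14$ by the standing assumption. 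Since right multiplication by an upper-triangular positive-diagonal matrix leaves the QR orthonormal factor unchanged, $W_j^t = \orth(\tW^t + \Delta_j)$, so the task reduces to bounding $\norm{\orth(\tW+\Delta)-\tW}$ for an orthonormal $\tW$ with $\norm\Delta\le\frac14$.

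For this residual bound I would pass through the Gram matrix. With $M=\tW+\Delta$ and $M = \orth(M)R_M$, one has $R_M^\top R_M = I + P$, where $P = \tW^\top\Delta + \Delta^\top\tW + \Delta^\top\Delta$ and $\norm P\le 2\norm\Delta+\norm\Delta^2$, and $\orth(M)-\tW = \tW(R_M^{-1}-I)+\Delta R_M^{-1}$, so $\norm{\orth(M)-\tW}\le\norm{R_M^{-1}-I}+\norm\Delta\,\norm{R_M^{-1}}_2$. The factor $\norm{R_M^{-1}}_2 = \lambda_{\min}(I+P)^{-1/2}\le(1-\norm P_2)^{-1/2}$ is immediate. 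The main obstacle is $\norm{R_M^{-1}-I}$: the naive estimate $\norm{R_M^{-1}-I}\le\sqrt k\,\norm{R_M^{-1}-I}_2$ would inject a spurious $\sqrt k$, and a generic Cholesky perturbation bound would inject the condition number $\sigma_{\max}(\bbS^t)/\sigma_{\min}(\bbS^t)$, either of which destroys the clean constant. I would instead write $R_M = I+N$ with $N$ upper-triangular solving $N+N^\top+N^\top N = P$, so that $N$ equals the upper-triangular part of $P$ up to quadratic terms and $\norm N$ is controlled by $\norm P$ directly, yielding a dimension-free and conditioning-free bound $\norm{R_M^{-1}-I}\le c\norm P$. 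Tracking the constants over the regime $\norm\Delta\le\frac14$ then gives $\norm{\orth(\tW+\Delta)-\tW}\le 12\norm\Delta$ (with room to spare), which completes the per-agent estimate and hence the lemma.
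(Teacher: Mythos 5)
Your overall architecture coincides with the paper's: a per-agent QR perturbation bound $\norm{W_j^t - \tW^t} \lesssim \norm{[\bbS^t]^\dagger}\norm{S_j^t - \bbS^t}$, followed by aggregation over agents. Your aggregation is in fact slightly cleaner than the paper's: the Pythagorean identity gives $\norm{\Wb^t-\bbW^t\otimes\mathbf{1}}^2 = \sum_j\norm{W_j^t-\bbW^t}^2 \le \sum_j\norm{W_j^t-\tW^t}^2$ with no loss, whereas the paper uses $\norm{a+b}^2\le 2\norm{a}^2+2\norm{b}^2$ and pays a factor of $4$ (so it needs a per-agent constant $6$ while you can afford $12$); your derivation of \eqref{eq:w_s} by triangle inequality plus Cauchy--Schwarz is also correct, and the paper does not even write that part out. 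The real difference is at the crux. The paper simply invokes Theorem 3.1 of Stewart (1977), which it quotes in the appendix, to get $\norm{W_j^t-\tW^t}\le 3\norm{[\bbS^t]^\dagger}\norm{S_j^t-\bbS^t}\big/\bigl(1-2\norm{[\bbS^t]^\dagger}\norm{S_j^t-\bbS^t}\bigr)\le 6\norm{[\bbS^t]^\dagger}\norm{S_j^t-\bbS^t}$ under the $\tfrac14$ assumption; you instead try to re-derive this bound. Your reduction steps are sound: $S_j^t=(\tW^t+\Delta_j)\tR^t$ with $\norm{\Delta_j}\le\norm{[\bbS^t]^\dagger}\norm{S_j^t-\bbS^t}$, the identity $W_j^t=\orth(\tW^t+\Delta_j)$ (valid under the positive-diagonal QR convention, since $\norm{\Delta_j}_2<1$ keeps $\tW^t+\Delta_j$ full rank), the Gram identity $R_M^\top R_M = I+P$, and the split $\orth(M)-\tW^t=\tW^t(R_M^{-1}-I)+\Delta R_M^{-1}$.

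The gap is precisely at the step you yourself flag as the main obstacle, and your proposed fix does not close it. From $N+N^\top+N^\top N=P$ with $N$ upper triangular you claim $\norm{N}\le c\norm{P}$ "directly" because $N$ is the upper-triangular part of $P$ up to quadratic terms. Making that rigorous requires an a priori bound on $\norm{N}_2$ to control $N^\top N$, and none is available in your regime. Concretely, writing $\mathrm{up}(\cdot)$ for the map sending a symmetric matrix to its upper-triangular half (with halved diagonal), one has $N=\mathrm{up}(P-N^\top N)$ and $\norm{\mathrm{up}(X)}\le\norm{X}/\sqrt{2}$, hence the self-bounding inequality $\norm{N}\le\bigl(\norm{P}+\norm{N}^2\bigr)/\sqrt{2}$, i.e.\ $\norm{N}^2-\sqrt{2}\,\norm{N}+\norm{P}\ge 0$. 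This constrains $\norm{N}$ only when the quadratic has real roots, i.e.\ only when $\norm{P}<\tfrac12$; but under your standing assumption $\norm{\Delta}\le\tfrac14$ one can have $\norm{P}=2\norm{\Delta}+\norm{\Delta}^2=\tfrac{9}{16}>\tfrac12$, in which case every value of $\norm{N}$ satisfies the inequality and the argument yields nothing. Even when $\norm{P}<\tfrac12$, you would still need a continuity argument to rule out the large-root branch. So as written the per-agent constant-$12$ bound, and hence the lemma, is unproven. The gap is localized and fixable: either cite Stewart's theorem as the paper does (your tighter aggregation then even improves the final constant), or replace the naive perturbative step by a homotopy argument --- set $L(s)=\mathrm{chol}(I+sP)$ for $s\in[0,1]$, use $\dot{L}L^{-1}=\mathrm{up}\bigl(L^{-\top}PL^{-1}\bigr)$, and integrate with $\norm{L(s)}_2\le\sqrt{1+\norm{P}_2}$ and $\norm{L(s)^{-1}}_2\le(1-\norm{P}_2)^{-1/2}$ --- which gives a genuinely dimension-free bound $\norm{N}\le c\norm{P}$ with $c$ small enough to recover your target constant with room to spare.
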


Next, we will give the convergence rate of $\bbS^t$ under the assumption that the error between local variable $S_j^t$ and its mean counterpart $\bbS^t$ is upper bounded.
\begin{lemma}
\label{lem:mL_dec}
Letting $\mL(\bbS) \triangleq \tan\theta_k(U,\bbS)$, $\gamma \triangleq 1 - \frac{\lambda_k - \lambda_{k+1}}{2\lambda_k}$ and  $\frac{1}{\sqrt{m}}\cdot \norm{\left[\bbS^t\right]^\dagger}\norm{\Sb^t -  \bbS^t\otimes \mathbf{1}}$ satisfy
\begin{align}
	\label{eq:s_ass}
	\frac{1}{\sqrt{m}} \cdot \norm{\left[\bbS^t\right]^\dagger}\norm{\Sb^t -  \bbS^t\otimes \mathbf{1}}
	\le
	\frac{
		(\lambda_k - \lambda_{k+1}) \cdot\gamma^t\cdot\mL(\bbS^0)
	}{
		24\sqrt{1+\gamma^{2t}\cdot\mL^2(\bbS^0)}\left(\lambda_{k+1}+2L + (\lambda_k+2L) \gamma^{t+1}\mL(\bbS^0)\right)
	},
\end{align}
for $t = 0,1,\dots,T$, sequence $\{\bbS^t\}$ generated by Algorithm~\ref{alg:deepca} satisfies 
\begin{equation}
	\label{eq:mL_dec}
	\mL(\bbS^t) \le \gamma^{t+1}\cdot \mL(\bbS^0).
\end{equation}
\end{lemma}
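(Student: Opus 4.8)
The plan is to read $\bbS^{t+1}$ as a single step of a \emph{noisy} power iteration and to absorb every discrepancy from an exact power step into one error matrix. By Lemma~\ref{lem:s_g}, $\bbS^{t+1}=\bbG^{t+1}$, and by the definition of $\bbH^{t+1}$ in \eqref{eq:def_bb} we have $\bbH^{t+1}=A\bbW^t$. Introducing $\tW^t=\orth(\bbS^t)$, I would write
\[
\bbS^{t+1}=A\tW^t+E,\qquad E=\bigl(\bbG^{t+1}-\bbH^{t+1}\bigr)+A\bigl(\bbW^t-\tW^t\bigr).
\]
Since $\tW^t$ is orthonormal and spans the same column space as $\bbS^t$, the angle is preserved, $\tan\theta_k(U,\tW^t)=\mL(\bbS^t)$, so the whole lemma reduces to a perturbation bound for $\tan\theta_k$ of $A\tW^t+E$.

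Next I would bound $\norm{E}_2$ purely in terms of the quantity controlled by \eqref{eq:s_ass}. From $\norm{A}_2\le\frac1m\sum_j\norm{A_j}_2\le L$ together with \eqref{eq:w_s} of Lemma~\ref{lem:W_w}, the second term obeys $\norm{A(\bbW^t-\tW^t)}_2\le L\norm{\tW^t-\bbW^t}\le 12L\cdot\frac{1}{\sqrt m}\norm{[\bbS^t]^\dagger}\norm{\Sb^t-\bbS^t\otimes\mathbf{1}}$. For the first term, Lemma~\ref{lem:g_h} and then \eqref{eq:W_w} of Lemma~\ref{lem:W_w} give $\norm{\bbG^{t+1}-\bbH^{t+1}}\le\frac{L}{\sqrt m}\norm{\Wb^t-\bbW^t\otimes\mathbf{1}}\le 12L\cdot\frac{1}{\sqrt m}\norm{[\bbS^t]^\dagger}\norm{\Sb^t-\bbS^t\otimes\mathbf{1}}$, so both contributions carry the same $\tfrac{1}{\sqrt m}$ factor appearing in \eqref{eq:s_ass}. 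Hence $\norm{E}_2\le\frac{24L}{\sqrt m}\norm{[\bbS^t]^\dagger}\norm{\Sb^t-\bbS^t\otimes\mathbf{1}}$, and \eqref{eq:s_ass} turns this into
\[
\norm{E}_2\le\frac{L(\lambda_k-\lambda_{k+1})\,\gamma^t\mL(\bbS^0)}{\sqrt{1+\gamma^{2t}\mL^2(\bbS^0)}\,\bigl(\lambda_{k+1}+2L+(\lambda_k+2L)\gamma^{t+1}\mL(\bbS^0)\bigr)}.
\]

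For the perturbation step I would use the eigendecomposition $A=U\Lambda_kU^\top+V\Lambda_{\setminus k}V^\top$ with $V=U^\perp$, $\Lambda_k=\diag(\lambda_1,\dots,\lambda_k)$ and $\Lambda_{\setminus k}$ carrying the trailing eigenvalues, so $\norm{\Lambda_{\setminus k}}_2=\lambda_{k+1}$ and $\sigma_{\min}(\Lambda_k)=\lambda_k$. Definition \eqref{eq:theta_def} yields $\tan\theta_k(U,\bbS^{t+1})\le\norm{V^\top\bbS^{t+1}}_2/\sigma_{\min}(U^\top\bbS^{t+1})$; substituting $\bbS^{t+1}=A\tW^t+E$ and using $U^\top A=\Lambda_kU^\top$, $V^\top A=\Lambda_{\setminus k}V^\top$ gives $\norm{V^\top\bbS^{t+1}}_2\le\lambda_{k+1}\sin\theta_k(U,\tW^t)+\norm{E}_2$ and $\sigma_{\min}(U^\top\bbS^{t+1})\ge\lambda_k\cos\theta_k(U,\tW^t)-\norm{E}_2$. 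Dividing numerator and denominator by $\cos\theta_k(U,\tW^t)=(1+\mL^2(\bbS^t))^{-1/2}$ produces the recursion
\[
\mL(\bbS^{t+1})\le\frac{\lambda_{k+1}\mL_t+e}{\lambda_k-e},\qquad \mL_t=\mL(\bbS^t),\quad e=\norm{E}_2\sqrt{1+\mL_t^2}.
\]

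Finally I would close by induction on $t$ with hypothesis $\mL(\bbS^t)\le M:=\gamma^t\mL(\bbS^0)$ (the base case $t=0$ is trivial). The right-hand side above is increasing in both $\mL_t$ and $e$, so I may replace $\mL_t$ by $M$ and $e$ by $e'=\norm{E}_2\sqrt{1+M^2}$; crucially the factor $\sqrt{1+M^2}$ cancels the $(1+\gamma^{2t}\mL^2(\bbS^0))^{-1/2}$ carried by the $\norm{E}_2$ bound of the second step, leaving $e'\le\frac{L(\lambda_k-\lambda_{k+1})M}{\lambda_{k+1}+2L+(\lambda_k+2L)\gamma M}$. Using the identity $\gamma\lambda_k-\lambda_{k+1}=\frac{\lambda_k-\lambda_{k+1}}{2}$, the desired $\frac{\lambda_{k+1}M+e'}{\lambda_k-e'}\le\gamma M$ is equivalent to $e'\le\frac{(\lambda_k-\lambda_{k+1})M}{2(1+\gamma M)}$, and the two bounds on $e'$ are compatible precisely when $2L(1+\gamma M)\le\lambda_{k+1}+2L+(\lambda_k+2L)\gamma M$, i.e.\ when $0\le\lambda_{k+1}+\lambda_k\gamma M$, which is immediate. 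This shows $\mL(\bbS^{t+1})\le\gamma M=\gamma^{t+1}\mL(\bbS^0)$, completing the induction and establishing \eqref{eq:mL_dec}; it also exposes why the otherwise mysterious $2L$ offsets appear in \eqref{eq:s_ass} --- they inflate the denominator just enough to swallow the coefficient $24L$ generated while bounding $\norm{E}_2$. I expect the main obstacle to be the $\tan\theta_k$ perturbation estimate itself, since the matrix actually being advanced, $\bbW^t$, is \emph{not} orthonormal; the remedy is exactly the pivot to $\tW^t$ together with the two-sided split of $E$, after which one must still verify $\sigma_{\min}(U^\top\bbS^{t+1})>0$ (equivalently $\lambda_k\cos\theta_k(U,\tW^t)>\norm{E}_2$) so that $\tan\theta_k(U,\bbS^{t+1})$ remains finite.
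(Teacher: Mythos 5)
Your proof is correct and follows essentially the same route as the paper: both view $\bbS^{t+1}=\bbG^{t+1}\approx A\bbW^t$ as a noisy power step (Lemmas~\ref{lem:s_g} and~\ref{lem:g_h}), pass to the orthonormal factor $\tW^t$ via Lemma~\ref{lem:W_w}, arrive at the recursion $\mL(\bbS^{t+1})\le\bigl(\lambda_{k+1}\mL(\bbS^t)+e\bigr)/\bigl(\lambda_k-e\bigr)$ with $e\propto\norm{\left[\bbS^t\right]^\dagger}\norm{\Sb^t-\bbS^t\otimes\mathbf{1}}\sqrt{1+\mL^2(\bbS^t)}$, and close by induction under \eqref{eq:s_ass}. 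Your bundling of the error into a single matrix $E$ with coefficient $24L$ is marginally looser than the paper's split coefficients $12(\lambda_{k+1}+L)$ and $12(\lambda_k+L)$, but since $\lambda_k,\lambda_{k+1}\le L$ the same assumption still absorbs it, and your explicit closing algebra (reducing to $0\le\lambda_{k+1}+\lambda_k\gamma M$) is precisely the step the paper leaves implicit.
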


Finally, we will bound the difference between $\Wb^t$ and $\Wb^{t-1}$.

\begin{lemma}
\label{lem:W-W}
Letting $\Wb$ be defined in Eqn.~\eqref{eq:def_bb} and $\mL(\bbS) \triangleq \tan\theta_k(U,\bbS)$, then it holds that
\begin{equation}
	\label{eq:W-W}
	\begin{aligned}
		\norm{\Wb^t - \Wb^{t-1}}
		\le&
		24\left(
		\norm{\left[\bbS^t\right]^\dagger}\norm{\Sb^t -  \bbS^t\otimes \mathbf{1}}
		+
		\norm{\left[\bbS^{t-1}\right]^\dagger}\norm{\Sb^{t-1} -  \bbS^{t-1}\otimes \mathbf{1}}
		\right)
		\\&
		+
		\sqrt{mk}\cdot \left(
		\mL(\bbS^t )
		+ 
		\mL(\bbS^{t-1})
		\right).
	\end{aligned}	
\end{equation}
\end{lemma}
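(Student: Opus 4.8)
The plan is to prove the bound by inserting the orthonormalized mean iterate $\tW^t = \orth(\bbS^t)$, broadcast to all $m$ slices, as an intermediate point and splitting with the triangle inequality:
\begin{align*}
\norm{\Wb^t - \Wb^{t-1}}
\le
\norm{\Wb^t - \tW^t\otimes\mathbf{1}}
+
\sqrt{m}\,\norm{\tW^t - \tW^{t-1}}
+
\norm{\tW^{t-1}\otimes\mathbf{1} - \Wb^{t-1}},
\end{align*}
where I used $\norm{\tW^t\otimes\mathbf{1} - \tW^{t-1}\otimes\mathbf{1}} = \sqrt{m}\,\norm{\tW^t - \tW^{t-1}}$, since tensoring a $d\times k$ matrix with $\mathbf{1}$ merely replicates it across the $m$ agents.

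First I would dispatch the two consensus terms. Writing $\norm{\Wb^t - \tW^t\otimes\mathbf{1}} \le \norm{\Wb^t - \bbW^t\otimes\mathbf{1}} + \sqrt{m}\,\norm{\bbW^t - \tW^t}$ and applying the two estimates of Lemma~\ref{lem:W_w} — Eqn.~\eqref{eq:W_w} to the first summand and Eqn.~\eqref{eq:w_s} (times $\sqrt{m}$) to the second — each summand is at most $12\norm{\left[\bbS^t\right]^\dagger}\norm{\Sb^t - \bbS^t\otimes\mathbf{1}}$, so $\norm{\Wb^t - \tW^t\otimes\mathbf{1}} \le 24\norm{\left[\bbS^t\right]^\dagger}\norm{\Sb^t - \bbS^t\otimes\mathbf{1}}$, and identically for the $t-1$ index. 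This reproduces the first line of the claim, so it remains to show $\norm{\tW^t - \tW^{t-1}} \le \sqrt{k}\left(\mL(\bbS^t) + \mL(\bbS^{t-1})\right)$.

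To bound the difference of the orthonormalized mean iterates I would decompose along the optimal subspace and its complement using $I = UU^\top + VV^\top$ with $V = U^\perp$, so that $\norm{\tW^t - \tW^{t-1}}^2 = \norm{U^\top(\tW^t - \tW^{t-1})}^2 + \norm{V^\top(\tW^t - \tW^{t-1})}^2$. The component orthogonal to $U$ is the easy part: since $\tW^t = \orth(\bbS^t)$ spans the same column space as $\bbS^t$, we have $\norm{V^\top\tW^t}_2 = \sin\theta_k(U,\tW^t) = \sin\theta_k(U,\bbS^t) \le \tan\theta_k(U,\bbS^t) = \mL(\bbS^t)$, hence in Frobenius norm $\norm{V^\top\tW^t} \le \sqrt{k}\,\mL(\bbS^t)$, and the triangle inequality gives $\norm{V^\top(\tW^t - \tW^{t-1})} \le \sqrt{k}(\mL(\bbS^t) + \mL(\bbS^{t-1}))$, matching the target on its own.

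The hard part will be the in-subspace component $\norm{U^\top(\tW^t - \tW^{t-1})}$, which is governed not by the subspace angles but by the \emph{basis alignment} of the orthonormal factors $\tW^t,\tW^{t-1}$: two orthonormal bases of one subspace differ by an orthogonal matrix that is invisible to $\tan\theta_k$. Writing $M_t \triangleq U^\top\tW^t$, orthonormality of $\tW^t$ gives $M_t^\top M_t + (V^\top\tW^t)^\top(V^\top\tW^t) = I$, so $M_t^\top M_t = I - \cO(\sin^2\theta)$; this pins only the symmetric part of $M_t$ and leaves its orthogonal (rotational) factor unconstrained by the angle alone. The plan is to show this rotational factor does not drift between consecutive iterations by exploiting the power-iteration structure $\bbS^{t+1}\approx A\bbW^t$ (Lemma~\ref{lem:s_g} and Lemma~\ref{lem:g_h}) together with the fixed \texttt{SignAdjust} reference, so that $\norm{U^\top(\tW^t - \tW^{t-1})}$ becomes higher order in the angles and is absorbed into the first-order term supplied by the orthogonal component. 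Turning this alignment heuristic into a rigorous estimate — rather than merely invoking Lipschitz continuity of the QR map on well-conditioned inputs — is where I expect the main difficulty to lie.
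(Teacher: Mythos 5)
Your first half is sound and matches the paper's accounting: splitting off the consensus errors and applying both estimates of Lemma~\ref{lem:W_w} (Eqn.~\eqref{eq:W_w} for $\norm{\Wb^t - \bbW^t\otimes\mathbf{1}}$ and Eqn.~\eqref{eq:w_s} for $\sqrt{m}\norm{\bbW^t - \tW^t}$) produces exactly the coefficient $24$ per time index, and reduces the lemma to bounding $\sqrt{m}\norm{\tW^t - \tW^{t-1}}$. The paper routes this through $\norm{\bbW^t - \bbW^{t-1}}$ rather than through $\tW^t\otimes\mathbf{1}$, but the bookkeeping is identical.

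The second half, however, is a genuine gap, and you named it yourself: after the orthogonal decomposition $\norm{\tW^t-\tW^{t-1}}^2 = \norm{U^\top(\tW^t-\tW^{t-1})}^2 + \norm{V^\top(\tW^t-\tW^{t-1})}^2$, only the $V$-component is controlled by the angles; the $U$-component (the relative rotation of the two orthonormal bases inside the near-invariant subspace) is left as a ``plan'' to be extracted from the power-iteration structure, and that plan is never carried out. Indeed your intermediate target $\norm{\tW^t - \tW^{t-1}} \le \sqrt{k}\left(\mL(\bbS^t)+\mL(\bbS^{t-1})\right)$ is false on angle information alone: if $\bbS^t$ and $\bbS^{t-1}$ both span the column space of $U$ exactly but their Q-factors differ by a nontrivial rotation, the right-hand side vanishes while the left-hand side does not, so some alignment mechanism is mandatory. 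The paper's mechanism is different from (and far more direct than) your proposed route through $\bbS^{t+1}\approx A\bbW^t$: it never compares $\tW^t$ with $\tW^{t-1}$ at all. Instead it anchors both to the fixed reference $U$, writing $\norm{\bbW^t - \bbW^{t-1}} \le \norm{\tW^t - U} + \norm{\tW^{t-1} - U} + \mbox{(consensus terms)}$, and uses the \texttt{SignAdjust} step to justify choosing the representative $U$ whose columns share directions with the iterates; with that sign alignment it claims $\dotprod{\tW^t, U} = \sum_{i=1}^k U(:,i)^\top\tW^t(:,i) \ge k\,\sigma_{\min}(U^\top\tW^t)$, whence
\begin{align*}
\norm{\tW^t - U}^2 = 2k - 2\dotprod{\tW^t, U} \le 2k\left(1 - \cos\theta_k(U,\tW^t)\right) \le k\,\mL^2(\bbS^t),
\end{align*}
using $1-\cos\theta \le \tfrac12\tan^2\theta$. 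This anchor-plus-sign-convention argument is the missing ingredient in your write-up: it converts angle decay into decay of the basis difference itself, covering both your $U$- and $V$-components at once and making the orthogonal splitting unnecessary. Your instinct that this is the delicate point of the lemma is correct --- even the paper's inequality $\dotprod{\tW^t,U}\ge k\,\sigma_{\min}(U^\top\tW^t)$ leans entirely on the directional alignment enforced by \texttt{SignAdjust} --- but as submitted, your proposal proves only the easy component and defers the hard one to a heuristic.
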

\subsection{Proof of Main Results}

Using lemmas in previous subsection, we can prove Lemma~\ref{lem:main} and Theorem~\ref{thm:main} as follows.

\begin{proof}[Proof of Lemma~\ref{lem:main}]
We prove the result by induction.
When $t=0$, Eqn.~\eqref{eq:s_ass} holds since each agent shares the same initialization. 
This implies that $\mL(\bbS^1) \le \gamma \cdot \mL(\bbS^0)$.

Now, we assume that Eqn.~\eqref{eq:s_ass} and~\eqref{eq:mL_dec} hold for $t = 0,\dots,T$.
In this case, for $t = 1,\dots, T$, it holds that
\begin{align*}
	\mL(\bbS^t) \le \gamma^t \cdot \mL(\bbS^0),
\end{align*}
and
\begin{align*}
	\frac{1}{\sqrt{m}} \cdot \norm{\left[\bbS^t\right]^\dagger}\norm{\Sb^t -  \bbS^t\otimes \mathbf{1}}
	\le&
	\frac{
		(\lambda_k - \lambda_{k+1}) \cdot\gamma^t\cdot\mL(\bbS^0)
	}{
		24\sqrt{1+\gamma^{2t}\cdot\mL^2(\bbS^0)}\left(\lambda_{k+1}+2L + (\lambda_k+2L) \gamma^{t+1}\mL(\bbS^0)\right)
	}
	\\
	\le&
	\frac{
		(\lambda_k - \lambda_{k+1}) 
	}{
		24(\lambda_{k+1} + 2L)
	}
	\cdot\gamma^t\cdot\mL(\bbS^0). \label{eq:ss_ass}
\end{align*}
We will show that the result holds for $t = T+1$ and we only need to prove Eqn.~\eqref{eq:s_ass} will hold for $t = T+1$.
First, by Eqn.~\eqref{eq:S_s}, we have
\begin{align*}
	&\norm{\Sb^{t+1} - \bbS^{t+1}\otimes \mathbf{1}}
	\\
	\overset{\eqref{eq:S_s}}{\le}&
	\rho 
	\norm{\Sb^t - \bbS^t\otimes \mathbf{1}}
	+
	\rho L\norm{\Wb^t - \Wb^{t-1}}
	\\
	\overset{\eqref{eq:W-W}}{\le}&
	\rho 
	\norm{\Sb^t - \bbS^t\otimes \mathbf{1}}
	+
	\rho L \sqrt{mk} \cdot \left(\mL(\bbS^t) + \mL(\bbS^{t-1})\right)
	\\&
	+ 24\rho L\left(
	\norm{\left[\bbS^t\right]^\dagger}\norm{\Sb^t -  \bbS^t\otimes \mathbf{1}}
	+
	\norm{\left[\bbS^{t-1}\right]^\dagger}\norm{\Sb^{t-1} -  \bbS^{t-1}\otimes \mathbf{1}}
	\right)
	\\
	\overset{\eqref{eq:ss_ass}}{\le}&
	\rho 
	\norm{\Sb^t - \bbS^t\otimes \mathbf{1}}
	+
	\rho L \sqrt{mk} \cdot \left(\mL(\bbS^t) + \mL(\bbS^{t-1})\right)
	+
	24\rho L \sqrt{m} \cdot \frac{\lambda_k - \lambda_{k+1}}{24(\lambda_{k+1} + 2L)} \left(\gamma^t + \gamma^{t-1}\right) \cdot  \mL(\bbS^0)
	\\
	\le&
	\rho \norm{\Sb^t - \bbS^t\otimes \mathbf{1}}
	+
	2\rho L \sqrt{m} (\sqrt{k}+1) \gamma^{t-1} \cdot \mL(\bbS^0),
\end{align*}
which implies that
\begin{align*}
	\frac{1}{\sqrt{m}} \norm{\Sb^{t+1} - \bbS^{t+1}\otimes \mathbf{1}}
	\le
	\rho \cdot \frac{1}{\sqrt{m}}\norm{\Sb^t - \bbS^t\otimes \mathbf{1}}
	+
	2\rho L(\sqrt{k}+1) \gamma^{t-1} \cdot \mL(\bbS^0).
\end{align*}
Using above equation recursively, we can obtain that
\begin{align*}
	\frac{1}{\sqrt{m}} \norm{\Sb^{T+1} - \bbS^{T+1}\otimes \mathbf{1}}
	\le& 
	\rho^{T+1} \cdot \frac{1}{\sqrt{m}}\norm{\Sb^0 - \bbS^0\otimes \mathbf{1}}
	+
	2\rho L (\sqrt{k}+1) \cdot\mL(\bbS^0) \sum_{i=1}^{T}\rho^{T-i} \gamma^i
	\\
	=&
	2\rho L (\sqrt{k}+1) \cdot\mL(\bbS^0) \cdot \frac{\gamma^T-\gamma\rho^T}{\gamma - \rho}
	\\
	\le&
	4\rho L(\sqrt{k}+1) \gamma^{T-1} \cdot \mL(\bbS^0),
\end{align*}
where the first equality is because each agent shares the same initialization and last inequality is because of the assumption that $\rho  \le \frac{\gamma}{2}$.  

Furthermore, we have
\begin{align*}
	\sigma_{\min}(\bbS^{T+1})
	\overset{\eqref{eq:bbs_norm}}{\ge}&
	\frac{\lambda_k}{\sqrt{1+\mL^2(\bbS^T)}} 
	-
	\frac{24L}{\sqrt{m}} \norm{\left[\bbS^T\right]^\dagger}\norm{\Sb^T -  \bbS^T\otimes \mathbf{1}}
	\\
	\overset{\eqref{eq:s_ass}}{\ge}&
	\frac{\lambda_k}{\sqrt{1+\gamma^{2T} \cdot \mL^2(\bbS^0)}}
	-
	\frac{L(\lambda_k - \lambda_{k+1})\gamma^T\cdot \mL(\bbS^0)}{\sqrt{1+\gamma^{2T}\cdot \mL^2(\bbS^0)}(\lambda_{k+1} + 2L + (\lambda_k+2L) \gamma^{T+1}\cdot\mL(\bbS^0))}
	\\
	=&\frac{\lambda_k\lambda_{k+1} + 2L\lambda_k + \left(\frac{\lambda_k(\lambda_k+\lambda_{k+1})}{2} + 2L\lambda_{k+1}\right) \gamma^T\cdot\mL(\bbS^0)}{\sqrt{1+\gamma^{2T}\cdot \mL^2(\bbS^0)}(\lambda_{k+1} + 2L + (\lambda_k+2L) \gamma^{T+1}\cdot\mL(\bbS^0))}.
\end{align*}
Therefore, we can obtain that
\begin{align*}
	&\frac{1}{\sqrt{m}}\norm{\left[\bbS^{T+1}\right]^\dagger}\norm{\Sb^{T+1} -  \bbS^{T+1}\otimes \mathbf{1}}
	\\
	\le&
	k \cdot \frac{
		\sqrt{1+\gamma^{2T}\cdot \mL^2(\bbS^0)}(\lambda_{k+1} + 2L + (\lambda_k+2L) \gamma^{T+1}\cdot\mL(\bbS^0))
	}{\lambda_k\lambda_{k+1} + 2L\lambda_k + \left(\frac{\lambda_k(\lambda_k+\lambda_{k+1})}{2} + 2L\lambda_{k+1}\right) \gamma^T\cdot\mL(\bbS^0)}
	\cdot
	4\rho L(\sqrt{k}+1) \gamma^{T-1} \cdot \mL(\bbS^0).
\end{align*}
First, we need to satisfy the condition in Lemma~\ref{lem:W_w}, that is,
\begin{align*}
	\norm{\left[\bbS^{T+1}\right]^\dagger}\norm{S_j^{T+1} -  \bbS^{T+1}}
	\le
	\norm{\left[\bbS^{T+1}\right]^\dagger}\norm{\Sb^{T+1} -  \bbS^{T+1}\otimes \mathbf{1}}
	\le\frac{1}{4}.
\end{align*}
Therefore, $\rho$ only needs
\begin{align*}
	\rho 
	\le
	\frac{1}{16Lk(\sqrt{k}+1)\sqrt{m}\gamma^{T-1}\cdot\mL(\bbS^0)} 
	\cdot
	\frac{
		\lambda_k\lambda_{k+1} + 2L\lambda_k + \left(\frac{\lambda_k(\lambda_k+\lambda_{k+1})}{2} + 2L\lambda_{k+1}\right) \gamma^T\cdot\mL(\bbS^0)
	}{
		\sqrt{1+\gamma^{2T}\cdot \mL^2(\bbS^0)}(\lambda_{k+1} + 2L + (\lambda_k+2L) \gamma^{T+1}\cdot\mL(\bbS^0))
	}.
\end{align*} 
To simplify above equation, we only require $\rho$ to be
\begin{align*}
	\rho 
	\le
	\frac{\lambda_k\lambda_{k+1} + 2L\lambda_k}
	{16Lk(\sqrt{k}+1)\sqrt{m}\gamma^{T-1}\cdot\mL(\bbS^0)
		\cdot
		\sqrt{1+\gamma^{2T}\cdot \mL^2(\bbS^0)}\left(\lambda_{k+1} + 2L + (\lambda_k+2L) \gamma^{T+1}\cdot\mL(\bbS^0)\right)
	} .
\end{align*}
To satisfy Eqn.~\eqref{eq:s_ass} for $t = T+1$, $\rho$ only needs to satisfy
\begin{align*}
	\rho
	\le&
	\frac{
		(\lambda_k - \lambda_{k+1}) \cdot\gamma^2
	}{
		96kL(\sqrt{k}+1)\sqrt{1+\gamma^{2(T+1)}\cdot\mL^2(\bbS^0)}\left(\lambda_{k+1}+2L + (\lambda_k+2L) \gamma^{T+2}\mL(\bbS^0)\right)
	}
	\\
	&\cdot
	\frac{
		\lambda_k\lambda_{k+1} + 2L\lambda_k + \left(\frac{\lambda_k(\lambda_k+\lambda_{k+1})}{2} + 2L\lambda_{k+1}\right) \gamma^T\cdot\mL(\bbS^0)
	}{
		\sqrt{1+\gamma^{2T}\cdot \mL^2(\bbS^0)}(\lambda_{k+1} + 2L + (\lambda_k+2L) \gamma^{T+1}\cdot\mL(\bbS^0))
	}.
\end{align*}
To simplify above equation, we only require $\rho$ to be
\begin{align*}
	\rho 
	\le
	\frac{
		(\lambda_k - \lambda_{k+1})(\lambda_k\lambda_{k+1} + 2L\lambda_{k+1})\cdot\gamma^2
	}{
		96kL(\sqrt{k}+1) \left(1+\gamma^{2T}\cdot\mL^2(\bbS^0)\right)\left(\lambda_{k+1} + 2L + (\lambda_k+2L) \gamma^{T+1}\cdot\mL(\bbS^0)\right)^2
	}.
\end{align*}
Since Eqn.~\eqref{eq:s_ass} holds for $t = T+1$ when $\rho$ satisfies the condition~\eqref{eq:rho_cond}, then Eqn.~\eqref{eq:mL_dec} also holds for $t = T+1$.
This concludes the proof.
\end{proof}

Using the results of Lemma~\ref{lem:main}, we can prove Theorem~\ref{thm:main} as follows.
\begin{proof}[Proof of Theorem~\ref{thm:main}]
First, by Eqn.~\eqref{eq:W_w}, Eqn.~\eqref{eq:sss_ass}, and the condition that $T \ge \frac{2\lambda_k}{\lambda_k - \lambda_{k+1}}\log \frac{4(\lambda_k+2L)\tan\theta_k(U, W^0)}{\sqrt{m} (\lambda_k - \lambda_{k+1})\epsilon}$, we can obtain that
\begin{align*}
	\norm{\Wb^T - \bbW^T\otimes\mathbf{1}} 
	\le
	\sqrt{m} \cdot \frac{\lambda_k - \lambda_{k+1}}{2(\lambda_{k+1} + 2L)}\cdot \gamma^T\cdot\mL(\bbS^0) \le \frac{\epsilon}{2}.
\end{align*}
Similarly, we can obtain that $\tan\theta_k(U,\bbS^T) \le \frac{\epsilon}{4}$.
Thus, we can obtain the results in Eqn.~\eqref{eq:ww}.

Furthermore, by the definition of angels between two subspaces, we have
\begin{align*}
	\tan\theta_k(U, W_j^t) 
	\overset{\eqref{eq:theta_def}}{=}&
	\max_{\norm{w} = 1}\frac{\norm{V^\top W_j^T w}}{\norm{U^\top W_j^T w}}
	\\
	\le&
	\max_{\norm{w} = 1}\frac{\norm{V^\top \bbW^T w} + \norm{W_j^T - \bbW^T}}{\norm{U^\top \bbW^T w} - \norm{W_j^T - \bbW^T}}
	\\
	\le&
	\max_{\norm{w} = 1}\frac{\norm{V^\top \tW^T w} + \norm{\tW^T - \bbW^T}+ \norm{W_j^T - \bbW^T}}{\norm{U^\top \tW^T w} - \norm{\tW^T - \bbW^T} - \norm{W_j^T - \bbW^T}}
	\\
	\overset{\eqref{eq:W_w},\eqref{eq:w_s}}{\le}&
	\max_{\norm{w} = 1}\frac{\norm{V^\top \tW^T w} + 24\norm{\left[\bbS^T\right]^\dagger}\norm{\Sb^T -  \bbS^T\otimes \mathbf{1}}}{\norm{U^\top \tW^T w} -24\norm{\left[\bbS^T\right]^\dagger}\norm{\Sb^T -  \bbS^T\otimes \mathbf{1}}}
	\\
	=&
	\frac{
		\tan\theta_k(U,\tW^T) + 24\norm{\left[\bbS^T\right]^\dagger}\norm{\Sb^T -  \bbS^T\otimes \mathbf{1}} / \cos\theta_k(U,\tW^T)}{
		1 - 24 \norm{\left[\bbS^T\right]^\dagger}\norm{\Sb^T -  \bbS^T\otimes \mathbf{1}} / \cos\theta_k(U,\tW^T)}
	\\
	\overset{\eqref{eq:main_dec},\eqref{eq:sss_ass}}{\le}&
	\frac{
		\gamma^T\cdot \mL(\bbS^0) + \sqrt{m}\cdot\frac{\lambda_k - \lambda_{k+1}}{\lambda_k + 2L} \cdot \gamma^T\cdot\mL(\bbS^0) \cdot \sqrt{1+\gamma^{2T}\cdot \mL^2(\bbS^0)}
	}{
		1 - \sqrt{m}\cdot\frac{\lambda_k - \lambda_{k+1}}{\lambda_k + 2L} \cdot \gamma^T\cdot\mL(\bbS^0) \cdot \sqrt{1+\gamma^{2T}\cdot \mL^2(\bbS^0)}
	}
	\\
	=& 
	\frac{
		\gamma^T\cdot \tan\theta_k(U,W^0) + \sqrt{m}\cdot\frac{\lambda_k - \lambda_{k+1}}{\lambda_k + 2L} \cdot \gamma^T\cdot\tan\theta_k(U,W^0) \cdot \sqrt{1+\gamma^{2T}\cdot \tan^2\theta_k(U,W^0)}
	}{
		1 - \sqrt{m}\cdot\frac{\lambda_k - \lambda_{k+1}}{\lambda_k + 2L} \cdot \gamma^T\cdot\mL(\bbS^0) \cdot \sqrt{1+\gamma^{2T}\cdot \mL^2(\bbS^0)}
	}
\end{align*}
Since $T = \frac{2\lambda_k}{\lambda_k - \lambda_{k+1}}\log \frac{4\tan\theta_k(U, W^0)}{\epsilon}$, it holds that
$
\gamma^T\cdot \tan\theta_k(U,W^0) 
\le
\frac{\epsilon}{4}.
$
Furthermore, when $T = \frac{2\lambda_k}{\lambda_k - \lambda_{k+1}}\log \frac{4(\lambda_k+2L)\tan\theta_k(U, W^0)}{\sqrt{m} (\lambda_k - \lambda_{k+1})\epsilon}$, 
it holds that $
\sqrt{m}\cdot\frac{\lambda_k - \lambda_{k+1}}{\lambda_k + 2L} \cdot \gamma^T\cdot\tan\theta_k(U,W^0)
\le
\frac{\epsilon}{4}.
$
Thus, when $\epsilon < 1$, we can obtain that
\begin{align*}
	\tan\theta_k(U, W_j^T) 
	\le
	\frac{
		\epsilon/4+\epsilon/4 \cdot \sqrt{1+1/4^2}
	}{
		1- 1/4\cdot \sqrt{1 + 1/4^2}}
	<
	\epsilon.
\end{align*}

Since the right hand of Eqn.~\ref{eq:rho_cond} is monotone deceasing as $t$ increases, $\rho$ only satisfies that
\begin{equation*}
	\begin{aligned}
		\rho \le& \min\left\{
		\frac{
			(\lambda_k - \lambda_{k+1})(\lambda_k\lambda_{k+1} + 2L\lambda_{k+1})\cdot\gamma^2
		}{
			96kL(\sqrt{k}+1) \left(1+\mL^2(\bbS^0)\right)\left(\lambda_{k+1} + 2L + (\lambda_k+2L) \cdot\mL(\bbS^0)\right)^2
		},
		\right.
		\\&
		\left.
		\frac{\lambda_k\lambda_{k+1} + 2L\lambda_k}
		{16Lk(\sqrt{k}+1)\sqrt{m}\cdot\mL(\bbS^0)
			\cdot
			\sqrt{1+ \mL^2(\bbS^0)}\left(\lambda_{k+1} + 2L + (\lambda_k+2L) \cdot\mL(\bbS^0)\right)
		}
		\right\}.
	\end{aligned}
\end{equation*}
Furthermore, $\rho$ only requires to satisfy
\begin{align}
	\label{eq:rho_co}
	\rho 
	\le& 
	\frac{
		(\lambda_k - \lambda_{k+1})(\lambda_k\lambda_{k+1} + 2L\lambda_{k+1})\cdot\gamma^2
	}{
		96kL(\sqrt{k}+1) \left(1+\mL^2(\bbS^0)\right)\left(\lambda_k + 2L + (\lambda_k+2L) \cdot\mL(\bbS^0)\right)^2
	}
\end{align}
Replacing the definition of $\mL(\bbS^0)$ and Proposition~\ref{lem:mix_eq}, we can obtain if $K$ satisfies that
\begin{align*}
	K \le 
	\frac{1}{\sqrt{1 - \lambda_2(\mathbf{L})}} 
	\cdot \log \frac{96kL(\sqrt{k}+1) (\lambda_k+2L)\left(1+\tan\theta_k(U,W^0)\right)^4 }{\lambda_{k+1}(\lambda_k - \lambda_{k+1})\cdot\left(1 - \frac{\lambda_k - \lambda_{k+1}}{2\lambda_k}\right)^2},
\end{align*}
the requirement of $\rho$ in Eqn.~\eqref{eq:rho_co} is satisfied.
Combining with iteration complexity, we can obtain the total communication complexity
\begin{align*}
	C =& T\times K = \frac{2\lambda_k}{(\lambda_k - \lambda_{k+1})\sqrt{1 - \lambda_2(\mathbf{L})}} \cdot \max\left\{ \log\frac{4\tan\theta_k(U,W^0)}{\epsilon}, \log \frac{4(\lambda_k+2L)\tan\theta_k(U, W^0)}{\sqrt{m} (\lambda_k - \lambda_{k+1})\epsilon}\right\}
	\\
	&\cdot\log \frac{96kL(\sqrt{k}+1) (\lambda_k+2L)\left(1+\tan\theta_k(U,W^0)\right)^4}{\lambda_{k+1}(\lambda_k - \lambda_{k+1})\cdot\left(1 - \frac{\lambda_k - \lambda_{k+1}}{2\lambda_k}\right)^2}.
\end{align*}
\end{proof}

\begin{figure*}[th!]
\subfigtopskip = 0pt
\begin{center}
	\centering
	\subfigure[$\norm{\Sb - \bbS\otimes \mathbf{1}}$ with $K = 3$]{\includegraphics[width=50mm]{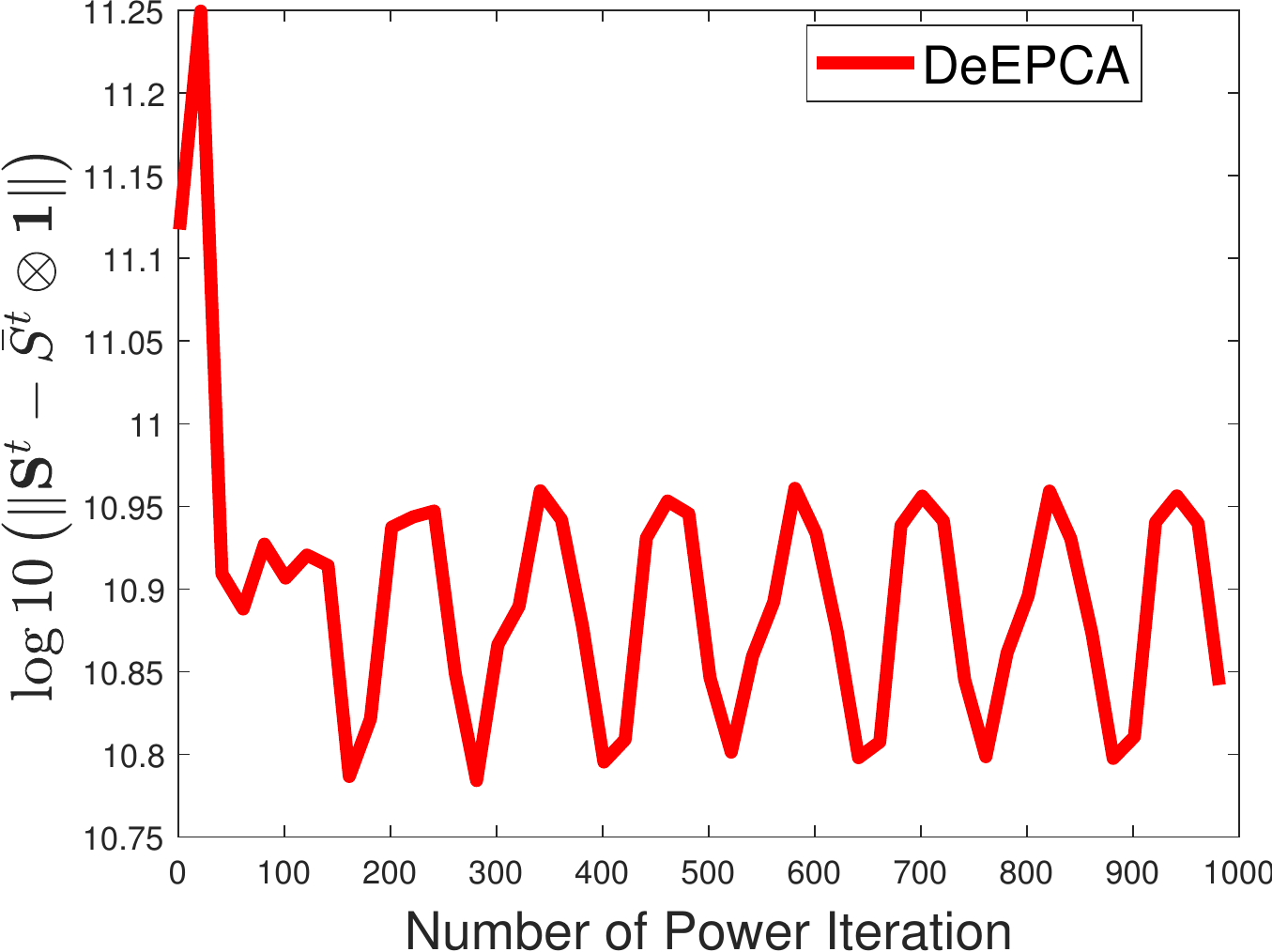}}~
	\subfigure[$\norm{\Wb - \bbW\otimes \mathbf{1}}$ with $K=3$]{\includegraphics[width=50mm]{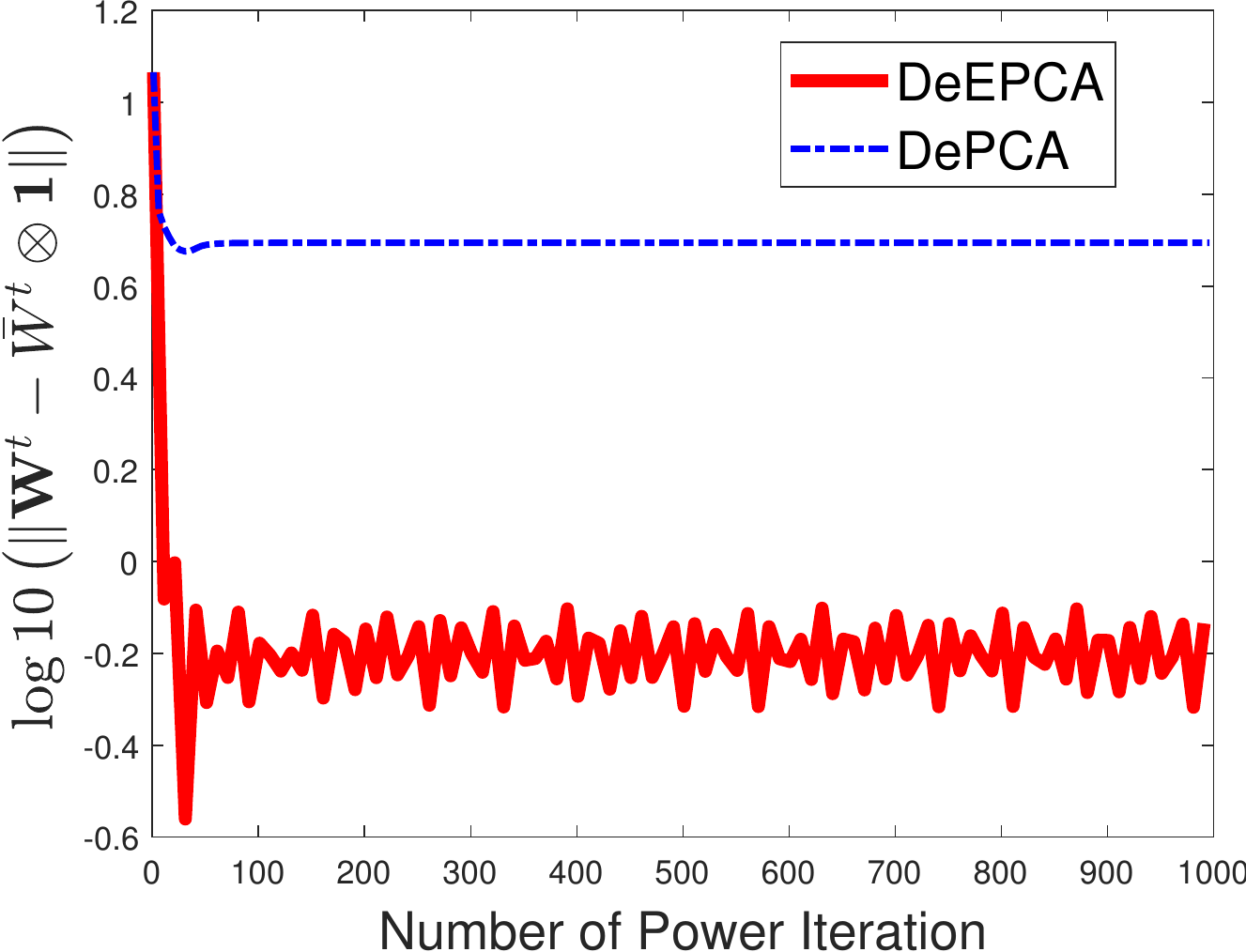}}~
	\subfigure[$\tan\theta_k(U, W)$ with $K=3$]{\includegraphics[width=50mm]{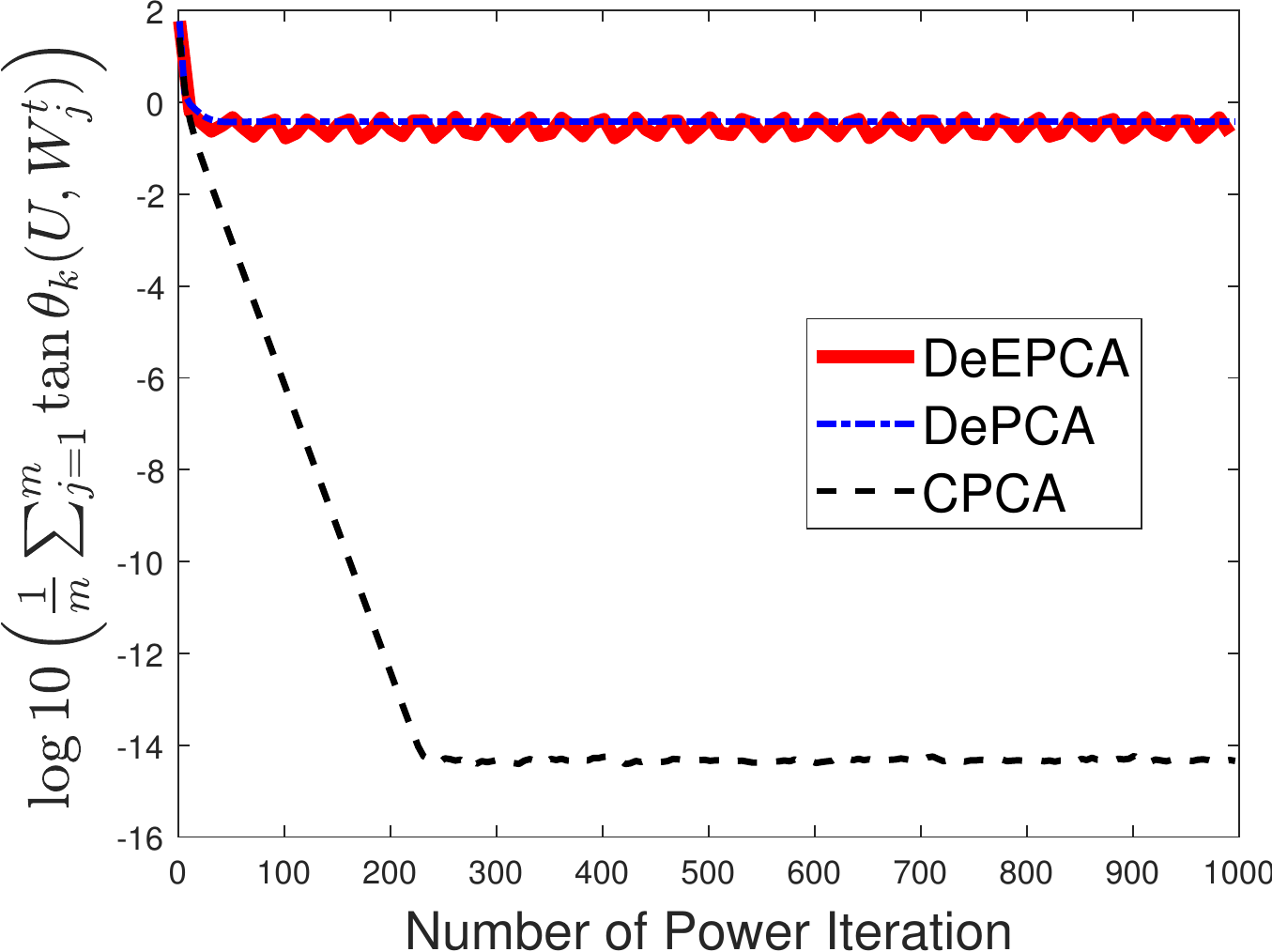}}
	\\
	\subfigure[$\norm{\Sb - \bbS\otimes \mathbf{1}}$ with $K = 5$]{\includegraphics[width=50mm]{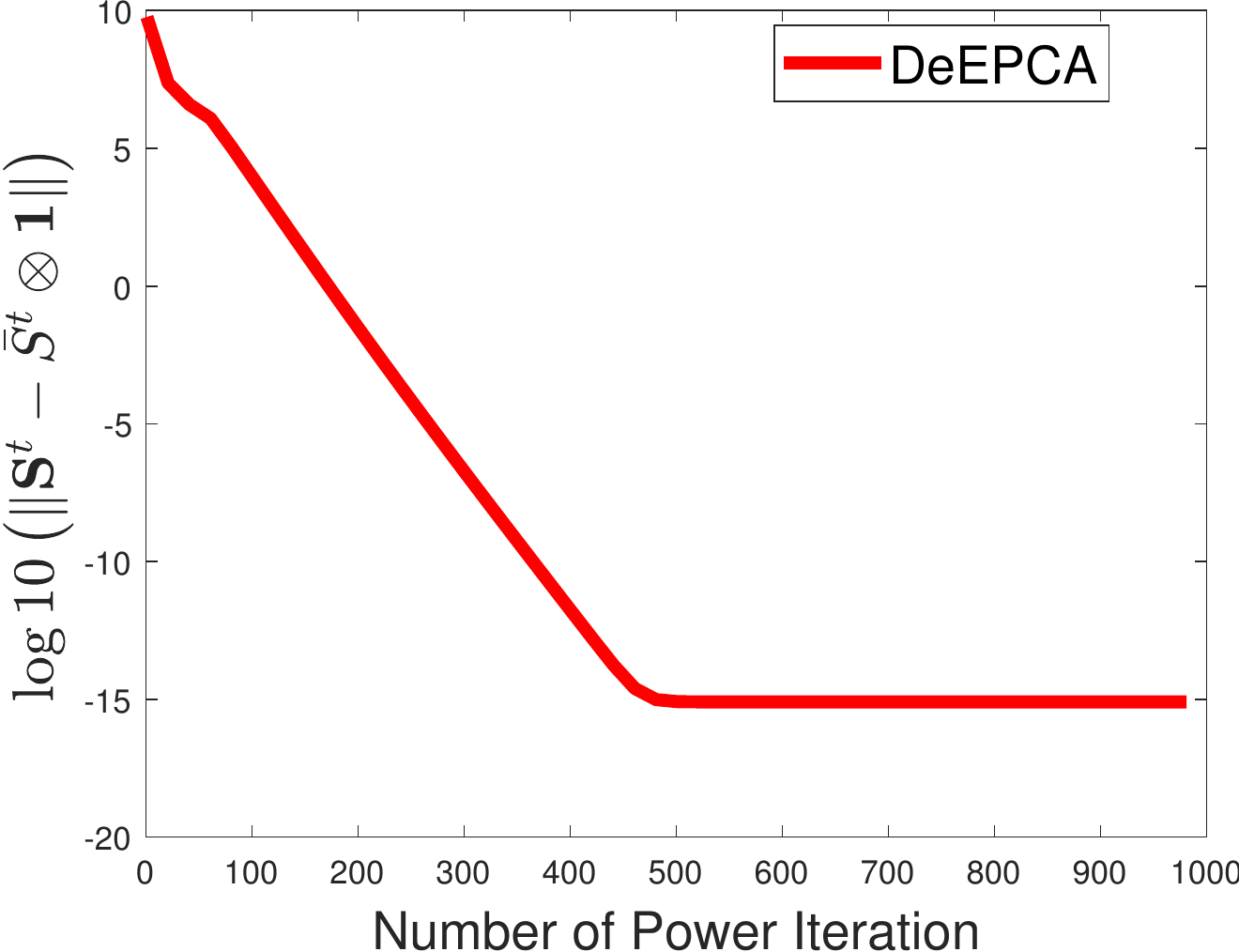}}~
	\subfigure[$\norm{\Wb - \bbW\otimes \mathbf{1}}$ with $K=5$]{\includegraphics[width=50mm]{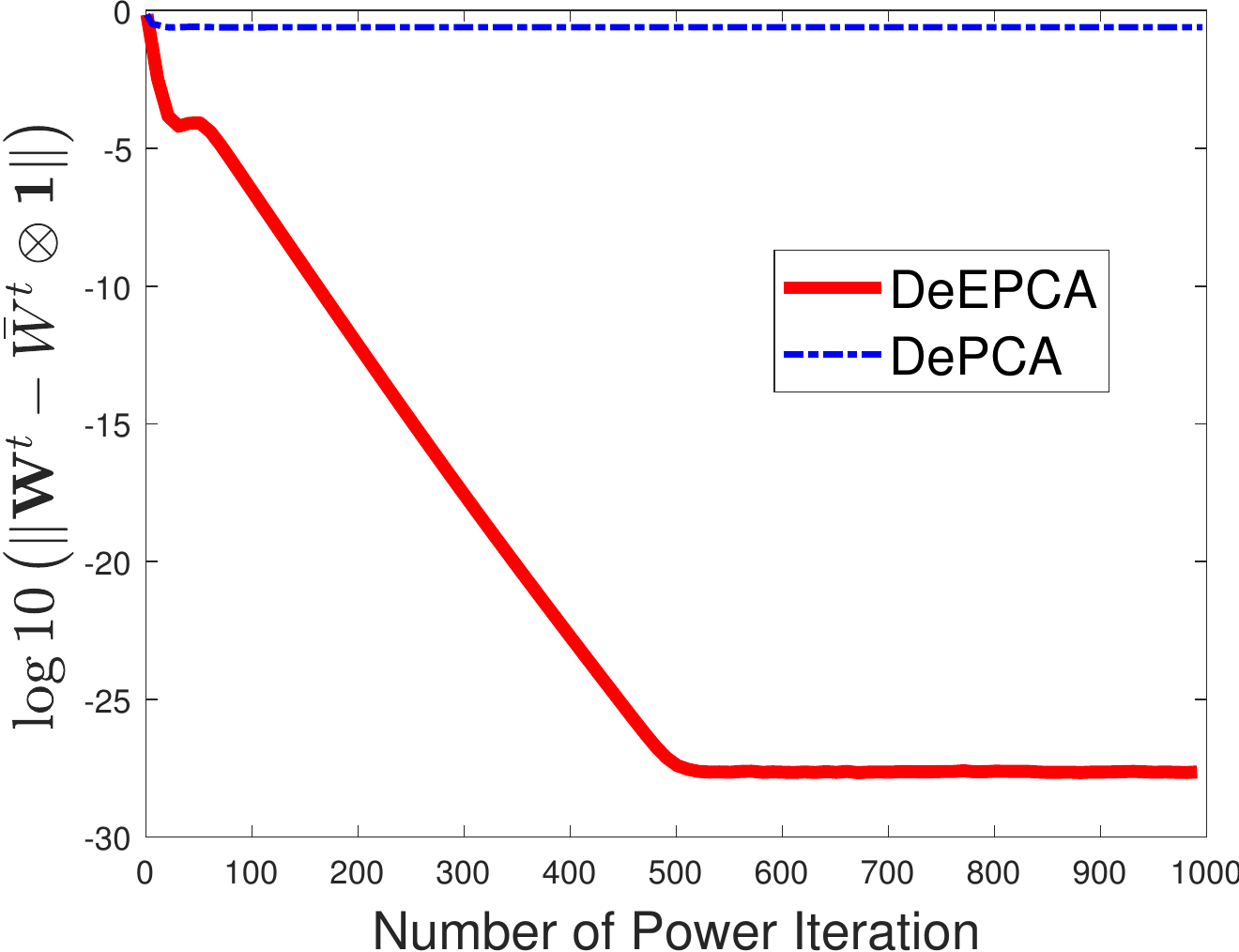}}~
	\subfigure[$\tan\theta_k(U, W)$ with $K=5$]{\includegraphics[width=50mm]{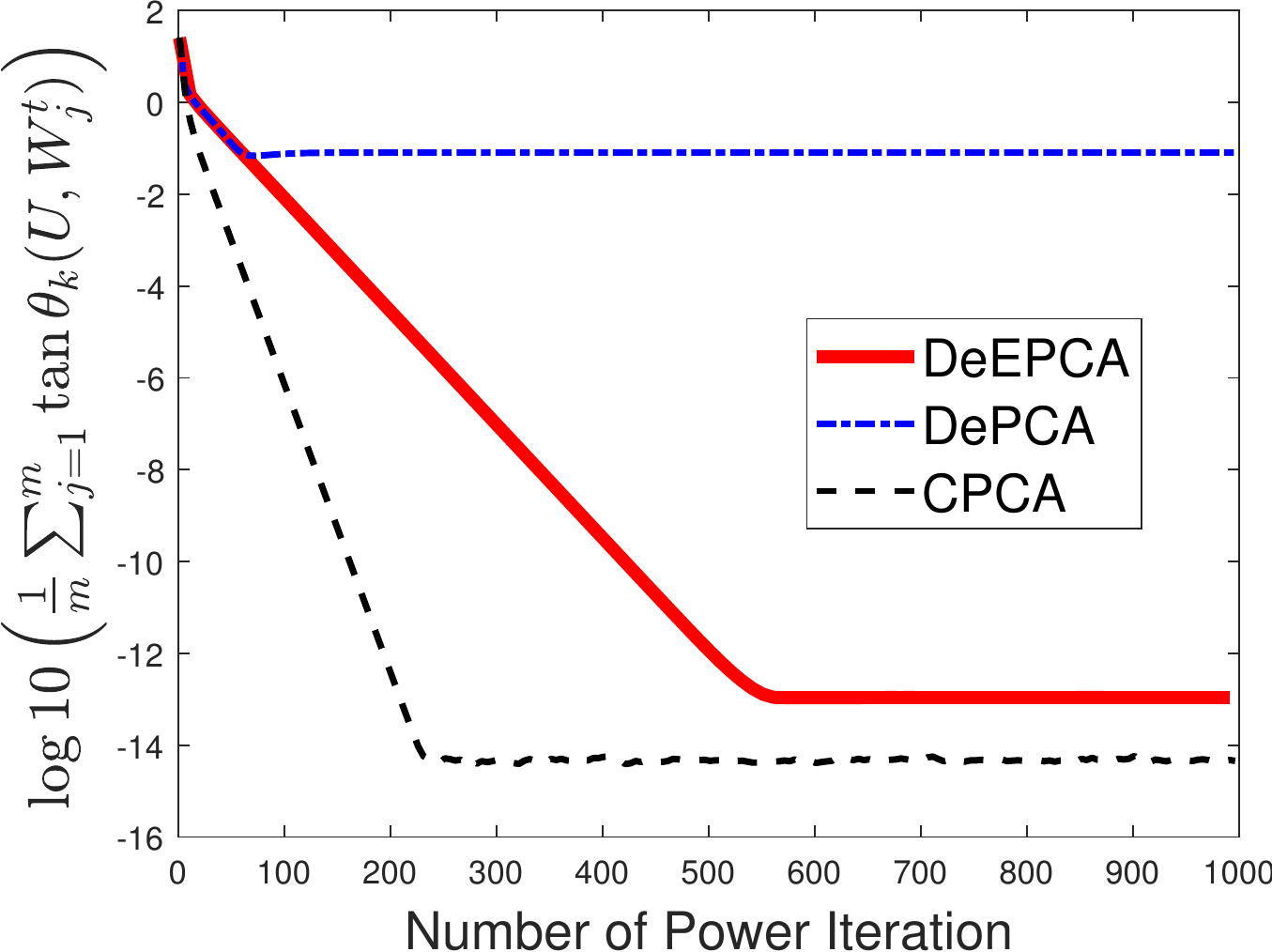}}
	\\
	\subfigure[$\norm{\Sb - \bbS\otimes \mathbf{1}}$ with $K = 10$]{\includegraphics[width=50mm]{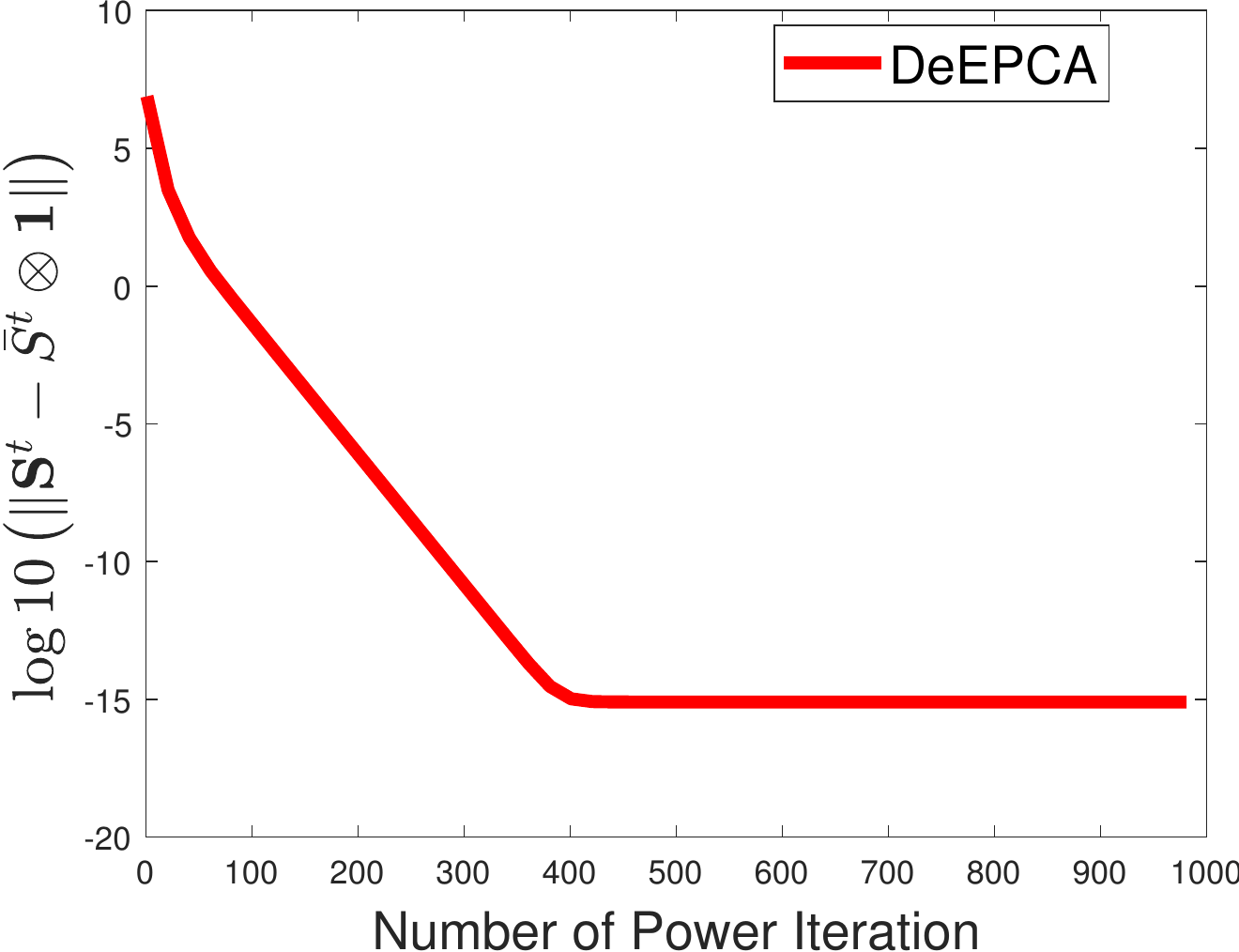}}~
	\subfigure[$\norm{\Wb - \bbW\otimes \mathbf{1}}$ with $K=10$]{\includegraphics[width=50mm]{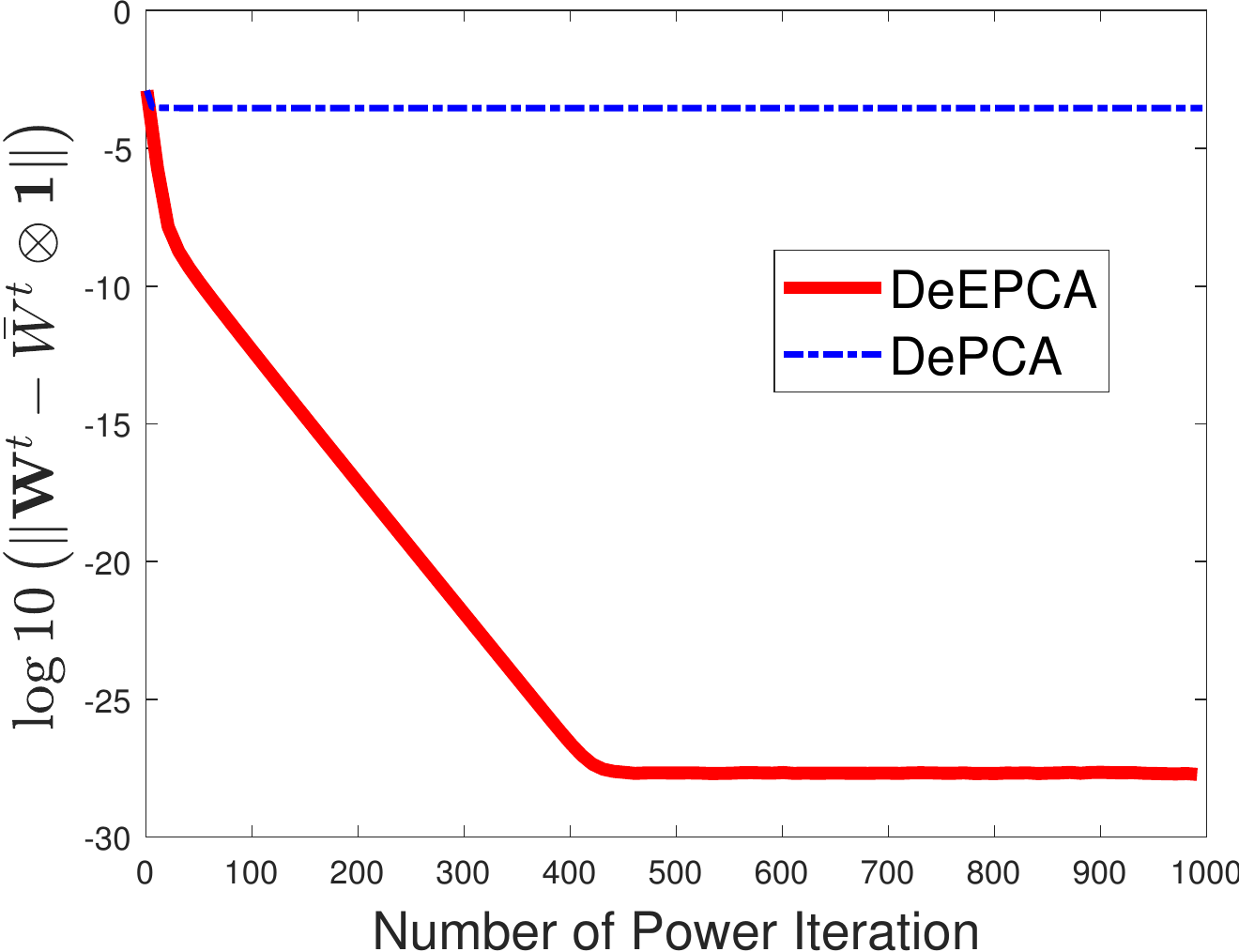}}~
	\subfigure[$\tan\theta_k(U, W)$ with $K=10$]{\includegraphics[width=50mm]{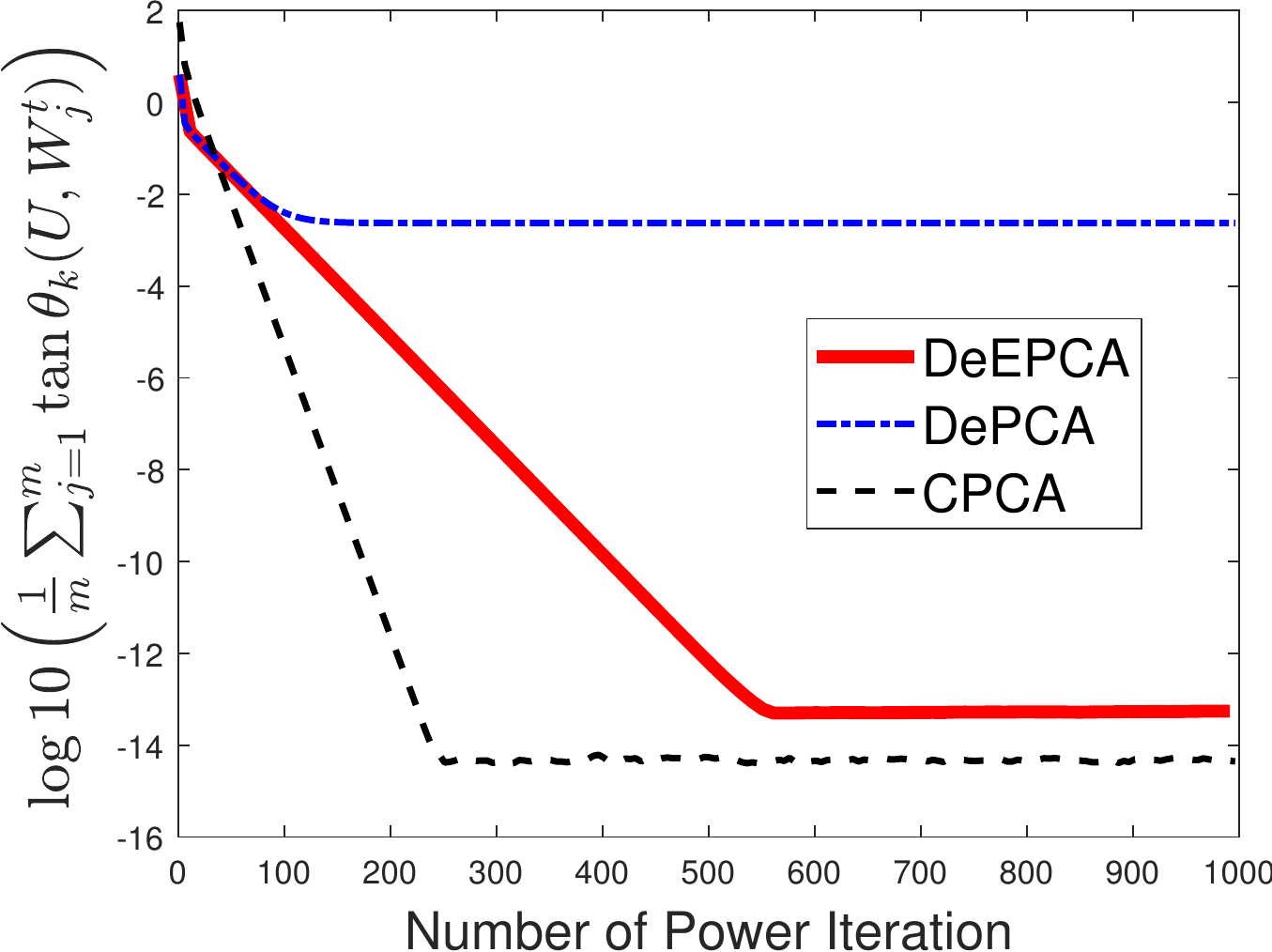}}
\end{center}
\vskip -0.1in
\caption{Experiment on `w8a'.}
\label{fig:w8a}
\end{figure*}

\section{Experiments}

In the previous sections, we presented a theoretical analysis of our algorithm. 
In this section, we will provide empirical studies.

\paragraph{Experiment Setting}
In our experiments, we consider random networks where each pair of agents
has a connection with a probability of $p = 0.5$. We set $\mathbf{L} = I - \frac{M}{\lambda_{\max}(M)}
$
where $M$ is the Laplacian matrix associated with a weighted graph. 
We set $m = 50$ ,
that is, there exists $50$ agents in this network. In our experiments, the gossip matrix $\mathbf{L}$ satisfies $1 - \lambda_2 (\mathbf{L}) = 0.4563$.

We conduct experiments on the datasets ‘w8a’ and ‘a9a’ which can be downloaded in libsvm datasets.
For ‘w8a’, we set $n = 800$ and $d = 300$. For ‘a9a’, we set $n = 600$ and $d = 123$.
For each agent, $A_j$ has the following form
\begin{align}
A = \frac{1}{m}\sum_{j=1}^m A_j, \mbox{ and }\;	A_j = \sum_{i=1}^n v_i v_i^\top, \mbox{ with }\; v_i = a_{(j-1)*n+i},
\end{align} 
where $a_{(j-1)*n+i}\in\RR^d$ is the $((j-1)*n+i)$-th input vector of the dataset.

\paragraph{Experiment Results}

\begin{figure*}[th!]
\subfigtopskip = 0pt
\begin{center}
	\centering
	\subfigure[$\norm{\Sb - \bbS\otimes \mathbf{1}}$ with $K = 1$]{\includegraphics[width=50mm]{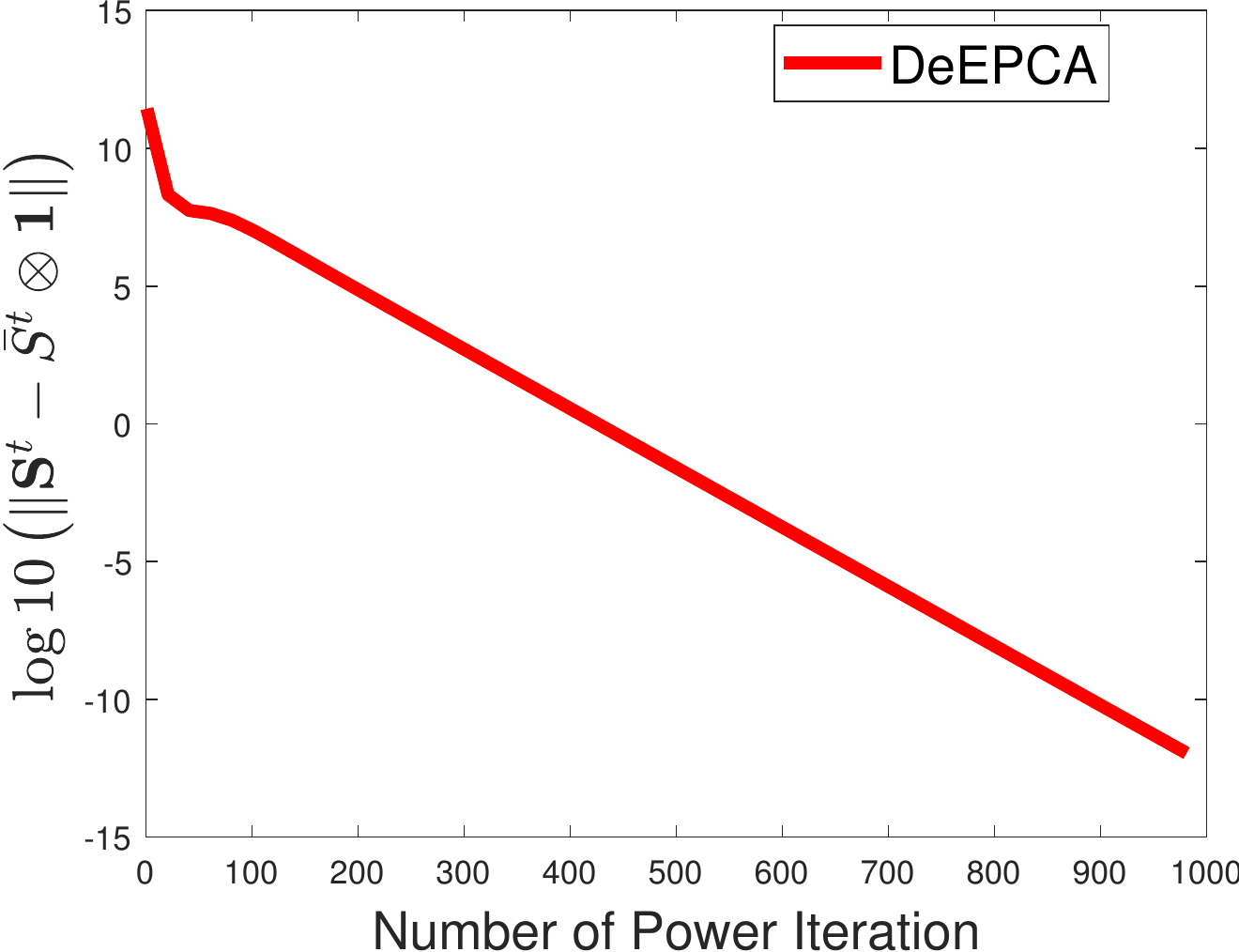}}~
	\subfigure[$\norm{\Wb - \bbW\otimes \mathbf{1}}$ with $K=1$]{\includegraphics[width=50mm]{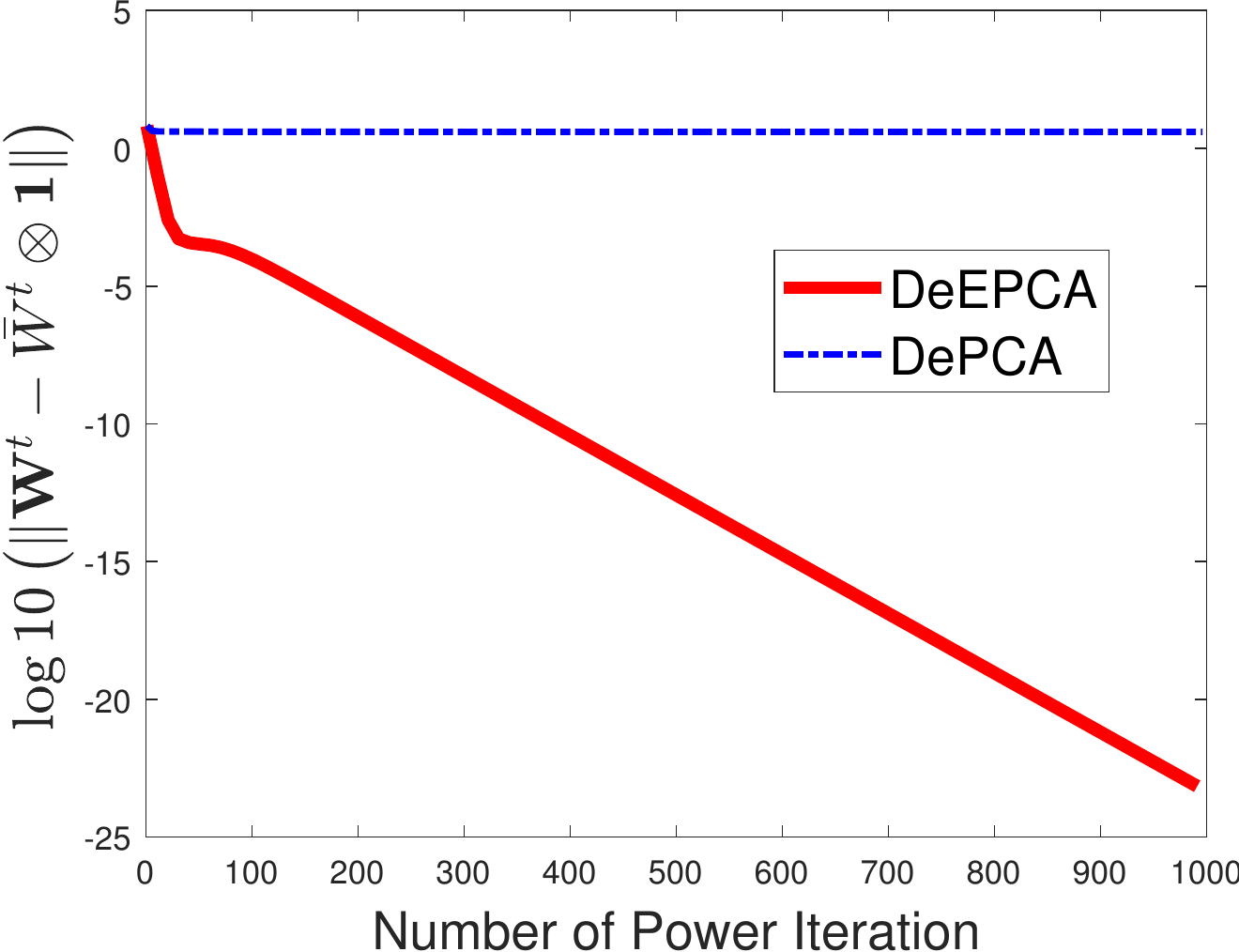}}~
	\subfigure[$\tan\theta_k(U, W)$ with $K=1$]{\includegraphics[width=50mm]{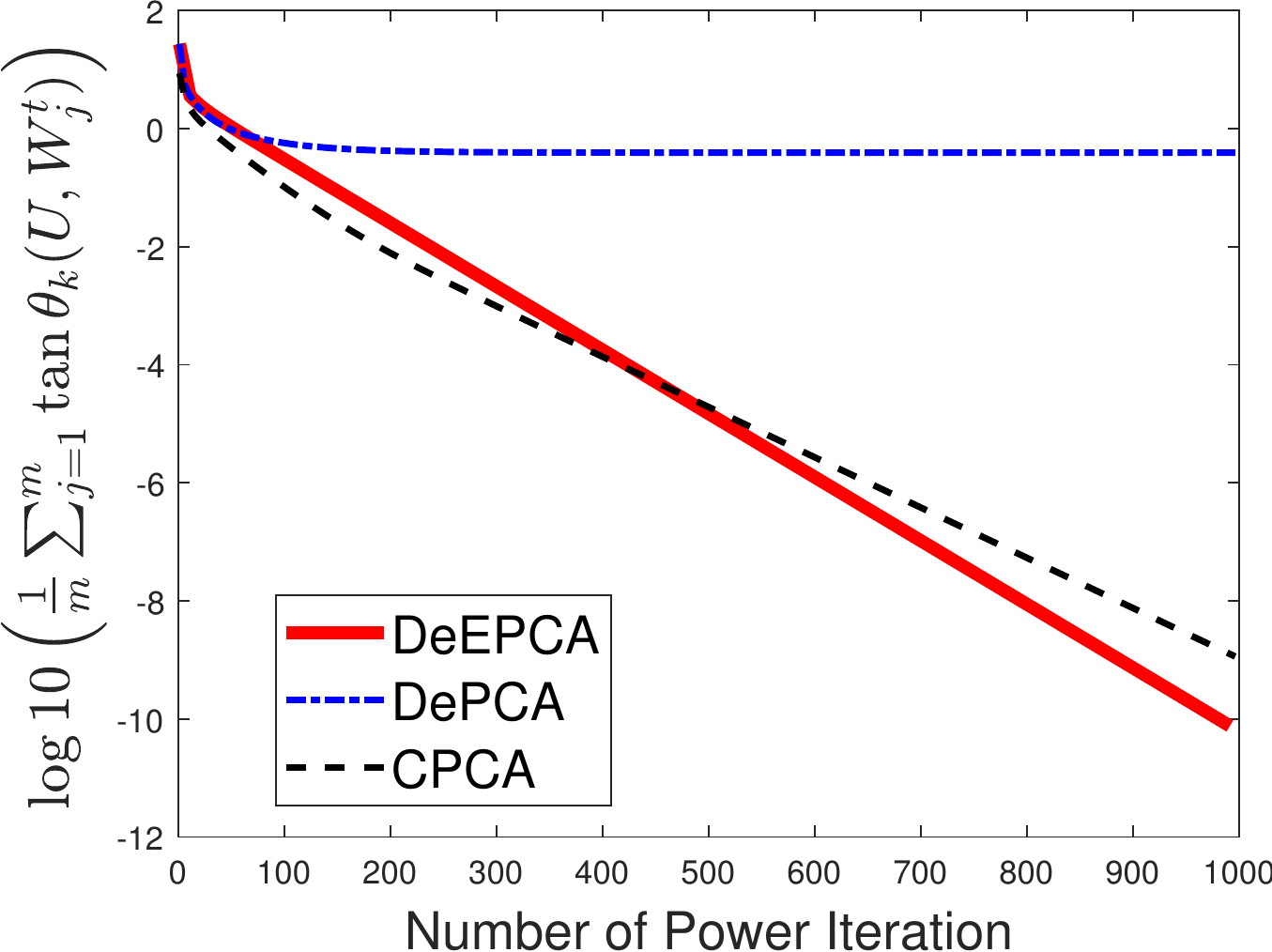}}
	\\
	\subfigure[$\norm{\Sb - \bbS\otimes \mathbf{1}}$ with $K = 5$]{\includegraphics[width=50mm]{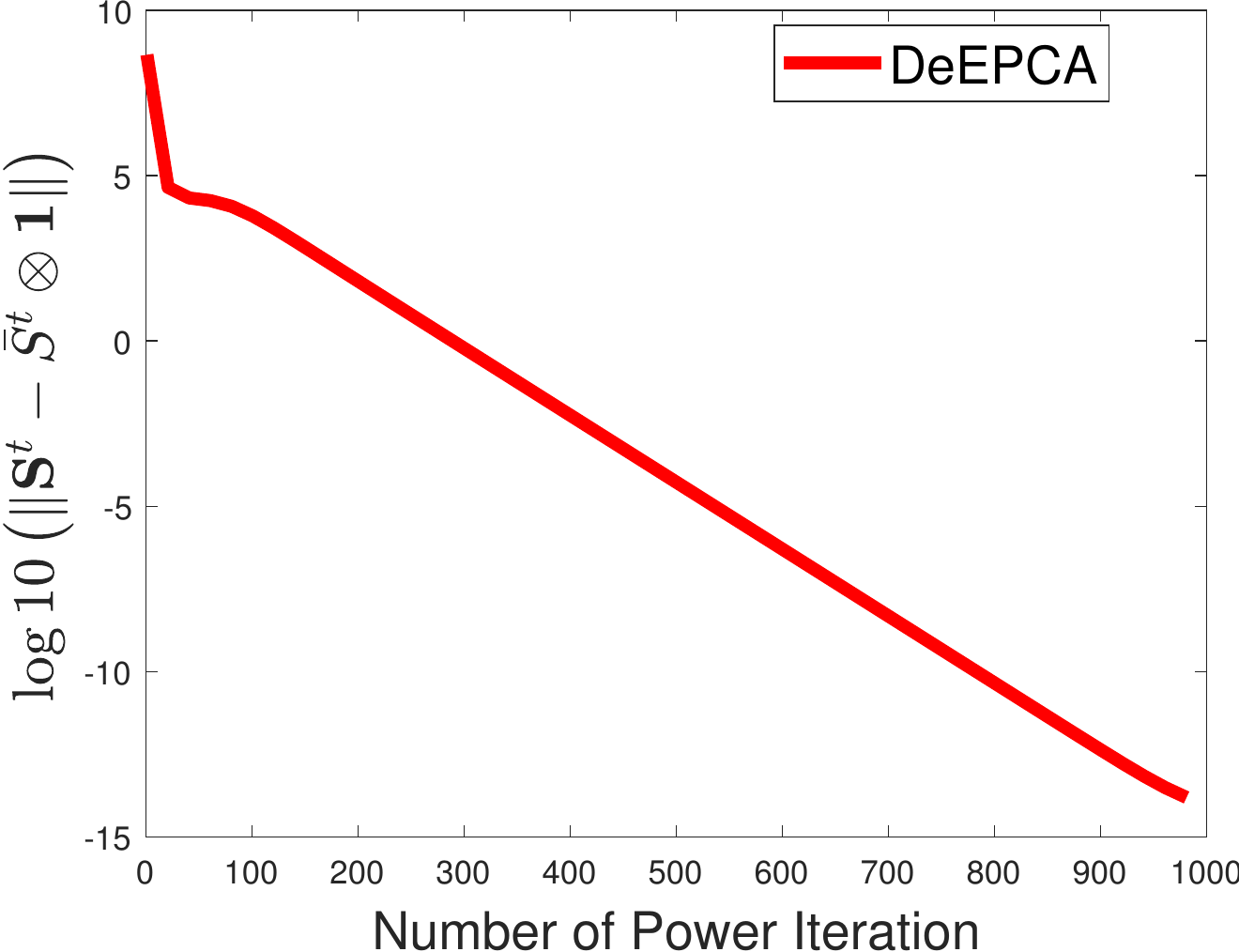}}~
	\subfigure[$\norm{\Wb - \bbW\otimes \mathbf{1}}$ with $K=5$]{\includegraphics[width=50mm]{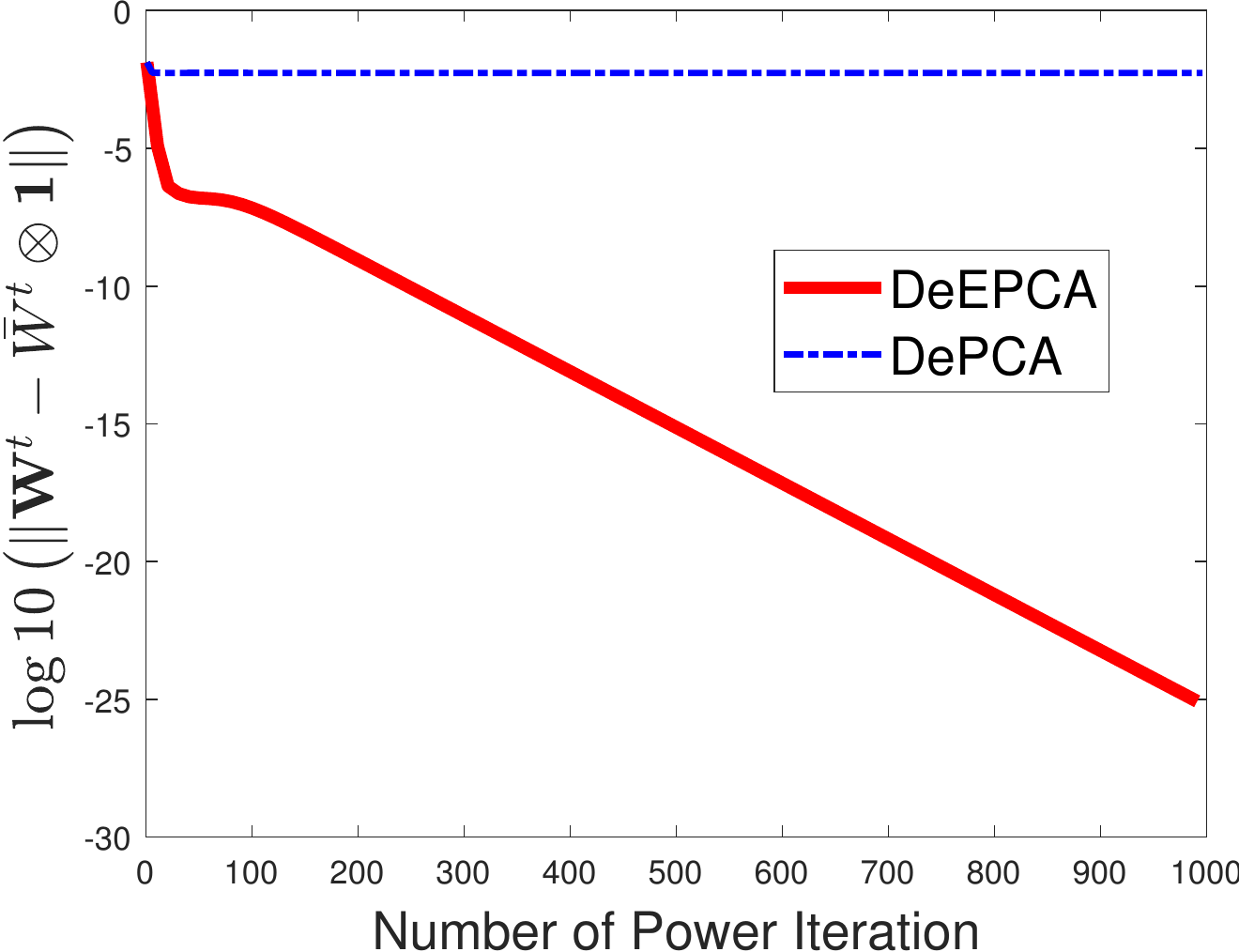}}~
	\subfigure[$\tan\theta_k(U, W)$ with $K=5$]{\includegraphics[width=50mm]{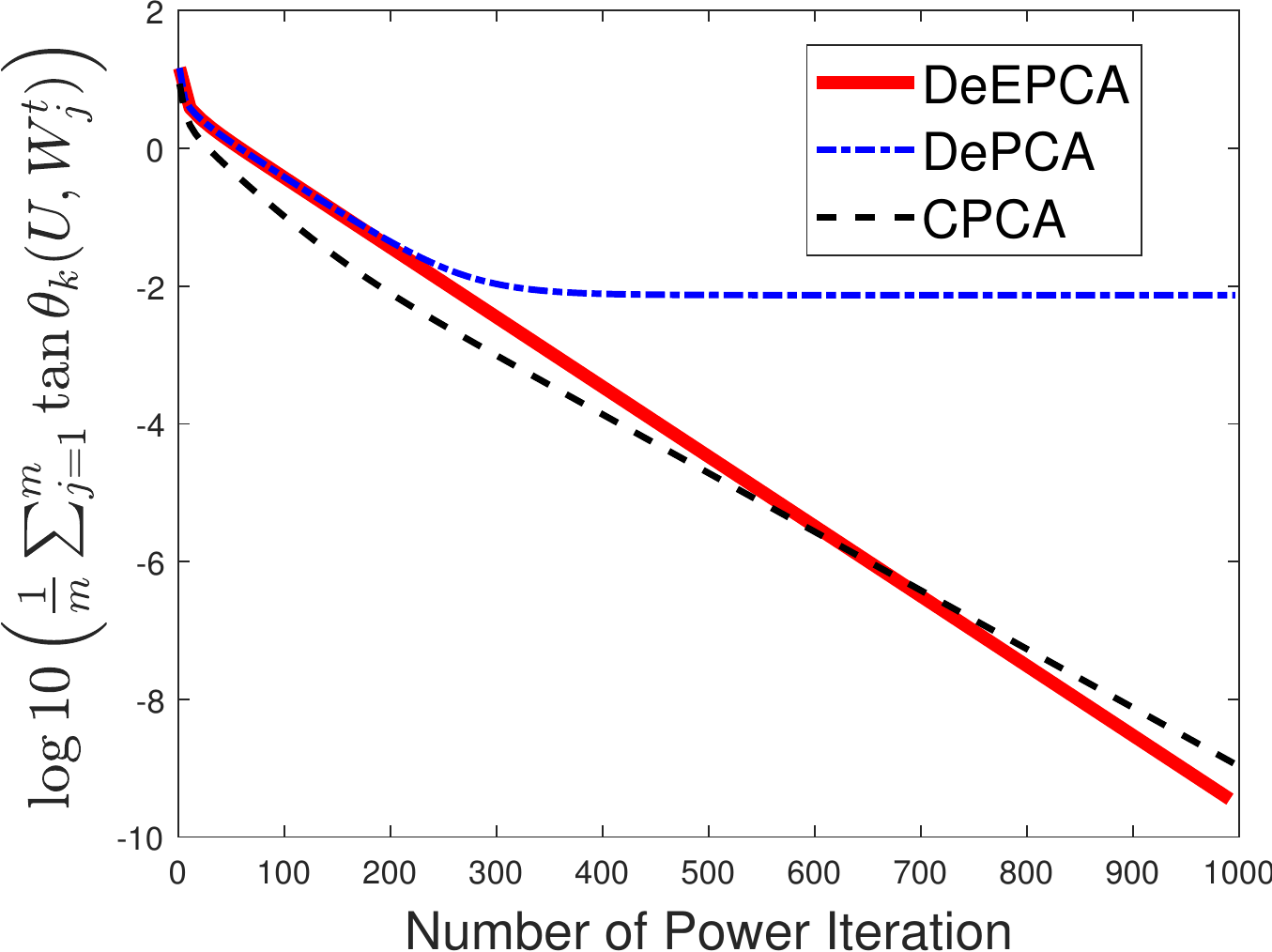}}
	\\
	\subfigure[$\norm{\Sb - \bbS\otimes \mathbf{1}}$ with $K = 10$]{\includegraphics[width=50mm]{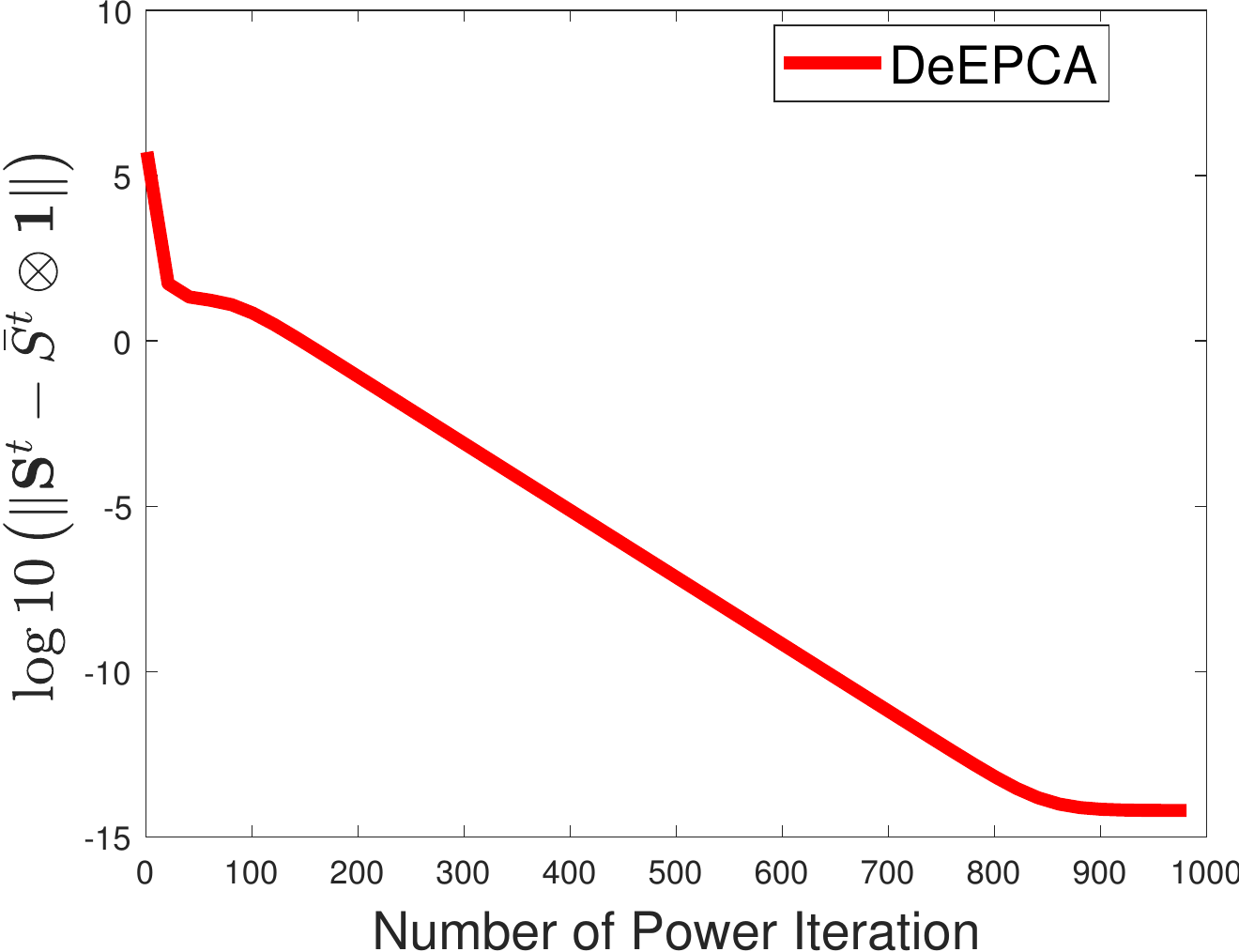}}~
	\subfigure[$\norm{\Wb - \bbW\otimes \mathbf{1}}$ with $K=10$]{\includegraphics[width=50mm]{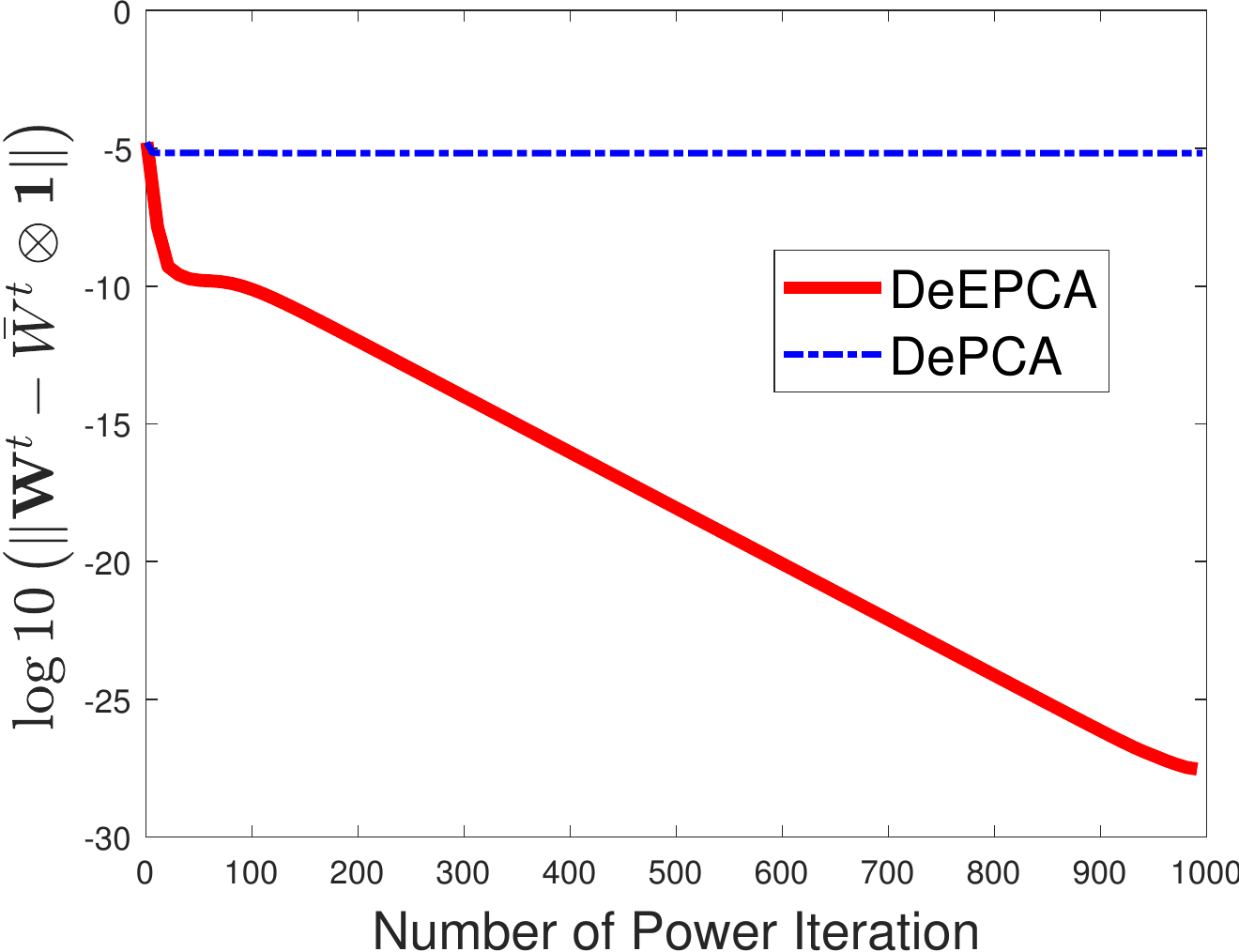}}~
	\subfigure[$\tan\theta_k(U, W)$ with $K=10$]{\includegraphics[width=50mm]{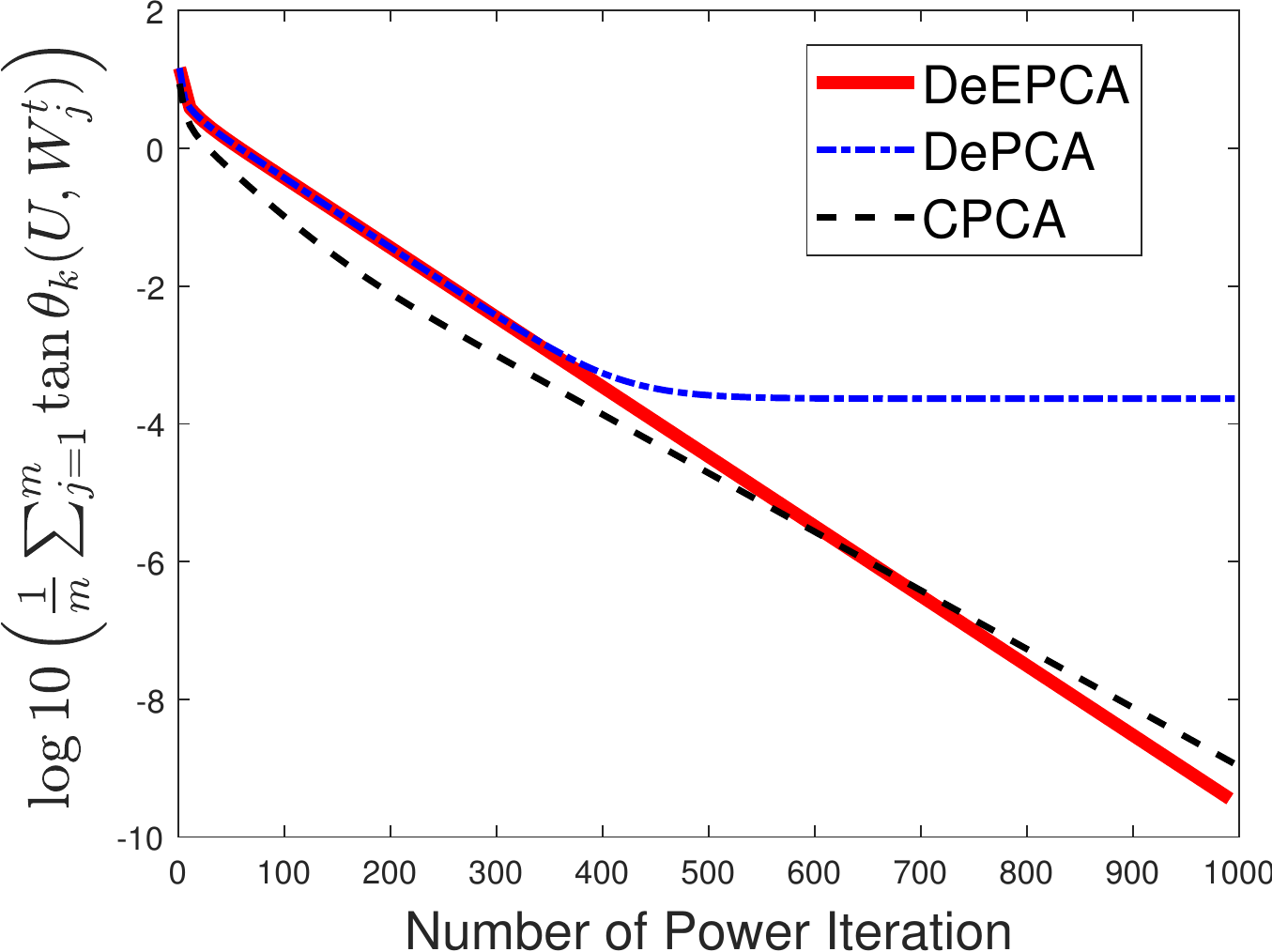}}
\end{center}
\vskip -0.1in
\caption{Experiment on `a9a'.}
\label{fig:a9a}
\vskip -0.2in
\end{figure*}

In our experiments, we compare \texttt{DeEPCA} with decentralized PCA (\texttt{DePCA}) \citep{wai2017fast}, and centralized PCA (\texttt{CPCA}).
We will study how consensus steps affect the convergence rate of \texttt{DeEPCA} empirically. 
Thus, we set different $K$'s in our experiment,. 
We will report the convergence rate of $\norm{\Sb^t - \bbS^t\otimes \mathbf{1}}$, $\norm{\Wb^t - \bbW^t\otimes\mathbf{1}}$ and $\frac{1}{m}\sum_{j=1}^m\tan\theta_k(U,W_j^t)$.
We report experiment results in Figure~\ref{fig:w8a} and Figure~\ref{fig:a9a}.

Figure~\ref{fig:w8a} shows that multi-consensus step is required in our \texttt{DeEPCA}.
When $K = 3$, \texttt{DeEPCA} can not converge to the top-$k$ principal components of $A$. 
The number of consensus steps of \texttt{DeECPA} in each power iteration should be determined by the heterogeneity of the data just as discussed in Remark~\ref{rmk:main}.
Furthermore, once consensus steps of \texttt{DeECPA} are sufficient, then \texttt{DeEPCA} can achieve a fast convergence rate comparable to centralized PCA which can be observed from Figure~\ref{fig:w8a} and Figure~\ref{fig:a9a}.
This validates our convergence analysis of \texttt{DeEPCA} in Theorem~\ref{thm:main}. 

Figure~\ref{fig:w8a} and Figure~\ref{fig:a9a} show that without increasing consensus steps, \texttt{DePCA} can not converge to the top-$k$ principal components of $A$.
Because of lacking of subspace tracking, to achieve a high precision solution, \texttt{DePCA} can only depends on an increasing consensus steps which can be observed from third columns of Figure~\ref{fig:w8a} and Figure~\ref{fig:a9a}.
Comparing \texttt{DeEPCA} and \texttt{DePCA}, we can conclude that \texttt{DeEPCA} has great advantages in communication cost.

\section{Conclusion}

This paper proposed a novel decentralized PCA algorithm \texttt{DeEPCA} that can achieve a linear convergence rate similar to the centralized PCA method, and the number of communications per multi-consensus step does \emph{not} depend on the target precision $\epsilon$. 
In this way, \texttt{DeEPCA} can achieve the best known communication complexity for decentralized PCA. 
Our experiments also verifies the communication efficiency of \texttt{DeEPCA}.
Although the analysis of \texttt{DeEPCA} is based on undirected graph and `FastMix', it can be easily extended to handle directed graphs because our analysis of \texttt{DeEPCA} only requires averaging. As a final remark,
we note that \texttt{DeEPCA} employs the power method, which can be applied to eigenvector finding, low rank matrix approximation, spectral analysis, etc. 
Therefore \texttt{DeEPCA} can be used to design communication efficient decentralized algorithms for these problems as well.

\bibliography{ref.bib}

\begin{thebibliography}{}

\bibitem[Bertrand \& Moonen, 2014]{bertrand2014distributed}
Bertrand, A. \& Moonen, M. (2014).
\newblock Distributed adaptive estimation of covariance matrix eigenvectors in
  wireless sensor networks with application to distributed pca.
\newblock {\em Signal Processing}, 104, 120--135.

\bibitem[Bishop, 2006]{bishop2006pattern}
Bishop, C.~M. (2006).
\newblock {\em Pattern recognition and machine learning}.
\newblock springer.

\bibitem[Cadima et~al., 2004]{cadima2004computational}
Cadima, J., Cerdeira, J.~O., \& Minhoto, M. (2004).
\newblock Computational aspects of algorithms for variable selection in the
  context of principal components.
\newblock {\em Computational statistics \& data analysis}, 47(2), 225--236.

\bibitem[Defazio et~al., 2014]{defazio2014saga}
Defazio, A., Bach, F., \& Lacoste-Julien, S. (2014).
\newblock Saga: a fast incremental gradient method with support for
  non-strongly convex composite objectives.
\newblock In {\em Proceedings of the 27th International Conference on Neural
  Information Processing Systems-Volume 1}  (pp.\ 1646--1654).

\bibitem[Dhillon et~al., 2015]{dhillon2015eigenwords}
Dhillon, P.~S., Foster, D.~P., \& Ungar, L.~H. (2015).
\newblock Eigenwords: Spectral word embeddings.
\newblock {\em The Journal of Machine Learning Research}, 16(1), 3035--3078.

\bibitem[Ding \& He, 2004]{ding2004k}
Ding, C. \& He, X. (2004).
\newblock K-means clustering via principal component analysis.
\newblock In {\em Proceedings of the twenty-first international conference on
  Machine learning}  (pp.\~29).

\bibitem[Golub \& Van~Loan, 2012]{golub2012matrix}
Golub, G.~H. \& Van~Loan, C.~F. (2012).
\newblock {\em Matrix computations}, volume~3.
\newblock JHU Press.

\bibitem[Hardt \& Price, 2014]{hardt2014noisy}
Hardt, M. \& Price, E. (2014).
\newblock The noisy power method: A meta algorithm with applications.
\newblock {\em Advances in Neural Information Processing Systems}, 27,
  2861--2869.

\bibitem[Horn \& Johnson, 2012]{horn2012matrix}
Horn, R.~A. \& Johnson, C.~R. (2012).
\newblock {\em Matrix analysis}.
\newblock Cambridge university press.

\bibitem[Johnson \& Zhang, 2013]{johnson2013accelerating}
Johnson, R. \& Zhang, T. (2013).
\newblock Accelerating stochastic gradient descent using predictive variance
  reduction.
\newblock {\em Advances in neural information processing systems}, 26,
  315--323.

\bibitem[Kairouz et~al., 2019]{kairouz2019advances}
Kairouz, P., McMahan, H.~B., Avent, B., Bellet, A., Bennis, M., Bhagoji, A.~N.,
  Bonawitz, K., Charles, Z., Cormode, G., Cummings, R., et~al. (2019).
\newblock Advances and open problems in federated learning.
\newblock {\em arXiv preprint arXiv:1912.04977}.

\bibitem[Kempe \& McSherry, 2008]{kempe2008decentralized}
Kempe, D. \& McSherry, F. (2008).
\newblock A decentralized algorithm for spectral analysis.
\newblock {\em Journal of Computer and System Sciences}, 74(1), 70--83.

\bibitem[Lee et~al., 2010]{lee2010super}
Lee, D., Lee, W., Lee, Y., \& Pawitan, Y. (2010).
\newblock Super-sparse principal component analyses for high-throughput genomic
  data.
\newblock {\em BMC bioinformatics}, 11(1), 296.

\bibitem[Lian et~al., 2017]{lian2017can}
Lian, X., Zhang, C., Zhang, H., Hsieh, C.-J., Zhang, W., \& Liu, J. (2017).
\newblock Can decentralized algorithms outperform centralized algorithms? a
  case study for decentralized parallel stochastic gradient descent.
\newblock In {\em Advances in Neural Information Processing Systems}  (pp.\
  5330--5340).

\bibitem[Liu \& Morse, 2011]{liu2011accelerated}
Liu, J. \& Morse, A.~S. (2011).
\newblock Accelerated linear iterations for distributed averaging.
\newblock {\em Annual Reviews in Control}, 35(2), 160--165.

\bibitem[Moon \& Phillips, 2001]{moon2001computational}
Moon, H. \& Phillips, P.~J. (2001).
\newblock Computational and performance aspects of pca-based face-recognition
  algorithms.
\newblock {\em Perception}, 30(3), 303--321.

\bibitem[Nedic \& Ozdaglar, 2009]{nedic2009distributed}
Nedic, A. \& Ozdaglar, A. (2009).
\newblock Distributed subgradient methods for multi-agent optimization.
\newblock {\em IEEE Transactions on Automatic Control}, 54(1), 48--61.

\bibitem[Qu \& Li, 2017]{qu2017harnessing}
Qu, G. \& Li, N. (2017).
\newblock Harnessing smoothness to accelerate distributed optimization.
\newblock {\em IEEE Transactions on Control of Network Systems}, 5(3),
  1245--1260.

\bibitem[Qu et~al., 2002]{qu2002principal}
Qu, Y., Ostrouchov, G., Samatova, N., \& Geist, A. (2002).
\newblock Principal component analysis for dimension reduction in massive
  distributed data sets.
\newblock In {\em Proceedings of IEEE International Conference on Data Mining
  (ICDM)}, volume 1318  (pp.\ 1788).

\bibitem[Raja \& Bajwa, 2015]{raja2015cloud}
Raja, H. \& Bajwa, W.~U. (2015).
\newblock Cloud k-svd: A collaborative dictionary learning algorithm for big,
  distributed data.
\newblock {\em IEEE Transactions on Signal Processing}, 64(1), 173--188.

\bibitem[Scaglione et~al., 2008]{scaglione2008decentralized}
Scaglione, A., Pagliari, R., \& Krim, H. (2008).
\newblock The decentralized estimation of the sample covariance.
\newblock In {\em 2008 42nd Asilomar Conference on Signals, Systems and
  Computers}  (pp.\ 1722--1726).: IEEE.

\bibitem[Shi et~al., 2015]{shi2015extra}
Shi, W., Ling, Q., Wu, G., \& Yin, W. (2015).
\newblock Extra: An exact first-order algorithm for decentralized consensus
  optimization.
\newblock {\em SIAM Journal on Optimization}, 25(2), 944--966.

\bibitem[Stewart, 1977]{stewart1977perturbation}
Stewart, G. (1977).
\newblock Perturbation bounds for the qr factorization of a matrix.
\newblock {\em SIAM Journal on Numerical Analysis}, 14(3), 509--518.

\bibitem[Suleiman et~al., 2016]{suleiman2016performance}
Suleiman, W., Pesavento, M., \& Zoubir, A.~M. (2016).
\newblock Performance analysis of the decentralized eigendecomposition and
  esprit algorithm.
\newblock {\em IEEE Transactions on Signal Processing}, 64(9), 2375--2386.

\bibitem[Wai et~al., 2017]{wai2017fast}
Wai, H.-T., Scaglione, A., Lafond, J., \& Moulines, E. (2017).
\newblock Fast and privacy preserving distributed low-rank regression.
\newblock In {\em 2017 IEEE International Conference on Acoustics, Speech and
  Signal Processing (ICASSP)}  (pp.\ 4451--4455).: IEEE.

\bibitem[Wu et~al., 2018]{wu2018review}
Wu, S.~X., Wai, H.-T., Li, L., \& Scaglione, A. (2018).
\newblock A review of distributed algorithms for principal component analysis.
\newblock {\em Proceedings of the IEEE}, 106(8), 1321--1340.

\bibitem[Xiao \& Boyd, 2004]{xiao2004fast}
Xiao, L. \& Boyd, S. (2004).
\newblock Fast linear iterations for distributed averaging.
\newblock {\em Systems \& Control Letters}, 53(1), 65--78.

\bibitem[Ye et~al., 2020]{ye2020multi}
Ye, H., Luo, L., Zhou, Z., \& Zhang, T. (2020).
\newblock Multi-consensus decentralized accelerated gradient descent.
\newblock {\em arXiv preprint arXiv:2005.00797}.

\bibitem[Yuan et~al., 2016]{yuan2016convergence}
Yuan, K., Ling, Q., \& Yin, W. (2016).
\newblock On the convergence of decentralized gradient descent.
\newblock {\em SIAM Journal on Optimization}, 26(3), 1835--1854.

\end{thebibliography}
\bibliographystyle{apalike2}

\pagebreak
\appendix

\section{Proof of Lemmas in Section~\ref{subsec:main_lem}}

We will prove our lemmas in the order of their appearance.

\subsection{Proof of Lemma~\ref{lem:s_g}}
\begin{proof}[Proof of Lemma~\ref{lem:s_g}]
First, because the operation `FastMix' is linear, we can obtain that
\begin{align*}
	\bbS^{t+1} = \bbS^t + \bbG^{t+1} - \bbG^t.
\end{align*}
We prove the result by induction. When $t = 0$, it holds that $\bbS^0 = \bbG^0 = W^0$.
Supposing it holds that $\bbS^t = \bbG^t$, then we have
\begin{align*}
	\bbS^{t+1} 
	= 
	\bbS^t + \bbG^{t+1} - \bbG^t
	=\bbG^{t+1}.
\end{align*}
Thus, for each $t = 0,1,\dots$, it holds that $\bbS^t = \bbG^t$.
\end{proof}

\subsection{Proof of Lemma~\ref{lem:g_h}}

\begin{proof}[Proof of Lemma~\ref{lem:g_h}]
By the definition of $\bbG^t$ and $\bbH^t$ in Eqn.~\eqref{eq:def_bb}, we have
\begin{align*}
	\norm{\frac{1}{m} \sum_{j=1}^m A_j W_j^{t-1} - \frac{1}{m}\sum_{j=1}^m A_j \bbW^{t-1}}^2
	\le&
	\frac{1}{m} \sum_{j=1}^m \norm{A_j (W_j^{t-1} - \bbW^{t-1})}^2
	\\
	\le&
	\frac{1}{m} \sum_{j=1}^m \norm{A_j}_2^2\cdot \norm{W_j^{t-1} - \bbW^{t-1}}^2
	\\
	\le&
	\frac{L^2}{m}\sum_{j=1}^m \norm{W_j^{t-1} - \bbW^{t-1}}^2
	\\
	=&
	\frac{L^2}{m}\norm{\Wb^{t-1} -  \bbW^{t-1}\otimes \mathbf{1}}^2,
\end{align*}
where the last inequality is because of the assumption $\norm{A_j}_2 \le L$ for $j = 1,\dots, m$.
\end{proof}

\subsection{Proof of Lemma~\ref{lem:S_s}}
\begin{proof}[Proof of Lemma~\ref{lem:S_s}]
For notation convenience, we use $\TB(\Wb)$ to denote the `FastMix' operation on $\Wb$, which is used in Algorithm~\ref{alg:deepca}. That is, 
\begin{equation*}
	\TB(\Wb) \triangleq \mbox{FastMix}(\Wb, K).
\end{equation*}
Then for $\Wb$, it holds that
\begin{align}
	\label{eq:rho_dec}
	\norm{\TB(\Wb) - \bbW\otimes \mathbf{1}} 
	\le
	\rho \cdot \norm{\Wb - \bbW\otimes \mathbf{1}}.
\end{align}
It is obvious that the `FastMix' operation $\TB(\cdot)$ is linear.
By the update rule of $\Sb^t$, we have
\begin{align*}
	\norm{\Sb^{t+1} -  \bbS^{t+1}\otimes \mathbf{1}}
	\overset{\eqref{eq:SS}}{=}&\norm{\TB(\Sb^t + \Gb^{t+1} - \Gb^t) -  \left(\bbS^{t+1} + \bbG^{t+1} -\bbG^t\right)\otimes \mathbf{1}} 
	\\
	\overset{\eqref{eq:rho_dec}}{\le}&
	\rho \norm{\Sb^t - \bbS^t\otimes\mathbf{1}} + \rho \norm{\Gb^{t+1} - \Gb^t - (\bbG^{t+1} - \bbG^t)\otimes \mathbf{1}}
	\\
	\le&
	\rho \norm{\Sb^t - \bbS^t\otimes\mathbf{1}} + \rho \norm{\Gb^{t+1} - \Gb^t}
	\\
	=&
	\rho \norm{\Sb^t - \bbS^t\otimes\mathbf{1}} 
	+ \rho \sqrt{
		\sum_j^m\norm{A_j (W_j^t - W_j^{t-1})}^2
	}
	\\
	\le& \rho \norm{\Sb^t -  \bbS^t\otimes \mathbf{1}} + L\rho\norm{\Wb^t - \Wb^{t-1}}.
\end{align*}
where the second inequality is because the fact that for any $\Wb\in\RR^{d\times k\times m}$, it holds that
\begin{align*}
	\norm{\Wb - \bbW\otimes \mathbf{1}}^2 
	=& 
	\sum_{j=1}^m \norm{W_j - \frac{1}{m}\sum_{i=1}^m W_i}^2
	\\
	=&
	\sum_{j=1}^m \norm{W_j}^2 + \norm{\frac{1}{m}\sum_{i=1}^m W_i}^2 - 2\sum_{j=1}^m \dotprod{W_j, \frac{1}{m}\sum_{i=1}^m W_i}
	\\
	=&\sum_{j=1}^m \norm{W_j}^2 - \norm{\frac{1}{m}\sum_{i=1}^m W_i}^2
	\\
	\le&
	\sum_{j=1}^m \norm{W_j}^2
	\\=&
	\norm{\Wb}^2.
\end{align*}
The last inequality is because of
\begin{align*}
	\sum_j^m\norm{A_j (W_j^t - W_j^{t-1})}^2
	\le
	\sum_{j=1}^m \norm{A_j}_2^2\cdot \norm{W_j^t - W_j^{t-1}}^2
	\le
	L^2 \sum_{j=1}^m\norm{W_j^t - W_j^{t-1}}
	=
	L^2 \norm{\Wb^t - \Wb^{t-1}}^2.
\end{align*}
\end{proof}

\subsection{Proof of Lemma~\ref{lem:bbs_norm}}
\begin{proof}[Proof of Lemma~\ref{lem:bbs_norm}]
By the definition of $\sigma_{\min}(\bbS^{t+1})$ and Lemma~\ref{lem:s_g}, we can obtain
\begin{align*}
	\sigma_{\min}(\bbS^{t+1})
	=&
	\sigma_{\min}(\bbG^{t+1})
	\ge
	\sigma_{\min}(\bbH^{t+1}) 
	- 
	\norm{\bbH^{t+1} - \bbG^{t+1}}
	\\
	=&
	\sigma_{\min}(A\bbW^t) 
	- 
	\norm{\bbH^{t+1} - \bbG^{t+1}}
	\\
	\ge&
	\sigma_{\min}(A\tW^t) 
	-
	\norm{A(\tW^t - \bbW^t)}
	- 
	\norm{\bbH^{t+1} - \bbG^{t+1}}
	\\
	\ge&
	\sigma_{\min}(A\tW^t)
	-
	L\norm{\tW^t - \bbW^t}
	-
	\norm{\bbH^{t+1} - \bbG^{t+1}}
	\\
	\overset{\eqref{eq:g_h},\eqref{eq:W_w},\eqref{eq:w_s}}{\ge}&
	\sigma_{\min}(A\tW^t)
	-
	\frac{24L}{\sqrt{m}} \norm{\left[\bbS^t\right]^\dagger}\norm{\Sb^t -  \bbS^t\otimes \mathbf{1}}.
\end{align*}
Furthermore, we have
\begin{align*}
	\sigma_{\min}(A\tW^t)
	=&
	\sigma_{\min}\left(
	\begin{bmatrix}
		\Sigma_k U^\top \tW^t\\
		\Sigma_{\setminus k} V^\top \tW^t
	\end{bmatrix}
	\right)
	\ge
	\sigma_{\min}\left(
	\begin{bmatrix}
		\Sigma_k U^\top \tW^t\\
		\mathbf{0}
	\end{bmatrix}
	\right)
	\\
	\ge&
	\lambda_k \cdot \sigma_{\min}(U^\top \tW^t)
	\overset{\eqref{eq:theta_def_1}}{=}
	\lambda_k\cdot \cos\theta_k(U, \tW^t)
	\\
	=&
	\lambda_k\cdot \frac{1}{\sqrt{1+ \mL^2(\bbS^t)}},
\end{align*}
where the first inequality is because of Corollary 7.3.6 of \cite{horn2012matrix} and matrix $\Sigma_k U^\top \tW^t$ is non-singular.

Therefore, we can obtain
\begin{align*}
	\sigma_{\min}(\bbS^{t+1}) 
	\ge
	\lambda_k\cdot \frac{1}{\sqrt{1+ \mL^2(\bbS^t)}}
	-
	\frac{24L}{\sqrt{m}} \norm{\left[\bbS^t\right]^\dagger}\norm{\Sb^t -  \bbS^t\otimes \mathbf{1}}.
\end{align*}
\end{proof}

\subsection{Proof of Lemma~\ref{lem:W_w}}

First, we give a important lemma that will be used in our proof.
\begin{lemma}[Theorem 3.1 of \cite{stewart1977perturbation}]
Let $A=QR$, where $A\in\RR^{d\times k}$ has rank $k$ and $Q^\top Q = I$ with $I$ being the identity matrix. 
Let $E$ satisfy $\|A^\dagger\| \norm{E} <\frac{1}{2}$ where $A^\dagger$ is the pseudo inverse of $A$.
Moreover $A+E = (Q+\Delta_Q) (R + \Delta_R)$, where $Q+\Delta_Q$ has orthogonal columns.
Then it holds that
\begin{align}
	\label{eq:Delta_Q}
	\norm{\Delta_Q} \le \frac{3 \|A^\dagger\|\norm{E}}{1 - 2 \norm{A^\dagger} \norm{E}}.
\end{align}
\end{lemma}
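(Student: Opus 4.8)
The plan is to prove this classical QR perturbation bound from first principles by writing the perturbed orthonormal factor explicitly and then combining the orthonormality constraint with the triangular structure of the $R$-factors. Throughout I write $\eta = \norm{A^\dagger}\norm{E}$, so that $\eta < \tfrac12$, and I let $Q_\perp$ denote an orthonormal basis for the orthogonal complement of $\mathrm{range}(Q)$; I also assume both factorizations use the same normalization (e.g.\ positive diagonal of $R$ and $\tilde R$), which is what removes the sign/rotation ambiguity and forces $\Delta_Q\to 0$ as $E\to 0$. A preliminary step checks that the perturbed factorization is well defined: since $A=QR$ with $Q^\top Q=I$ gives $\sigma_{\min}(A)=\sigma_{\min}(R)\ge 1/\norm{A^\dagger}$, Weyl's inequality yields $\sigma_{\min}(A+E)\ge \sigma_{\min}(A)-\norm{E}_2\ge (1-\eta)/\norm{A^\dagger}>0$. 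Hence $A+E$ has full column rank $k$ and admits $A+E=\tilde Q\tilde R=(Q+\Delta_Q)(R+\Delta_R)$ with $\tilde R$ invertible and
\[
\norm{\tilde R^{-1}}_2 = 1/\sigma_{\min}(A+E) \le \norm{A^\dagger}/(1-\eta).
\]

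The first main step is to derive an explicit formula for $\Delta_Q$. From $\tilde Q=(A+E)\tilde R^{-1}$ and $A=QR$ I obtain
\[
\Delta_Q = \tilde Q - Q = Q\bigl(R\tilde R^{-1}-I\bigr) + E\tilde R^{-1} = QF + G,
\]
where $F := R\tilde R^{-1}-I$ and $G := E\tilde R^{-1}$. The matrix $F$ is upper triangular, since products and inverses of upper triangular matrices are upper triangular; this structural fact is what ultimately pins down the in-subspace rotation. The ``easy'' term is bounded immediately by $\norm{G}\le \norm{E}\,\norm{\tilde R^{-1}}_2 \le \eta/(1-\eta)$ (using $\norm{XY}\le\norm{X}\norm{Y}_2$ and $\norm{E}_2\le\norm{E}$).

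The crux is the second step, controlling the component of $\Delta_Q$ inside $\mathrm{range}(Q)$. Writing $\Delta_Q = QH + Q_\perp N$ with $H=Q^\top\Delta_Q = F + Q^\top G$ and $N=Q_\perp^\top\Delta_Q = Q_\perp^\top G$, the out-of-subspace part is bounded directly by $\norm{N}\le\norm{G}$. For $H$, I expand the orthonormality identity $\tilde Q^\top\tilde Q=I$, i.e.\ $Q^\top\Delta_Q+\Delta_Q^\top Q+\Delta_Q^\top\Delta_Q=0$, which reads $H+H^\top+H^\top H+N^\top N=0$; this bounds the symmetric part of $H$ quadratically, $\norm{\mathrm{sym}(H)}\le \tfrac12(\norm{H}^2+\norm{G}^2)$. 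The skew-symmetric part is where the triangular structure enters: because $F$ is upper triangular, the strictly lower-triangular part of $H$ equals that of $Q^\top G$, hence is bounded by $\norm{G}$. Since any square matrix is reconstructed from its symmetric part together with its strictly lower-triangular part, combining the quadratic bound on $\mathrm{sym}(H)$ with this direct bound on the strictly-lower part yields a self-referential inequality of the form $\norm{\Delta_Q}\le c_1\norm{G}+c_2\norm{\Delta_Q}^2$.

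The final step substitutes $\norm{G}\le \eta/(1-\eta)$ and solves the resulting scalar quadratic inequality; using $\eta<\tfrac12$ to select the admissible (smaller) root, the arithmetic collapses to the stated form $\norm{\Delta_Q}\le 3\eta/(1-2\eta) = 3\norm{A^\dagger}\norm{E}/(1-2\norm{A^\dagger}\norm{E})$, which is exactly \eqref{eq:Delta_Q}. I expect the main obstacle to be precisely the second step: bounding the part of $\Delta_Q$ orthogonal to $\mathrm{range}(Q)$ is routine, whereas the genuine content of Stewart's theorem is that the triangular normalization eliminates the rotational freedom of the QR factor, so that the quadratic orthonormality constraint can be closed into a solvable inequality with the clean constants above rather than leaving the in-subspace rotation uncontrolled.
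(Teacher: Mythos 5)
The paper itself contains no proof of this lemma: it is imported verbatim as Theorem~3.1 of Stewart (1977) and used as a black box, so your attempt can only be judged on its own terms. Its skeleton is a legitimate classical strategy: write $\tilde{Q} = Q + \Delta_Q$, $\tilde{R} = R + \Delta_R$, decompose $\Delta_Q = QF + G$ with $F = R\tilde{R}^{-1} - I$ upper triangular and $G = E\tilde{R}^{-1}$, extract the symmetric part of $H = Q^\top \Delta_Q$ quadratically from $\tilde{Q}^\top\tilde{Q} = I$ and its strictly lower-triangular part linearly from the triangularity of $F$, and reassemble $H$ from these two pieces. Your preliminary estimates are also correct: with $\eta = \norm{A^\dagger}\norm{E}$, Weyl gives $\sigma_{\min}(A+E) \ge (1-\eta)/\norm{A^\dagger}$, hence $\norm{\tilde{R}^{-1}}_2 \le \norm{A^\dagger}/(1-\eta)$ and $\norm{G} \le \eta/(1-\eta)$, and the bound $\norm{\mathrm{low}(H)} \le \norm{Q^\top G} \le \norm{G}$ is valid for the Frobenius norm used here (it would fail for the spectral norm, so this choice matters).

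The genuine gap is the final step, which you assert rather than carry out, and which with the constants your sketch actually produces does not cover the stated hypothesis range. Making everything explicit: $2\,\mathrm{sym}(H) = -\Delta_Q^\top\Delta_Q$ gives $\norm{\mathrm{sym}(H)} \le \tfrac{1}{2}\norm{\Delta_Q}^2$, and the reconstruction inequality is $\norm{H} \le 2\norm{\mathrm{sym}(H)} + \sqrt{2}\,\norm{\mathrm{low}(H)}$, so with $x = \norm{\Delta_Q}$ and $g = \norm{G}$ you get at best $x \le \norm{H} + \norm{N} \le x^2 + (1+\sqrt{2})\,g$. This self-referential inequality is \emph{vacuous} (satisfied by every $x \ge 0$) as soon as the discriminant condition $4(1+\sqrt{2})g \le 1$ fails, i.e.\ once $g \gtrsim 0.10$, equivalently $\eta \gtrsim 0.094$ --- far short of the hypothesis $\eta < \tfrac{1}{2}$, where the claimed bound $3\eta/(1-2\eta)$ is still a nontrivial assertion (the trivial bound is $\norm{\Delta_Q} \le 2\sqrt{k}$). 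Even where the discriminant is positive, the small root $\tfrac{1}{2}\bigl(1 - \sqrt{1 - 4(1+\sqrt{2})g}\,\bigr)$ exceeds $3\eta/(1-2\eta)$ already around $\eta \approx 0.08$, so the arithmetic does not ``collapse to the stated form.'' Separately, selecting the small root is itself unjustified as written: a priori $x$ could sit at or above the large root, and the standard repair is a homotopy $t \mapsto A + tE$, $t\in[0,1]$, noting $\sigma_{\min}(A+tE) \ge (1-\eta)/\norm{A^\dagger} > 0$ throughout so that the positive-diagonal-normalized QR factor, and hence $x(t)$, is continuous with $x(0)=0$ and cannot jump across the forbidden middle interval. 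To actually reach \eqref{eq:Delta_Q} on the full range $\eta < \tfrac12$ you would need substantially sharper bookkeeping (e.g.\ exploiting $x^2 = \norm{H}^2 + \norm{N}^2$ rather than $x \le \norm{H}+\norm{N}$), or Stewart's own route, which first controls $\Delta_R$ through the perturbation of the Cholesky factorization $(A+E)^\top(A+E) = \tilde{R}^\top\tilde{R}$ and then recovers $\Delta_Q$ from the exact identity $\Delta_Q = (E - Q\Delta_R)\tilde{R}^{-1}$.
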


\begin{proof}[Proof of Lemma~\ref{lem:W_w}]
For notation convenience, we will omit the superscript.
Let $S_j = W_j R_j$ and $\bbS = \tW \tR$ be the QR decomposition of $S_j$ and $\bbS$, respectively .
Then we have
\begin{align*}
	&\norm{\Wb -  \bbW\otimes\mathbf{1}}^2
	\\
	=&
	\sum_{j=1}^m \norm{W_j - \frac{1}{m}\sum_{i=1}^m W_i}^2
	\le
	2 \sum_{j=1}^m \norm{W_j - \tW}^2 + 2 m\norm{\tW - \frac{1}{m}\sum_{i=1}^m W_i}^2
	\\
	\le&
	4 \sum_{j=1}^m \norm{W_j - \tW}^2
	\\
	\overset{\eqref{eq:Delta_Q}}{\le}&
	4 \sum_{j=1}^m \left(\frac{3\norm{\bbS^{\dagger}} \norm{\bbS - S_j}}{ 1- 2\norm{\bbS^{\dagger}} \norm{\bbS - S_j}}\right)^2
	\\
	\le&
	(12)^2 \cdot \norm{\bbS^\dagger}^2 \norm{\Sb - \bbS\otimes \mathbf{1}}^2,
\end{align*}
where the last inequality is because of the assumption $\norm{\bbS^\dagger} \norm{\bbS - S_j} \le \frac{1}{4}$.
Hence, we can obtain that
\begin{align*}
	\norm{\Wb -  \bbW\otimes\mathbf{1}}
	\le
	12 \norm{\bbS^\dagger} \norm{\Sb - \bbS\otimes \mathbf{1}}.
\end{align*}
\end{proof}

\subsection{Proof of Lemma~\ref{lem:mL_dec}}

\begin{proof}[Proof of Lemma~\ref{lem:mL_dec}]
By the update rule of Algorithm, we can obtain that
\begin{align*}
	\mL(\bbS^{t+1})
	&=
	\tan\theta_k(U, \bbS^{t+1}) 
	\\
	=&\max_{\norm{w} = 1}
	\frac{
		\norm{V^\top \bbS^{t+1}w}}{
		\norm{U^\top \bbS^{t+1} w}
	}
	=\max_{\norm{w} = 1}
	\frac{
		\norm{V^\top \bbG^{t+1}w}}{
		\norm{U^\top \bbG^{t+1} w}
	}
	\\
	\le&
	\max_{\norm{w} = 1}
	\frac{
		\norm{V^\top \bbH^{t+1}w} + \norm{\bbG^{t+1} - \bbH^{t+1}}}{
		\norm{U^\top \bbH^{t+1} w} - \norm{\bbG^{t+1} - \bbH^{t+1}}
	}
	\\
	\overset{\eqref{eq:g_h}}{\le}&
	\max_{\norm{w} = 1}
	\frac{
		\norm{V^\top \bbH^{t+1}w} + \frac{L}{\sqrt{m}} \norm{\Wb^t - \bbW^t\otimes \mathbf{1}}}{
		\norm{U^\top \bbH^{t+1} w} - \frac{L}{\sqrt{m}} \norm{\Wb^t - \bbW^t\otimes \mathbf{1}}
	}
	\\
	=&
	\max_{\norm{w} = 1}
	\frac{
		\norm{V^\top A \bbW^t w} + \frac{L}{\sqrt{m}} \norm{\Wb^t - \bbW^t\otimes \mathbf{1}}}{
		\norm{U^\top A \bbW^t w} - \frac{L}{\sqrt{m}} \norm{\Wb^t - \bbW^t\otimes \mathbf{1}}
	}
	\\
	\le&
	\max_{\norm{w} = 1}
	\frac{
		\lambda_{k+1}\norm{V^\top \bbW^t w} + \frac{L}{\sqrt{m}} \norm{\Wb^t - \bbW^t\otimes \mathbf{1}}}{
		\lambda_k\norm{U^\top \bbW^t w} - \frac{L}{\sqrt{m}} \norm{\Wb^t - \bbW^t\otimes \mathbf{1}}
	}
	\\
	\le&
	\max_{\norm{w} = 1}
	\frac{
		\lambda_{k+1}\norm{V^\top \tW^t w} + \lambda_{k+1} \norm{\tW^t - \bbW^t} + \frac{L}{\sqrt{m}} \norm{\Wb^t - \bbW^t\otimes \mathbf{1}}}{
		\lambda_k\norm{U^\top \tW^t w} - \lambda_k \norm{\tW^t - \bbW^t} - \frac{L}{\sqrt{m}} \norm{\Wb^t - \bbW^t\otimes \mathbf{1}}
	}
	\\
	\overset{\eqref{eq:w_s},\eqref{eq:W_w}}{\le}&
	\max_{\norm{w} = 1}
	\frac{
		\lambda_{k+1}\norm{V^\top \tW^t w} + \frac{12(\lambda_{k+1}+L)}{\sqrt{m}} \norm{\left[\bbS^t\right]^\dagger}\norm{\Sb^t -  \bbS^t\otimes \mathbf{1}}}{
		\lambda_k\norm{U^\top \tW^t w} - \frac{12(\lambda_k+L)}{\sqrt{m}} \norm{\left[\bbS^t\right]^\dagger}\norm{\Sb^t -  \bbS^t\otimes \mathbf{1}}
	}
	\\
	=&
	\max_{\norm{w} = 1}
	\frac{
		\lambda_{k+1}\norm{V^\top \tW^t w}/\norm{U^\top \tW^t w} + \frac{12(\lambda_{k+1}+L)}{\sqrt{m}} \norm{\left[\bbS^t\right]^\dagger}\norm{\Sb^t -  \bbS^t\otimes \mathbf{1}}/\norm{U^\top \tW^t w}}{
		\lambda_k - \frac{12(\lambda_k+L)}{\sqrt{m}} \norm{\left[\bbS^t\right]^\dagger}\norm{\Sb^t -  \bbS^t\otimes \mathbf{1}}/\norm{U^\top \tW^t w}
	}.
\end{align*}
Furthermore, we have
\begin{align*}
	\frac{1}{\norm{U^\top \tW^t w}}
	\le 
	\max_{\norm{w} = 1}\frac{1}{\norm{U^\top \tW^t w}} 
	=
	\frac{1}{\cos\theta_k(U, \tW^t)}.
\end{align*}
Thus, we can obtain that
\begin{align}
	\mL(\bbS^{t+1})
	\le&
	\max_{\norm{w} = 1}
	\frac{
		\lambda_{k+1}\norm{V^\top \tW^t w}/\norm{U^\top \tW^t w} 
		+ 
		\frac{12(\lambda_{k+1}+L)}{\sqrt{m}} \norm{\left[\bbS^t\right]^\dagger}\norm{\Sb^t -  \bbS^t\otimes \mathbf{1}}/\cos\theta_k(U, \tW^t)
	}{\lambda_k
		-
		\frac{12(\lambda_k+L)}{\sqrt{m}} \norm{\left[\bbS^t\right]^\dagger}\norm{\Sb^t -  \bbS^t\otimes \mathbf{1}}/\cos\theta_k(U, \tW^t)
	}\notag
	\\
	=&
	\frac{
		\lambda_{k+1} \mL(\bbS^t)
		+
		\frac{12(\lambda_{k+1}+L)}{\sqrt{m}} \norm{\left[\bbS^t\right]^\dagger}\norm{\Sb^t -  \bbS^t\otimes \mathbf{1}} \cdot \sqrt{1+\mL^2(\bbS^t)}
	}{\lambda_k - 
		\frac{12(\lambda_k+L)}{\sqrt{m}} \norm{\left[\bbS^t\right]^\dagger}\norm{\Sb^t -  \bbS^t\otimes \mathbf{1}} \cdot \sqrt{1+\mL^2(\bbS^t)}
	}, \label{eq:update}
\end{align}
where the last equality is because of the fact $1 + \tan^2\theta = \frac{1}{\cos^2\theta}$.

Now we will prove the result by induction. 
When $t= 0$, it holds that $S_j^0$'s are equal to each other, that is, $\norm{\Sb^0 - \bbS^0\otimes \mathbf{1}} = 0$.
Hence, we can obtain that
\begin{align*}
	\mL(\bbS^1) 
	\le
	\frac{\lambda_{k+1}}{\lambda_k} \mL(\bbS^0) 
	<\left(1 - \frac{\lambda_k - \lambda_{k+1}}{2\lambda_k}\right) \cdot \mL(\bbS^0).
\end{align*}
We assume that $\mL(\bbS^t) \le \gamma^t \cdot \mL(\bbS^0)$ and Eqn.~\eqref{eq:s_ass} hold.
Replacing the assumptions to Eqn.~\eqref{eq:update}, we can obtain that
\begin{align*}
	\mL(\bbS^{t+1}) \le \left(1 - \frac{\lambda_k - \lambda_{k+1}}{2\lambda_k}\right)^{t+1} \cdot\mL(\bbS^0) 
	= \gamma^{t+1} \cdot\mL(\bbS^0) .
\end{align*}
This concludes the proof.
\end{proof}

\subsection{Proof of Lemma~\ref{lem:W-W}}

\begin{proof}[Proof of Lemma~\ref{lem:W-W}]
First, by triangle inequality, we can obtain
\begin{align*}
	\norm{\Wb^t - \Wb^{t-1}}
	\le&
	\norm{\Wb^t - \bbW^t\otimes \mathbf{1}}
	+
	\norm{\Wb^{t-1} - \bbW^{t-1}\otimes \mathbf{1}}
	+
	\norm{\bbW^t\otimes \mathbf{1} - \bbW^{t-1}\otimes \mathbf{1}}
	\\
	\overset{\eqref{eq:W_w}}{\le}&
	12\left(
	\norm{\left[\bbS^t\right]^\dagger}\norm{\Sb^t -  \bbS^t\otimes \mathbf{1}}
	+
	\norm{\left[\bbS^{t-1}\right]^\dagger}\norm{\Sb^{t-1} -  \bbS^{t-1}\otimes \mathbf{1}}
	\right)
	+
	\sqrt{m}\norm{\bbW^t - \bbW^{t-1}}.
\end{align*}
Furthermore, we have
\begin{align*}
	\norm{\bbW^t - \bbW^{t-1}}
	\le&
	\norm{
		\bbW^t - U
	}
	+
	\norm{\bbW^{t-1} - U}
	\\
	\le&
	\norm{\tW^t - U} + \norm{\tW^t - \bbW^t}
	+
	\norm{\tW^{t-1} - U} + \norm{\tW^{t-1} - \bbW^{t-1}}
	\\
	\overset{\eqref{eq:w_s}}{\le}&
	\norm{\tW^t - U} + \norm{\tW^{t-1} - U}
	\\&
	+\frac{12}{\sqrt{m}} 
	\left(
	\norm{\left[\bbS^t\right]^\dagger}\norm{\Sb^t -  \bbS^t\otimes \mathbf{1}}
	+
	\norm{\left[\bbS^{t-1}\right]^\dagger}\norm{\Sb^{t-1} -  \bbS^{t-1}\otimes \mathbf{1}}
	\right).
\end{align*}

Now we begin to bound the value of $\norm{\tW^t - U}$. Note that due to sign adjustment in Eqn.~\eqref{eq:sa} in Algorithm~\ref{alg:deepca}, then $\Wb^t$ and $\Wb^{t-1}$ share the same direction, that is the dot product of columns of   $\Wb^t$ and $\Wb^{t-1}$ are positive. 
Thus, we can choose such $U$ that shares the same direction with   $\Wb^t$ and $\Wb^{t-1}$.
In this case, $\tW^t$ and $\tW^{t-1}$ can also  share the same direction with $U$.
Combining with  the definition of $\tW$ in Eqn.~\eqref{eq:def_bb}, we have
\begin{align*}
	\norm{\tW^t - U}^2 
	=&
	\norm{\tW^t}^2 + \norm{U}^2 - 2\dotprod{\tW^t, U} 
	\le 2k - 2k \cdot \sigma_{\min}(U^\top\tW^t)
	\\
	=&
	2k(1 - \cos\theta_k(\tW^t, U))
	=
	2k\left(1 - \frac{1}{\sqrt{1+\mL^2(\bbS^t)}}\right)
	\\
	=&2k \cdot \frac{ \sqrt{1+\mL^2(\bbS^t)} - 1}{\sqrt{1+\mL^2(\bbS^t)}}
	=
	2k\cdot  \frac{\mL^2(\bbS^t)}{\sqrt{1+\mL^2(\bbS^t)} (\sqrt{1+\mL^2(\bbS^t)}+1)}
	\\
	\le& k\cdot \mL^2(\bbS^t),
\end{align*}
where the first inequality is because of $U^\top(:,i) \tW^t(:,i) > 0$ and the inequality
\begin{align*}
	\dotprod{\tW^t, U}
	=
	\sum_{i=1}^k U^\top(:,i) \tW^t(:,i)
	\ge
	k \cdot \sigma_{\min}(U^\top \tW^t).
\end{align*}
Therefore, we can obtain that
\begin{align*}
	&\norm{\Wb^t - \Wb^{t-1}}
	\\
	\le&
	12\left(
	\norm{\left[\bbS^t\right]^\dagger}\norm{\Sb^t -  \bbS^t\otimes \mathbf{1}}
	+
	\norm{\left[\bbS^{t-1}\right]^\dagger}\norm{\Sb^{t-1} -  \bbS^{t-1}\otimes \mathbf{1}}
	\right)
	\\&
	+12\left(
	\norm{\left[\bbS^t\right]^\dagger}\norm{\Sb^t -  \bbS^t\otimes \mathbf{1}}
	+
	\norm{\left[\bbS^{t-1}\right]^\dagger}\norm{\Sb^{t-1} -  \bbS^{t-1}\otimes \mathbf{1}}
	\right)
	\\
	&+
	\sqrt{mk}\cdot \left(
	\mL(\bbS^t )
	+ 
	\mL(\bbS^{t-1})
	\right)
	\\
	=&
	24\left(
	\norm{\left[\bbS^t\right]^\dagger}\norm{\Sb^t -  \bbS^t\otimes \mathbf{1}}
	+
	\norm{\left[\bbS^{t-1}\right]^\dagger}\norm{\Sb^{t-1} -  \bbS^{t-1}\otimes \mathbf{1}}
	\right)
	+
	\sqrt{mk}\cdot \left(
	\mL(\bbS^t )
	+ 
	\mL(\bbS^{t-1})
	\right).
\end{align*}
\end{proof}

\end{document}